\theoremstyle{definition}
\newtheorem{theorem}{Theorem}[section]
\newtheorem{proposition}[theorem]{Proposition}
\theoremstyle{definition}
\theoremstyle{remark}
\useunder{\uline}{\ul}{}
\title{Adaptive Tokenization: On the Hop-Overpriority Problem in Tokenized Graph Learning Models}
\author{
  Zhibiao Wang\textsuperscript{1},
  Yunlong Zhou\textsuperscript{1},
  Ziwei Zhang\textsuperscript{1},
  Mengmei Zhang\textsuperscript{2},
  Shirui Pan\textsuperscript{3},\\
  \textbf{Chunming Hu\textsuperscript{1},}
  \textbf{Xiao Wang\textsuperscript{1}\thanks{Corresponding authors.}\hspace{2mm}}\\
  \textsuperscript{1} Beihang University  \qquad \textsuperscript{2} China Telecom Bestpay  \qquad \textsuperscript{3} Griffith University\\
  \texttt{\{wzb2321, yunlonghere, zwzhang, hucm, xiao\_wang\}@buaa.edu.cn},\\
  \texttt{zhangmengmei@bestpay.com.cn},
  \texttt{s.pan@griffith.edu.au}
%
  % David S.~Hippocampus\thanks{Use footnote for providing further information
  %   about author (webpage, alternative address)---\emph{not} for acknowledging
  %   funding agencies.} \\
  % Department of Computer Science\\
  % Cranberry-Lemon University\\
  % Pittsburgh, PA 15213 \\
  % \texttt{hippo@cs.cranberry-lemon.edu} \\
  % examples of more authors
  % \And
  % Coauthor \\
  % Affiliation \\
  % Address \\
  % \texttt{email} \\
  % \AND
  % Coauthor \\
  % Affiliation \\
  % Address \\
  % \texttt{email} \\
  % \And
  % Coauthor \\
  % Affiliation \\
  % Address \\
  % \texttt{email} \\
  % \And
  % Coauthor \\
  % Affiliation \\
  % Address \\
  % \texttt{email} \\
  \vspace{-0.5em}
}
\newcommand{\MethodName}{LGTL}
\begin{document}

\maketitle

\begin{abstract}
%  Graph learning has advanced significantly with the integration of Transformer architectures, including Graph Transformers and Graph LLMs. Graph Transformers leverage global attention to capture long-range dependencies, but face prohibitive complexity, limiting scalability. A key solution, Tokenized Graph Learning Models (TGLMs), unifying tokenized Graph Transformers and graph LLMs, addresses these challenges by converting graphs into node-centric token lists for scalable processing.  
 Graph Transformers, leveraging the global attention to capture long-range dependencies in graph structures, have significantly advanced graph machine learning, but face prohibitive computational complexity. Tokenized Graph Learning Models (TGLMs) address this issue by converting graphs into ordered token lists for scalable processing. Besides, TGLMs also empower Large Language Models (LLMs) to handle text-attributed graphs more effectively, and thus are also employed in Graph LLMs.  
 However, existing TGLMs rely on hand-designed token lists and their adaptability to diverse graph learning scenarios remains unexplored. In this paper, we first conduct extensive empirical and theoretical preliminary studies for hand-designed token lists. Surprisingly, we identify an unexplored ``hop-overpriority problem'': the common predefined token lists overemphasize nearby nodes and overwhelm the ability of TGLMs to balance local and global signals. This phenomenon is especially harmful for heterophilic graphs.
 %Experimental results demonstrate that TGLMs exhibit a significant degradation in performance on heterophilic graphs, experiencing a performance reduction of approximately 17.67\% compared to the performance when utilizing only the feature of central nodes.
 To address this problem, we propose the Learnable Graph Token List (\MethodName), a plug-and-play module to replace hand-designed token lists in TGLMs. Specifically, \MethodName~adaptively adjusts the weights across hops and prioritizes informative nodes within hops through a graph attention gate module and a selection module, respectively. In this way, contextually informative nodes can be adaptively emphasized for both homophilic and heterophilic graphs. Besides, we theoretically show that \MethodName~can address the hop-overpriority problem. Extensive experiments on  benchmarks validate the efficacy of \MethodName~across both Graph Transformers and Graph LLM backbones.
 %\MethodName~adaptively emphasizes nodes to prevent the hop-overpriority problem: upweighting local neighbors for homophilous graphs and prioritizing central nodes and global context for heterophilous graphs.
\end{abstract}

\section{Introduction}\label{sec:intro}
Graph data, as a powerful data structure for modeling relational information, is ubiquitous in real-world systems, ranging from social networks, citation networks, to molecular interaction networks~\citep{yang2017neural, huang2020skipgnn}. To enable effective learning for graph data, Graph Neural Networks (GNNs)~\citep{kipf2016semi, velivckovic2017graph, hamilton2017inductive, xu2018powerful, wu2019simplifyinggraphconvolutionalnetworks} have been proposed, which use message-passing to capture local structural signals and learn high-quality representations from graph data. However, GNNs face challenges such as over-smoothing and over-squashing with deeper layers~\citep{li2018deeper, nt2019revisiting, oono2019graph, topping2021understanding, deac2022expander}. To tackle these issues, Graph Transformers adopt the global attention mechanism of Transformers~\citep{vaswani2017attention} to model long-range dependencies. Despite their effectiveness, Graph Transformers typically need to attend every pair of nodes in the input graph~\citep{ying2021transformers, dwivedi2020generalization}, therefore incur high computational costs and limit their scalability for large graphs.

% More recently, Tokenized Graph Learning Models (TGLMs), a paradigm that converts graph data into token sequences for processing, have opened new frontiers in graph learning~\citep{chen2022nagphormer, shirzad2023exphormer, tang2024graphgpt}. TGLMs transform graphs into ordered token lists (e.g., sequences of nodes or aggregated neighborhoods) to leverage the attention mechanisms of Transformer-based models and effectively model multi-hop neighborhood interactions within graphs. By bridging graph structures with Transformer models, TGLMs capture local structural information (via graph tokens) and global relational patterns (via attention mechanisms), making them highly effective for downstream graph learning tasks~\citep{chen2024llaga, fu2024vcr}.
% Beside accelerating conventional Graph Transformers, TGLMs also present opportunities for leveraging Large Language Models (LLMs)~\citep{ouyang2022training, chiang2023vicuna, touvron2023llama2openfoundation} to handle text-attributed graphs.

More recently, to address the scalability limitations of global-attention Graph Transformers, Tokenized Graph Learning Models (TGLMs) have emerged as a promising paradigm by converting graphs into node-centric token lists (e.g., sequences of aggregated neighborhoods or sampled nodes)~\citep{chen2022nagphormer, shirzad2023exphormer, fu2024vcr}. By reducing the input to token sequences with fixed lengths, TGLMs enable efficient attention computation while preserving the global signals via attention mechanisms. Besides accelerating conventional Graph Transformers, the tokenization approach aligns naturally with Large Language Models (LLMs), which  has recently drawn considerable attention in the graph machine learning community, particularly for text-attributed graphs~\cite{li2024survey,ren2024survey,zhang2023graph}. As a result, TGLMs have also been adopted in Graph LLMs to bridge graph structures with LLMs, enabling the powerful modeling and reasoning abilities of LLMs to more effectively handle graph tasks~\citep{chen2024llaga, tang2024graphgpt}.

Despite their initial successes, the effectiveness of TGLMs critically relies on the design of the graph token list fed into the model. Existing works have proposed diverse strategies for token lists. For instance, VCR-Graphormer~\citep{fu2024vcr}, a representative Graph Transformer, uses personalized PageRank (PPR)~\citep{gasteiger2022predictpropagategraphneural} to inject cluster-level context into token lists. LLaGA~\citep{chen2024llaga}, a representative LLM for graph modeling, employs two fixed-template token lists: one aggregates neighborhood information via average pooling to form central node tokens, and another recursively samples neighborhood nodes to construct token sequences. Although these fixed-template token lists have shown performance in certain scenarios, a critical question remains unexplored: 

\quad \textit{Do pre-defined token lists universally enhance TGLMs, or do they fail under certain scenarios?} 

Investigating this question is critical given the structural diversity of real-world graphs. For example, many practical graphs exhibit relational patterns where neighborhood nodes carry inconsistent signals. Pre-defined strategies with fixed templates could inadvertently prioritize uninformative neighbors. %., amplifying noise rather than capturing meaningful signals. Therefore, clarifying whether predefined token lists adapt to such patterns directly impacts the performance of TGLMs.

To answer this question, we conduct preliminary experiments for different token lists templates (please refer to Section~\ref{sec:problem} for detailed settings and results). Strikingly, we observe that pre-defined token lists cannot universally enhance performance, but rather severely deteriorate the performance, especially on heterophilic graphs. The results are consistent across various datasets. To further analyze this phenomenon, we conduct extensive theoretical analyses for the effects of pre-defined token lists, especially for their failure cases. Our results reveal a previous unnoticed \textbf{hop-overpriority problem}: pre-defined strategies explicitly amplify the attention weights of nearby nodes in the token list, overwhelming the model's ability to balance local and global signals. For example, on heterophilic graphs, where 1-hop neighbors are less informative due to low homophily, this over-prioritization of local nodes forces the model to rely on noisy signals, leading to suboptimal performance.

%Armed with this insight, we propose a Learnable Graph Token List (LGTL) to address the Hop-Overpriority Problem. Unlike fixed templates, LGTL dynamically assigns weights to nodes across different hops using a gate module. This allows LGTL to adaptively emphasize informative nodes: for homophilous graphs, it upweights local neighbors, while for heterophilous graphs, it prioritizes the central node and global context. By integrating LGTL into TGLMs, we enable the model to learn a context-aware token list that aligns with the graph's structural properties. Extensive experiments on both homophilous and heterophilous benchmarks and various TGLMs validate that LGTL significantly improves performance and mitigates the hop-overpriority problem.

Inspired by our preliminary analyses and to tackle the hop-overpriority problem, we propose Learnable Graph Token List (\MethodName), a plug-and-play module designed to replace pre-defined graph token lists in TGLMs. Specifically, unlike fixed templates, LGTL adaptively assigns weights to nodes across different hops using a graph-attention gate module. This adaptive weighting allows \MethodName~to emphasize informative nodes contextually for both homophilic and heterophilic graphs. Furthermore, \MethodName~adopts a selection module to assign distinct weights to nodes within-hop, distinguishing the informativeness of individual neighbors beyond hop-level aggregation. We show that \MethodName~can be easily integrated into various TGLMs, including both Graph Transformers and Graph LLMs. Besides, we provide theoretical analyses to show that \MethodName~is effective in addressing the hop-overpriority problem. %we provide the model with a token list tailored to the structural properties of the graph, ensuring that input sequences align with the task-relevant signals. 
Extensive experiments on various benchmarks and diverse TGLM backbones validate that \MethodName~significantly improves the performance and effectively mitigates the hop-overpriority problem. We summarize our contributions as follows:
\begin{itemize}[leftmargin = 0.5cm]
    %\item Heterophilic Graph Dataset Construction: We compile and standardize a suite of heterophilic graph-text datasets (including Cornell, Texas, Wisconsin, and Actor) in a format compatible with LLM training, facilitating systematic evaluation of LLMs for graphs.
    \item %Discovery of the Hop-Overpriority Problem: 
    We empirically and theoretically characterize the hop-overpriority problem, a critical yet unexplored problem in pre-defined token lists for tokenized graph learning models covering both Graph Transformers and Graph LLMs, which is especially important for heterophilic graphs.
    \item We propose \MethodName, a flexible tokenization method that adaptively adjusts hop weights and prioritizes informative nodes within hops. We theoretically prove that \MethodName~can address the hop-overpriority problem. %This enables TGLMs to balance local and global signals based on the type of the graph.
    \item We conduct extensive experiments on both homophilic and heterophilic datasets with various Graph Transformers and Graph LLMs as the backbone. Experiments demonstrate the effectiveness, compatibility and broad applicability of our method.  
\end{itemize}

\section{Related Works}\label{sec:relatedwork}

\textbf{Graph Transformers and Tokenization}. Graph Transformers, inspired by the attention mechanism of standard Transformers~\citep{vaswani2017attention}, have advanced graph representation learning by capturing global structural dependencies~\citep{ying2021transformers, kreuzer2021rethinking, bo2023specformer, mialon2021graphit, wu2021representing, chen2022structure, kim2022pure, rampavsek2022recipe, ma2023graph}. Early works like GT~\citep{dwivedi2020generalization} treat nodes as tokens and use dense self-attention over all node pairs to model interactions. However, this global-attention design faces scalability challenges for large graphs, spurring the development of tokenization-based architectures. Subsequent graph transformers with tokenization, such as NAGphormer~\citep{chen2022nagphormer}, shift focus to \textit{node-specific token lists}, typically constructed via neighborhood aggregation. These models apply self-attention only within each node's token list, enabling scalable mini-batch training. Follow-up works further improve token lists by integrating richer local context to balance local focus with global awareness~\citep{shirzad2023exphormer, fu2024vcr, chen2024leveraging}. Despite these advances, existing tokenization strategies 
% remain underexplored for \textit{heterophilic graphs}, where node connections deviate from homophily assumptions. This gap limits their adaptability to real-world graphs with diverse structural patterns.
rely on \textit{pre-defined templates}, e.g., fixed neighbor sampling or aggregation, lacking adaptability to diverse graph structures and limiting their ability capture task-relevant signals.

\textbf{LLM for Graphs}. LLMs have recently been extensively studied for graph data, particularly Text-Attributed Graphs (TAGs). They can be broadly categorized into two paradigms: text-based Graph LLMs and token-based Graph LLMs~\cite{li2024survey,ren2024survey,zhang2023graph}. Text-based approaches query LLMs using textual representations of graphs. Early works design prompts to encode graph structure and node features into natural language (e.g., describing nodes with their text and neighbors)~\citep{chen2024exploring, huang2023can}. Subsequent efforts refine textual formats to improve LLM understanding, such as syntax trees~\citep{zhao2023graphtext}, random walks~\citep{tan2023walklm}, or code-like descriptions~\citep{wang2024instructgraph}. However, these methods often struggle with scalability due to the linear growth of text length with graph size. Token-based approaches address this issue by compressing graph structures and text features into token-level embeddings. Representative models like LLaGA~\citep{chen2024llaga}, GraphGPT~\citep{tang2024graphgpt}, and GraphTranslator~\citep{zhang2024graphtranslator} construct node-specific token lists to integrate graphs into token spaces of LLMs. %These methods avoid text-length bottlenecks and enable efficient training. 
% existing token-based works primarily focus on homophilic graphs, leaving the behavior of predefined token lists on heterophilic graphs underexplored. This limits their utility in scenarios where graph structures exhibit strong heterophily, which motivates our study.
However, existing token-based works rely on \textit{pre-defined token lists}, which fail to adaptively handle diverse graph structure %Consequently, token lists often include redundant or uninformative nodes, diluting task-relevant signals and limiting performance. The gap in adaptive tokenization 
and motivates our work to design adaptive token lists that align with graph-specific properties.

\section{Preliminaries}\label{sec:preliminaries}
In this section, we introduce the notations and preliminaries for tokenized graph learning models and two common templates.

\textbf{Notations}: we denote a graph as $\mathcal{G} = (\mathcal{V}, \mathcal{E})$, where $\mathcal{V}$ represents the set of $N$ nodes and $\mathcal{E}$ denotes the set of $M$ edges. 
% The adjacency matrix $\mathbf{A} \in \{0,1\}^{N \times N}$ encodes edge connectivity, with $a_{uv}=1$ if there is an edge between nodes $u$ and $v$, and 0 otherwise. 
Each node $u$ is associated with a $d$-dimensional feature $\mathbf{x}_u$, forming the node feature matrix $\mathbf{X} \in \mathbb{R}^{N \times d}$. For node $u$, its $k$-hop neighborhood $\mathcal{N}_u^k$ refers to nodes reachable from $u$ via exactly $k$ edges and $\mathcal{N}_u = \mathcal{N}_u^1$. A graph token list $\mathbf{T} = [\mathbf{T}_1, \mathbf{T}_2, \dots, \mathbf{T}_L] \in \mathbb{R}^{L\times d}$ is a sequence of $L$ graph tokens, where each token $\mathbf{T}_i$ is a weighted combination of node features.

\textbf{Tokenized Graph Learning Models (TGLMs)}: they are designed to process the input graph token list and learn graph representations for various downstream tasks. Models of transformer-based architecture (e.g., Graph Transformers or LLMs) allow the central node to attend to other nodes by global attention mechanism as follows:
\begin{equation}\label{eq:global_attn}
 \text{Attn}(\mathbf{T}) = \text{Softmax}\left(\frac{\mathbf{Q}\mathbf{K}^\top}{\sqrt{h}}\right)\mathbf{V}, \text{where }
\mathbf{Q}=\mathbf{TW}_Q,\mathbf{K}=\mathbf{TW}_K,\mathbf{V}=\mathbf{TW}_V,
\end{equation}
where $\mathbf{W}_Q, \mathbf{W}_K, \mathbf{W}_V \in \mathbb{R}^{h \times h}$ are trainable weight matrices. 
% TODO. We introduce two classical templates of the graph token list.
From Eq.~\eqref{eq:global_attn}, it is evident that the design of the token lists $\mathbf{T}$ is critical to TGLMs, as they determine the input signals for the transformer-based architecture.
% For the design of the input graph token list, here we introduce two classical templates.
Here, we introduce two classical templates for the graph token list.

\textbf{Hop-field Overview Template (HO)}. It aims to summarize the signal of multi-hop neighbors using aggregated hop embeddings. It employs parameter-free message passing on node features to compute hop-specific representations. For the central node $u$, given $\mathbf{h}_u^0 = \mathbf{x}_u$, the $k$-th graph token $\mathbf{h}_u^k$ is defined as $\mathbf{h}_u^k = \frac{1}{|\mathcal{N}_u|} \sum_{v \in \mathcal{N}_u} \mathbf{h}_{v}^{k-1}$, which recursively aggregates the signal into a single embedding.

\textbf{Neighborhood Detail Template (ND).} Given the central node $u$, ND constructs a fixed size of computational tree rooted at $u$. Denote the neighbor sample size for the $k$-th hop as $n_k$. Starting from the root node $u$, $n_1$ nodes are sampled from its 1-hop neighborhood $\mathbf{N}_u^1$ to form $\widetilde{\mathbf{N}}_u^1$, and each node in $\widetilde{\mathbf{N}}_u^1$ recursively samples $n_2$ nodes from their own 1-hop neighborhoods, and this process repeats for the remaining hops until the $k$-th hop.

% TODO: 
These two templates are representative of prevalent token list construction strategies in existing graph learning methods, covering both Graph Transformers such as Gophormer~\citep{zhao2021gophormer}, NAGphormer~\cite{chen2022nagphormer}, VCR-Graphormer~\cite{fu2024vcr}, and Graph LLMs such as LLaGA~\cite{chen2024llaga} and GraphGPT~\cite{tang2024graphgpt}. %Their widespread adoption underscores the critical role of token list design of TGLMs.
\section{Hop-Overpriority Problem for Tokenized Graph Learning Models}\label{sec:problem}
In this section, we introduce empirical and theoretical results for the hop-overpriority problem for tokenized graph learning models. 
\subsection{Empirical Observations}
To empirically explore the impact of graph token lists for tokenized graph learning models, we conduct preliminary experiments using a representative Graph LLM, LLaGA~\citep{chen2024llaga}. 
Specifically, we use a a frozen LLM to ensure that performance differences stem solely from graph token lists. We evaluate the performance of LLaGA on homophilic and heterophilic graphs with three types of graph token lists: No Template (NONE), HO, and ND.% For the heterophilic datasets, we obtain the textual content of nodes and the relationships between them from the corresponding websites, and organize into corresponding Text-Attributed Graphs (TAG).
The details of the datasets are provided in~\ref{appendix:datasets_llaga}.

\begin{table}[h]
  \caption{The performance of different graph token lists using LLaGA. The value below the dataset indicates the edge homophily. Numbers in parentheses indicate comparing with the None method.
  }
  \label{empirical_observations}
  \centering
  \large
  \resizebox{1.0\linewidth}{!}{
  \begin{tabular}{cccccccc}
\toprule
%Templates & Metric & Cora ($\color{green}{0.8138}$) & PubMed ($\color{green}{0.8024}$) & Cornell ($\color{red}{0.1360}$) & Texas ($\color{red}{0.1452}$) & Wisconsin ($\color{red}{0.2199}$) & Actor ($\color{brown}{0.5608}$) \\ \hline
\multirow{2}{*}{Templates} & Dataset & Cora  & PubMed  & Cornell  & Texas  & Wisconsin  & Actor  \\ 
%&  & $\color{green}{0.8138}$ & $\color{green}{0.8024}$ & $\color{red}{0.1360}$ & $\color{red}{0.1452}$ & $\color{red}{0.2199}$ & $\color{brown}{0.5608}$ \\ 
& Homophily & 0.8138 & 0.8024 & 0.1360 & 0.1452 & 0.2199 & 0.5608 \\ 
\midrule
\multirow{2}{*}{None} & Micro-F1 & 84.13 & 94.88 & 64.67 & 87.10 & 72.92 & 76.15 \\ 
& Macro-F1 & 82.05 & 94.42 & 50.00 & 83.93 & 64.76 & 70.12 \\ \hline
\multirow{2}{*}{HO} & Micro-F1 & 89.22 \scalebox{0.75}{($\color{green}\uparrow{5.09}$)} & 95.03 \scalebox{0.75}{($\color{green}\uparrow{0.15}$)} & 42.67 \scalebox{0.75}{($\color{red}\downarrow{22.00}$)} & 61.29 \scalebox{0.75}{($\color{red}\downarrow{25.81}$)} & 49.58 \scalebox{0.75}{($\color{red}\downarrow{23.34}$)} & 77.05 \scalebox{0.75}{($\color{green}\uparrow{0.90}$)} \\ 
& Macro-F1 & 87.65 \scalebox{0.75}{($\color{green}\uparrow{5.60}$)} & 94.56 \scalebox{0.75}{($\color{green}\uparrow{0.14}$)} & 36.25 \scalebox{0.75}{($\color{red}\downarrow{13.75}$)} & 57.44 \scalebox{0.75}{($\color{red}\downarrow{26.49}$)} & 32.04 \scalebox{0.75}{($\color{red}\downarrow{32.72}$)} & 70.48 \scalebox{0.75}{($\color{green}\uparrow{0.36}$)} \\ \hline
\multirow{2}{*}{ND} & Micro-F1 & 88.86 \scalebox{0.75}{($\color{green}\uparrow{4.73}$)} & 95.03 \scalebox{0.75}{($\color{green}\uparrow{0.15}$)} & 46.67 \scalebox{0.75}{($\color{red}\downarrow{18.00}$)} & 74.19 \scalebox{0.75}{($\color{red}\downarrow{12.91}$)} & 50.83 \scalebox{0.75}{($\color{red}\downarrow{22.09}$)} & 77.34 \scalebox{0.75}{($\color{green}\uparrow{1.19}$)} \\ 
& Macro-F1 & 86.71 \scalebox{0.75}{($\color{green}\uparrow{4.21}$)} & 94.56 \scalebox{0.75}{($\color{green}\uparrow{0.14}$)} & 42.35 \scalebox{0.75}{($\color{red}\downarrow{7.65}$)} & 68.93 \scalebox{0.75}{($\color{red}\downarrow{15.00}$)} & 34.06 \scalebox{0.75}{($\color{red}\downarrow{30.70}$)} & 72.99 \scalebox{0.75}{($\color{green}\uparrow{2.87}$)} \\ \bottomrule
\end{tabular}
}
\end{table}

The results, as shown in Table~\ref{empirical_observations}, reveal that both ND and HO templates achieve better performance than None on homophilic graphs. However, on heterophilic graphs, e.g., Cornell, Texas, and Wisconsin with low edge homophily, the predefined templates lead to significant performance degradation. %In particular, on Wisconsin, using either HO or ND results in a performance drop of at least 30.29\% compared to NONE. 
This indicates that the predefined templates may introduce task-irrelevant or even harmful features when processing heterophilic graphs. 
% 
%Homophilic graphs, characterized by high intraclass edge density, have nearby nodes that are more likely to share the label of the central node. Thus, HO and ND effectively capture task-relevant signals by incorporating these label-correlated neighbors into the token list, enriching the input of Tokenized Graph Learning Models with context that aligns with the downstream tasks. 
%In contrast, heterophilic graphs with low edge homophily have sparse intraclass edges, meaning that nearby neighbors often belong to different labels. Therefore, HO and ND introduce noise by aggregating irrelevant or conflicting features into the graph token list.
A plausible reason is that heterophilic graphs have sparse intraclass edges, and the predefined templates aggregate irrelevant or conflicting features into the graph token list.

\begin{figure*}[h!]
\vspace{-1.0em}
\centering
\subfigure[Cora] {
  \begin{minipage}[b]{.142\linewidth}
    \centering
    \includegraphics[scale=0.086]{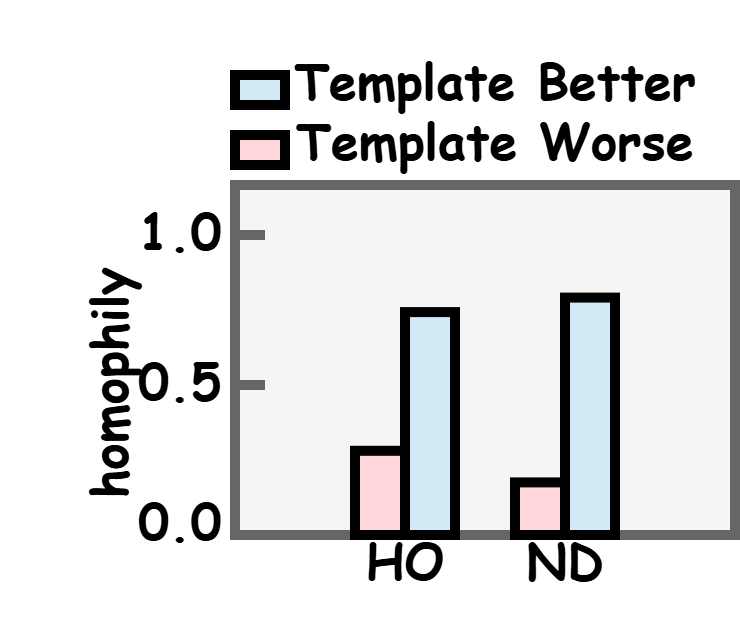}
  \end{minipage}
}
\subfigure[PubMed] {
  \begin{minipage}[b]{.142\linewidth}
    \centering
    \includegraphics[scale=0.086]{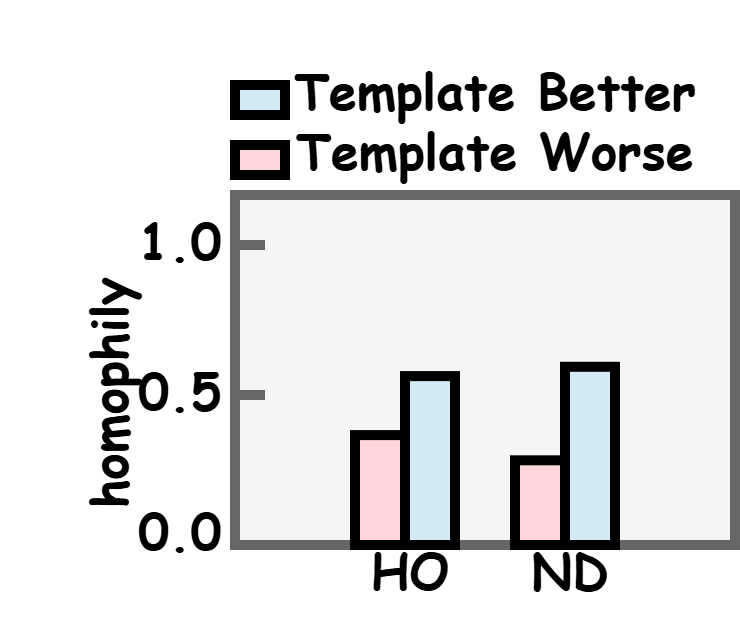}
  \end{minipage}
}
\subfigure[Actor] {
  \begin{minipage}[b]{.142\linewidth}
    \centering
    \includegraphics[scale=0.086]{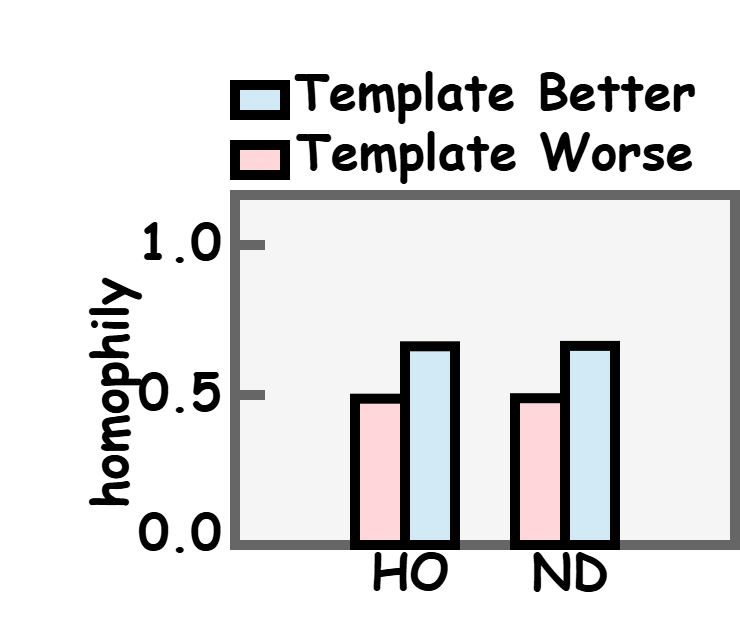}
  \end{minipage}
}
\vspace{-0.75em}
\subfigure[Cornell] {
  \begin{minipage}[b]{.142\linewidth}
    \centering
    \includegraphics[scale=0.086]{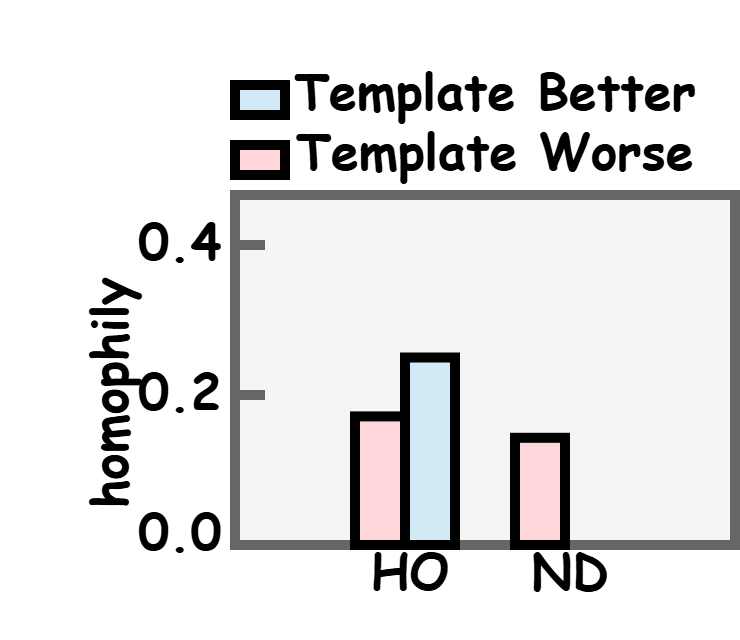}
  \end{minipage}
}
\subfigure[Texas] {
  \begin{minipage}[b]{.142\linewidth}
    \centering
    \includegraphics[scale=0.086]{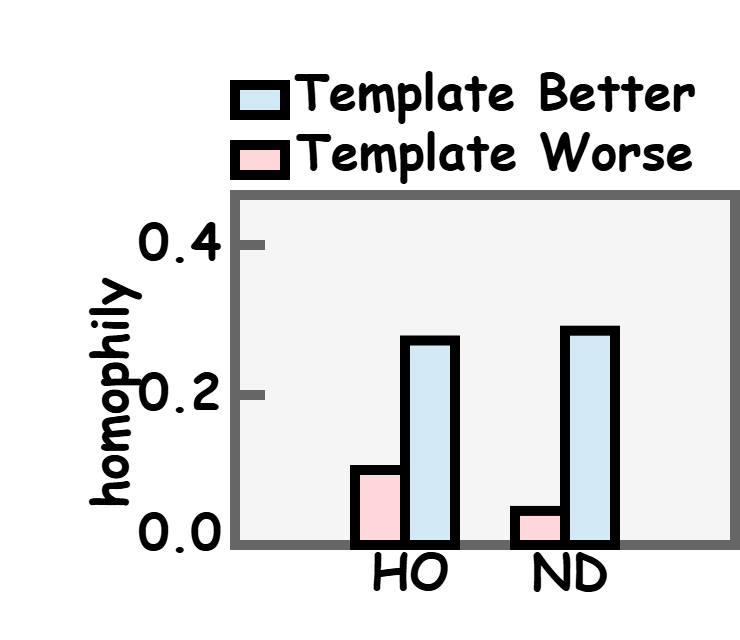}
  \end{minipage}
}
\subfigure[Wisconsin] {
  \begin{minipage}[b]{.142\linewidth}
    \centering
    \includegraphics[scale=0.086]{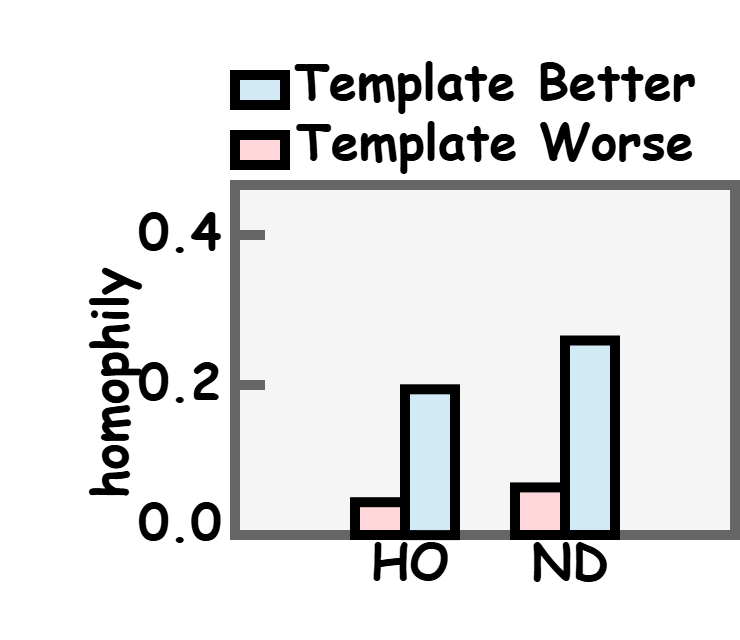}
  \end{minipage}
}
\caption{The average node-homophily for different types of nodes. "Template Better" means nodes which are predicted correctly by HO/ND but incorrectly by None, while "Template Worse" means nodes which are predicted incorrectly by HO/ND but correctly by None. }%"Template Better" with ND on Cornell does not exist.
\label{empirical_analysis}
\end{figure*}

To further investigate this phenomenon, we examine the nodes where the predictions differ between different graph token lists. Specifically, we analyze two templates (HO and ND) and whether the performance is decreased or improved. We evaluate the node-homophily (proportion of 1-hop neighbors sharing the same label as the central node) of them. As shown in Figure~\ref{empirical_analysis}, for all datasets, templates will be more likely to improve performance for nodes exhibit higher node-homophily. This further confirms that predefined templates are more beneficial for homophilic nodes but harmful for heterophilic nodes.

\subsection{Theoretical Analysis}
Next, we theoretically explore the graph token lists. We take HO as an example (the analysis of ND, which exhibits similar trends, is shown in Appendix~\ref{appendix:proof_of_ND}). Consider a graph with an average degree $n$. For a central node $u$, denote $\mathbf{T}_{u,0}^{\text{HO}} = \mathbf{x}_u$. 
%Here, HO for $u$ is defined as $\mathbf{T}_{u,0}^{\text{HO}}, \mathbf{T}_{u,1}^{\text{HO}}, \dots, \mathbf{T}_{u,L}^{\text{HO}}$. 
The tokens are recursively calculated as $\mathbf{T}_{u,i}^{\text{HO}} = \frac{1}{n} \sum_{v \in \mathcal{N}_u} \mathbf{T}_{v,i-1}^{\text{HO}}$. To characterize nodes contributing to token $\mathbf{T}_{u,k}^{\text{HO}}$, we define $\mathbf{H}_u^k = \sum_{v \in \mathcal{N}_u^k} \mathbf{x}_v$ as the sum of the features of $\mathcal{N}_u^k$. $\mathbf{T}_{u,k}^{\text{HO}}$ can be expressed as a linear combination of $\mathbf{H}_u^0, \dots, \mathbf{H}_u^k$, i.e.,  
\begin{equation}\label{eq:M_HO_define}
\mathbf{T}_{u,k}^{\text{HO}} = \sum \nolimits_{i=0}^k \mathbf{M}_{k,i}^{\text{HO}} \mathbf{H}_u^i,
\end{equation}
where $\mathbf{M}_{k,i}^{\text{HO}}$ is the matrix capturing the contribution of $\mathcal{N}_u^i$ to $\mathbf{T}_{u,k}^{\text{HO}}$. We can obtain $\mathbf{M}_{k,i}^{\text{HO}}$ as follows:
\begin{theorem}[Recursive Properties of $\mathbf{M}_{k,i}^{\text{HO}}$]
\label{thm:MHOrecursive}
$\mathbf{M}_{k,i}^{\text{HO}}$ follows the following rules:
\begin{enumerate}[leftmargin =0.5cm]\setlength{\itemsep}{-1mm}
    \item $\mathbf{M}_{0,0}^{\text{HO}} = 1$ ($\mathbf{T}_{u,0}^{\text{HO}}$ only contains the feature of node $u$);
    \item $\mathbf{M}_{k,0}^{\text{HO}} = \mathbf{M}_{k-1,1}^{\text{HO}}$(contribution of $\mathcal{N}_u^0$ to $\mathbf{T}_{u,k}^{\text{HO}}$ equals that of $\mathcal{N}_u^1$ to $\mathbf{T}_{u,k-1}^{\text{HO}}$);
    \item $\mathbf{M}_{k,i}^{\text{HO}} = \mathbf{0}$, $i > k$ (no contribution from higher-hop neighbors than the current hop depth);
    \item $\mathbf{M}_{k,i}^{\text{HO}} = \frac{1}{n} \left( \mathbf{M}_{k-1,i-1}^{\text{HO}} + (n-1) \mathbf{M}_{k-1,i+1}^{\text{HO}} \right)
$, for $ k, i \geq 1 $.
\end{enumerate}
\end{theorem}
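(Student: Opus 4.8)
The plan is to establish the four rules by carefully unpacking the recursion $\mathbf{T}_{u,i}^{\text{HO}} = \frac{1}{n}\sum_{v\in\mathcal{N}_u}\mathbf{T}_{v,i-1}^{\text{HO}}$ together with the defining expansion $\mathbf{T}_{u,k}^{\text{HO}} = \sum_{i=0}^k \mathbf{M}_{k,i}^{\text{HO}}\mathbf{H}_u^i$ in Eq.~\eqref{eq:M_HO_define}, proceeding by induction on $k$. Rules 1 and 3 are essentially bookkeeping: rule 1 follows immediately from $\mathbf{T}_{u,0}^{\text{HO}} = \mathbf{x}_u = \mathbf{H}_u^0$, and rule 3 follows because each application of the averaging operator increases the maximum hop-distance of contributing nodes by at most one, so after $k$ steps no node in $\mathcal{N}_u^i$ with $i>k$ can appear. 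The substantive content is rule 4, with rule 2 as a special/boundary case of it.

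For rule 4, the key step I would carry out is to substitute the inductive expression for $\mathbf{T}_{v,k-1}^{\text{HO}}$ into the recursion and then re-express the sum $\sum_{v\in\mathcal{N}_u}\mathbf{H}_v^{i}$ in terms of the $\mathbf{H}_u^j$. The crucial combinatorial identity is that, on a graph with (approximately) uniform degree $n$, summing the $i$-hop feature-sums over all $1$-hop neighbors $v$ of $u$ redistributes mass onto the $(i-1)$-hop and $(i+1)$-hop shells of $u$: each node at distance $i-1$ from $u$ is reached once (through its unique step back toward $u$... more precisely weighted by $1$), while each node at distance $i+1$ is reached through roughly $n-1$ of the neighbors of $u$ that lie on a shortest path. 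Concretely, $\sum_{v\in\mathcal{N}_u}\mathbf{H}_v^{i} \approx \mathbf{H}_u^{i-1} + (n-1)\mathbf{H}_u^{i+1}$. Combining this with $\mathbf{T}_{u,k}^{\text{HO}} = \frac{1}{n}\sum_{v\in\mathcal{N}_u}\sum_{i}\mathbf{M}_{k-1,i}^{\text{HO}}\mathbf{H}_v^{i}$ and matching coefficients of $\mathbf{H}_u^i$ on both sides yields $\mathbf{M}_{k,i}^{\text{HO}} = \frac{1}{n}\bigl(\mathbf{M}_{k-1,i-1}^{\text{HO}} + (n-1)\mathbf{M}_{k-1,i+1}^{\text{HO}}\bigr)$. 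Rule 2 then drops out by setting $i=0$: the $\mathbf{M}_{k-1,-1}^{\text{HO}}$ term is vacuous, and a compensating factor accounts for the fact that node $u$ itself is reached from each of its $n$ neighbors exactly once, giving $\mathbf{M}_{k,0}^{\text{HO}} = \mathbf{M}_{k-1,1}^{\text{HO}}$.

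The main obstacle is making the redistribution identity $\sum_{v\in\mathcal{N}_u}\mathbf{H}_v^{i} = \mathbf{H}_u^{i-1} + (n-1)\mathbf{H}_u^{i+1}$ precise, since on a general graph it holds only approximately (it is exact on an $n$-regular tree): there can be nodes at distance $i$ that are neighbors of several neighbors of $u$, triangles that create shortcuts, and boundary effects. I would handle this by stating the analysis under the idealized assumption already implicit in the setup — a locally tree-like graph of uniform degree $n$, where each node in $\mathcal{N}_u^{i+1}$ has exactly one neighbor in $\mathcal{N}_u^i$ and each node in $\mathcal{N}_u^{i}$ has exactly $n-1$ neighbors in $\mathcal{N}_u^{i+1}$ and one in $\mathcal{N}_u^{i-1}$ — and noting that this is the regime in which the hop-aggregation template is designed to operate. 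With that assumption the coefficient-matching is exact and the induction closes cleanly; the $\mathbf{M}_{k,i}^{\text{HO}}$ are in fact scalars (multiples of the identity), which is why the statement writes $\mathbf{M}_{0,0}^{\text{HO}}=1$.
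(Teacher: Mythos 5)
Your overall strategy is exactly the paper's: induction on $k$, a shell-redistribution identity for $\sum_{v\in\mathcal{N}_u}\mathbf{H}_v^{i}$, and coefficient matching against Eq.~\eqref{eq:M_HO_define}; your explicit acknowledgment that the identity is only exact on a locally tree-like $n$-regular graph is, if anything, more careful than the paper. However, there is a genuine error in the key combinatorial step: you state the redistribution identity with the coefficients transposed. On an $n$-regular tree, a node $w$ at distance $i+1$ from $u$ lies on a unique shortest path through exactly \emph{one} neighbor $v_0\in\mathcal{N}_u$, and only for that $v_0$ is $d(v_0,w)=i$; so the $(i+1)$-shell is picked up with coefficient $1$, not $n-1$. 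Conversely, a node $w$ at distance $i-1\ge 1$ from $u$ satisfies $d(v,w)=i$ for all $n-1$ neighbors $v$ of $u$ \emph{off} the $u$--$w$ path; so the $(i-1)$-shell is picked up with coefficient $n-1$, not $1$. The correct identity (the one the paper uses) is $\sum_{v\in\mathcal{N}_u}\mathbf{H}_v^{i}=\mathbf{H}_u^{i+1}+(n-1)\mathbf{H}_u^{i-1}$ for $i\ge 2$, with the boundary case $\sum_{v\in\mathcal{N}_u}\mathbf{H}_v^{1}=\mathbf{H}_u^{2}+n\,\mathbf{H}_u^{0}$ supplying Rule~2.

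This matters because the identity as you wrote it does not yield Rule~4. Carrying out your coefficient matching with $\sum_{v}\mathbf{H}_v^{j}=\mathbf{H}_u^{j-1}+(n-1)\mathbf{H}_u^{j+1}$ gives $\mathbf{M}_{k,i}^{\text{HO}}=\frac{1}{n}\left((n-1)\mathbf{M}_{k-1,i-1}^{\text{HO}}+\mathbf{M}_{k-1,i+1}^{\text{HO}}\right)$, i.e., the transpose of the claimed recursion. A direct check at $k=2$ exposes the discrepancy: on the tree, $\mathbf{T}_{u,2}^{\text{HO}}=\frac{1}{n^2}\sum_{v\in\mathcal{N}_u}\mathbf{H}_v^1=\frac{1}{n}\mathbf{H}_u^0+\frac{1}{n^2}\mathbf{H}_u^2$, consistent with Rules~2 and~4, whereas your identity would give $\mathbf{M}_{2,2}^{\text{HO}}=(n-1)/n^2$. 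So the recursion you ultimately write down is correct but does not follow from the identity you justify; fix the two coefficients (and handle $i=1$ separately to get Rule~2, where $u$ is counted once by each of its $n$ neighbors and the $1/n$ prefactor cancels) and the induction closes as in the paper.
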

The proofs are shown in~\ref{appendix:MHOrecursive}. Building on Theorem~\ref{thm:MHOrecursive}, we derive the key properties and proofs of $\mathbf{M}_{k,i}^{\text{HO}}$ in Appendix~\ref{appendix:MHOprop1}-~\ref{appendix:MHOprop3} that characterize the aggregation patterns of HO. %These properties reveal the inherent bias toward near-hop neighbors, which underpins the Hop-Overpriority Problem. 
These properties collectively reveal that \textbf{HO contains an inherent bias of focusing more on near neighbors}. For example, even within $\mathbf{T}_{u,k}^{\text{HO}}$, the far-hop neighbors (such as the $k$-th hop) are exponentially less influential than the near-hop ones as shown in Appendix~\ref{appendix:MHOprop3}. Building on the properties of $\mathbf{M}_{i,k}^{\text{HO}}$, here we analyze how Tokenized Graph Learning Models interact with HO. %We aim to characterize the effective attention $\mathcal{N}_u^k$ in Theorem~\ref{thm:MHOAttnrecursive}.

\begin{theorem}[Effective Attention Allocation]\label{thm:MHOAttnrecursive}
Consider a simplified 1-layer Transformer model processing $\mathbf{T}_{u,0}^{\text{HO}}, \mathbf{T}_{u,1}^{\text{HO}}, \dots, \mathbf{T}_{u,L}^{\text{HO}}$ for node $u$. Let $\alpha_0, \alpha_1, \dots, \alpha_L$ ($\sum_{i=0}^L \alpha_i = 1$) denotes the attention scores from $\mathbf{T}_{u,0}^{\text{HO}}$ to all HO graph tokens. The effective attention allocated to $v \in \mathcal{N}_u^k$ is:  
\begin{equation}\label{eq:M_HO_attn_allocate}
\hat{\alpha}_k = \sum \nolimits_{i=k, \, i \equiv k \, \text{mod} \, 2}^L \alpha_i \mathbf{M}_{i,k}^{\text{HO}},
\end{equation}
%where $\mathbf{M}_{i,k}^{\text{HO}}$ are the coefficients defined in Theorem~\ref{thm:MHOrecursive}.
The allocation exhibits two critical properties: (1) Near-Hop Dominance: for $k_1 < k_2$ with $k_1 \equiv k_2 \, \text{mod} \, 2$, $\hat{\alpha}_{k_1} > \hat{\alpha}_{k_2}$. (2) Within-Hop Indistinguishability: for any $v_1, v_2 \in \mathcal{N}_u^k$, their effective attention scores satisfy: $\hat{\alpha}_{v_1} = \hat{\alpha}_{v_2}$. 
\end{theorem}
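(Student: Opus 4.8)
The plan is to derive Eq.~\eqref{eq:M_HO_attn_allocate} by tracing how the attention mass $\alpha_i$ placed on token $\mathbf{T}_{u,i}^{\text{HO}}$ gets redistributed onto the raw node features through the expansion in Eq.~\eqref{eq:M_HO_define}. Since $\mathbf{T}_{u,i}^{\text{HO}} = \sum_{j=0}^i \mathbf{M}_{i,j}^{\text{HO}} \mathbf{H}_u^j$ and $\mathbf{H}_u^j = \sum_{v \in \mathcal{N}_u^j} \mathbf{x}_v$, the value vector that the central token aggregates is $\sum_i \alpha_i \mathbf{T}_{u,i}^{\text{HO}} = \sum_i \alpha_i \sum_{j \le i} \mathbf{M}_{i,j}^{\text{HO}} \sum_{v \in \mathcal{N}_u^j} \mathbf{x}_v$. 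Collecting the coefficient of a fixed node $v \in \mathcal{N}_u^k$ gives $\hat\alpha_k = \sum_{i \ge k} \alpha_i \mathbf{M}_{i,k}^{\text{HO}}$, and the parity restriction $i \equiv k \bmod 2$ drops out immediately because property~3 and the recursion in Theorem~\ref{thm:MHOrecursive} force $\mathbf{M}_{i,k}^{\text{HO}} = \mathbf{0}$ whenever $i \not\equiv k \bmod 2$ (each application of rule~4 shifts the hop index by $\pm 1$, so starting from $\mathbf{M}_{0,0}^{\text{HO}}$ only entries with $i \equiv j$ are nonzero). I would state this parity fact as a short lemma or cite it from the properties established in Appendix~\ref{appendix:MHOprop1}--\ref{appendix:MHOprop3}.

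For property~(1), Near-Hop Dominance, I would show $\hat\alpha_{k_1} > \hat\alpha_{k_2}$ for $k_1 < k_2$ of the same parity by a termwise comparison of the two sums $\hat\alpha_{k_1} = \sum_{i} \alpha_i \mathbf{M}_{i,k_1}^{\text{HO}}$ and $\hat\alpha_{k_2} = \sum_i \alpha_i \mathbf{M}_{i,k_2}^{\text{HO}}$. The key input is a monotonicity property of $\mathbf{M}_{i,j}^{\text{HO}}$ in its second argument: for fixed $i$ and $j_1 < j_2$ with $j_1 \equiv j_2 \bmod 2$, one has $\mathbf{M}_{i,j_1}^{\text{HO}} \ge \mathbf{M}_{i,j_2}^{\text{HO}}$ (with strict inequality whenever both are nonzero), which is exactly the ``inherent bias toward near neighbors'' the paper attributes to the properties in Appendices~\ref{appendix:MHOprop1}--\ref{appendix:MHOprop3}. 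Given that, every term in $\hat\alpha_{k_1}$ dominates the corresponding term in $\hat\alpha_{k_2}$, and since $\alpha_0,\dots,\alpha_L$ are attention scores that are strictly positive (softmax output) with $\sum_i \alpha_i = 1$, at least one term is strictly larger, yielding the strict inequality. I would be careful to note that the sum for $\hat\alpha_{k_2}$ ranges over fewer indices (it starts at $i = k_2 > k_1$), which only strengthens the comparison.

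Property~(2), Within-Hop Indistinguishability, is essentially immediate from the closed form: $\hat\alpha_k$ as computed above depends only on the hop index $k$ and not on the identity of $v$, because $\mathbf{M}_{i,k}^{\text{HO}}$ multiplies the entire sum $\mathbf{H}_u^k = \sum_{v \in \mathcal{N}_u^k} \mathbf{x}_v$ uniformly — the HO template never breaks symmetry among nodes at the same distance. So for any $v_1, v_2 \in \mathcal{N}_u^k$ we get $\hat\alpha_{v_1} = \hat\alpha_{v_2} = \hat\alpha_k$. The main obstacle is the first property: it relies on the monotonicity of $\mathbf{M}_{i,j}^{\text{HO}}$ in $j$, which must be proven by induction on $k$ (equivalently on $i$) using recursion rule~4, carefully handling the boundary term via rule~2 ($\mathbf{M}_{k,0}^{\text{HO}} = \mathbf{M}_{k-1,1}^{\text{HO}}$) and checking that the convex-combination structure $\mathbf{M}_{k,i}^{\text{HO}} = \frac1n \mathbf{M}_{k-1,i-1}^{\text{HO}} + \frac{n-1}{n}\mathbf{M}_{k-1,i+1}^{\text{HO}}$ preserves the desired ordering across the $\pm 1$ index shift; this is the one place where a genuine inductive argument (rather than bookkeeping) is needed, and I would lean on whichever of the Appendix~\ref{appendix:MHOprop1}--\ref{appendix:MHOprop3} properties already packages it.
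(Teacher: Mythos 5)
Your proposal matches the paper's proof in Appendix~\ref{appendix:MHOAttnthm}: the same expansion of $\sum_i \alpha_i \mathbf{T}_{u,i}^{\text{HO}}\mathbf{W}_V$ via Eq.~\eqref{eq:M_HO_define} to collect the coefficient of $\mathbf{x}_v\mathbf{W}_V$, the same appeal to the parity property (Appendix~\ref{appendix:MHOprop1}) to restrict the sum, the same use of row monotonicity (Appendix~\ref{appendix:MHOprop2}) for Near-Hop Dominance, and the same node-independence observation for Within-Hop Indistinguishability. No gaps; this is essentially the paper's argument.
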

The proof is given in Appendix~\ref{appendix:MHOAttnthm}. Furthermore, to quantify how the hop-overpriority problem of HO impacts performance, we adopt the Frobenius norm of the difference between the raw node feature and its neighbors aggregated with attention as a metric~\citep{shuman2013emerging, kalofolias2016learn}, denoted as $\| \mathbf{H}_u^0 - \hat{\mathbf{A}} \mathbf{H}^0 \|_F$, where $\hat{\mathbf{A}}$ is the attention matrix derived from Theorem~\ref{thm:MHOAttnrecursive}. %It evaluates the smoothness of node embeddings: a smaller value indicates that the model effectively aggregates features from task-relevant neighbors, while a larger value suggests noisy or irrelevant aggregation. 
Given the inherent bias of HO towards near-hop neighbors, we analyze how the hop-overpriority problem influences the metric:
\begin{theorem}[Smoothness Bound of Tokenized Representations]\label{thm:MHOBound}
Let $C_u^i$ denote the proportion of $\mathcal{N}_u^i$ sharing the same label as node $u$. The smoothness of node $u$'s representation satisfies:  
\begin{equation}
\label{eq:HO_smooth}
    \| \mathbf{H}_u^0 - \hat{\mathbf{A}} \mathbf{H}^0 \|_F \leq \sqrt{2} L \sum \nolimits_{i=0}^L \hat{\alpha}_i |\mathcal{N}_u^i| (1 - C_u^i),
\end{equation}
where $L$ is a Lipschitz constant.%, $\hat{\alpha}_i$ is the effective attention allocated to $\mathcal{N}_u^i$ (from Theorem~\ref{thm:MHOAttnrecursive}), and $|\mathcal{N}_u^i|$ is the node number of $\mathcal{N}_u^i$.  
\end{theorem}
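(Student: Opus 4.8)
The plan is to bound $\| \mathbf{H}_u^0 - \hat{\mathbf{A}} \mathbf{H}^0 \|_F$ by peeling off the contribution hop by hop and then controlling each hop's contribution by the number of cross-label neighbors it contains. First I would recall from Eq.~\eqref{eq:M_HO_attn_allocate} of Theorem~\ref{thm:MHOAttnrecursive} that the effective attention mass on the $k$-hop neighborhood is $\hat{\alpha}_k$, spread uniformly over the $|\mathcal{N}_u^k|$ nodes of that hop (the Within-Hop Indistinguishability property). Writing $\hat{\mathbf{A}} \mathbf{H}^0 = \sum_{i=0}^L \hat{\alpha}_i \frac{1}{|\mathcal{N}_u^i|}\mathbf{H}_u^i \cdot |\mathcal{N}_u^i| = \sum_{i} \hat{\alpha}_i \mathbf{H}_u^i$ — i.e.\ the aggregated representation is a weighted sum of the raw hop-sums $\mathbf{H}_u^i$ with coefficients $\hat{\alpha}_i$. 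Since $\sum_i \hat{\alpha}_i = 1$ (the $\hat\alpha_i$ inherit normalization from the $\alpha_i$ via the $\mathbf{M}^{\text{HO}}$ row structure; this should be verified as a preliminary step using Theorem~\ref{thm:MHOrecursive}), we can write $\mathbf{H}_u^0 - \hat{\mathbf{A}}\mathbf{H}^0 = \sum_{i=0}^L \hat{\alpha}_i (\mathbf{H}_u^0 - \mathbf{H}_u^i)$, so by the triangle inequality
\begin{equation*}
\| \mathbf{H}_u^0 - \hat{\mathbf{A}} \mathbf{H}^0 \|_F \leq \sum_{i=0}^L \hat{\alpha}_i \, \| \mathbf{H}_u^0 - \mathbf{H}_u^i \|_F .
\end{equation*}

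Next I would bound each term $\| \mathbf{H}_u^0 - \mathbf{H}_u^i \|_F$. The idea is that the node feature $\mathbf{x}_u$ is close (via a Lipschitz-type assumption on the feature map with respect to labels, hence the constant $L$) to the features of same-label nodes in $\mathcal{N}_u^i$, while the $(1-C_u^i)|\mathcal{N}_u^i|$ cross-label nodes each contribute a bounded discrepancy. Concretely, split $\mathbf{H}_u^i = \sum_{v\in\mathcal{N}_u^i}\mathbf{x}_v$ into the same-label part and the different-label part; the same-label contributions are each within a controlled distance of $\mathbf{x}_u$ and (after appropriate normalization/centering assumptions on features) can be made to cancel against a corresponding share of $\mathbf{H}_u^0$, leaving only the $(1-C_u^i)|\mathcal{N}_u^i|$ cross-label terms, each bounded by a constant absorbed into $L$. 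This yields $\| \mathbf{H}_u^0 - \mathbf{H}_u^i \|_F \le \sqrt{2}\, L\, |\mathcal{N}_u^i|(1-C_u^i)$ — the $\sqrt{2}$ coming from bounding a difference of two unit-ish vectors, $\|\mathbf{a}-\mathbf{b}\|\le\sqrt{2}$ when $\|\mathbf{a}\|,\|\mathbf{b}\|\le 1$. Finally I would combine: since each term is bounded as above and there are $L{+}1$ hops, a crude bound $\hat\alpha_i\|\mathbf{H}_u^0-\mathbf{H}_u^i\|_F \le \sqrt 2 L\,\hat\alpha_i|\mathcal{N}_u^i|(1-C_u^i)$ summed over $i$, together with pulling out an overall factor of $L$ to account for the summation length (or, alternatively, the per-hop Lipschitz accumulation across $i$ hops of message passing), gives exactly Eq.~\eqref{eq:HO_smooth}.

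The main obstacle I anticipate is making the per-hop bound $\| \mathbf{H}_u^0 - \mathbf{H}_u^i \|_F \lesssim L\,|\mathcal{N}_u^i|(1-C_u^i)$ precise: this requires a clean assumption relating feature distances to label agreement (so that same-label nodes genuinely cancel and only the $(1-C_u^i)$ fraction survives), plus care about the normalization of $\mathbf{x}_u$ versus $\mathbf{H}_u^0 = \mathbf{x}_u$ — note $\mathbf{H}_u^0$ is a single feature whereas $\mathbf{H}_u^i$ is a sum of $|\mathcal{N}_u^i|$ features, so the "difference" must be interpreted with the attention-induced averaging in mind rather than literally; I expect the paper states a boundedness/Lipschitz hypothesis on features (hidden in "$L$ is a Lipschitz constant") that resolves this. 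A secondary subtlety is the parity restriction $i\equiv k \bmod 2$ in $\hat\alpha_k$ versus the full sum over $i=0,\dots,L$ in the bound, which only matters for checking $\sum_i\hat\alpha_i=1$ and is handled by the recursion in Theorem~\ref{thm:MHOrecursive}; everything else is triangle inequality and bookkeeping.
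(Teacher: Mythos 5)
Your overall shape (triangle inequality, then a Lipschitz-with-respect-to-labels assumption, then counting cross-label neighbors) matches the paper, but the decomposition you choose is at the wrong granularity and the middle step fails. The paper does not bound $\|\mathbf{H}_u^0-\mathbf{H}_u^i\|_F$ hop by hop. It writes $\hat{\mathbf{A}}\mathbf{H}^0=\sum_{i}\hat\alpha_i\sum_{v\in\mathcal{N}_u^i}\mathbf{H}_v^0$ and applies the triangle inequality \emph{per node}, obtaining $\sum_i\hat\alpha_i\sum_{v\in\mathcal{N}_u^i}\|\mathbf{H}_u^0-\mathbf{H}_v^0\|_F$, where each $\mathbf{H}_v^0=\mathbf{x}_v$ is a single feature vector. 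The Lipschitz assumption $\|\mathbf{H}_u^0-\mathbf{H}_v^0\|_F\le L\|\mathbf{y}_u-\mathbf{y}_v\|_F$ then gives $0$ for same-label pairs and $\sqrt{2}L$ for cross-label pairs (the $\sqrt{2}$ is the distance between two distinct one-hot label vectors, not a bound on feature differences as you guessed), and counting the $|\mathcal{N}_u^i|(1-C_u^i)$ cross-label neighbors finishes the proof. No cancellation argument is needed.

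Your hop-level inequality $\|\mathbf{H}_u^0-\mathbf{H}_u^i\|_F\le\sqrt{2}L\,|\mathcal{N}_u^i|(1-C_u^i)$ is false as stated: take a perfectly homophilic hop with $C_u^i=1$, so the right-hand side is $0$, yet $\mathbf{H}_u^i$ is a sum of $|\mathcal{N}_u^i|$ feature vectors and $\mathbf{H}_u^0$ a single one, so the left-hand side is generically of order $(|\mathcal{N}_u^i|-1)\|\mathbf{x}_u\|$ even when all features coincide. You flag this mismatch yourself but propose resolving it with centering assumptions and same-label cancellation, which is not what the hypothesis "$L$ is a Lipschitz constant" provides. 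Relatedly, the normalization you invoke is wrong: since $\hat\alpha_k$ is the effective attention per node of $\mathcal{N}_u^k$ and each HO token is an average, the identity inherited from $\sum_i\alpha_i=1$ is $\sum_i\hat\alpha_i|\mathcal{N}_u^i|=1$, not $\sum_i\hat\alpha_i=1$. With the correct normalization one writes $\mathbf{H}_u^0-\hat{\mathbf{A}}\mathbf{H}^0=\sum_i\hat\alpha_i\sum_{v\in\mathcal{N}_u^i}(\mathbf{x}_u-\mathbf{x}_v)$, and the node-level argument goes through immediately; your worry about the parity restriction is then irrelevant. Finally, the factor $L$ in the bound is solely the Lipschitz constant (notationally overloaded with the hop count); it is not "pulled out to account for the summation length."
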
 
The proof is given in Appendix~\ref{appendix:MHOBound}. The bound clarifies how the hop-overpriority problem of HO interacts with graph homophily to influence the performance. On homophilic graphs, where $C_u^i$ is uniformly high even for near hops, the bound remains small, indicating the performance is not severely affected. %This holds despite within-hop indistinguishability, as the high $C_u^i$ ensures that most aggregated neighbors are the same label with $u$.  
However, heterophilic graphs exhibit low $C_u^i$ for odd hops, and $C_u^i$ increases with hop (i.e., $C_u^{i+2} > C_u^i$), but the effective attention $\hat{\alpha}_i$ decreases with hop. This leads to a critical mismatch: the hops with low $C_u^i$ receive high $\hat{\alpha}_i$, while hops with high $C_u^i$ are allocated low attention. Consequently, the bound grows significantly, limiting the ability of the models to learn meaningful representations, which aligns with our empirical results.
\section{Method}\label{sec:method}
In this section, we introduce our method in details. The overall framework is shown in Figure~\ref{fig:pipeline}. \MethodName~s a simple-yet-effective plug-and-play module that can be easily compatible with various tokenized graph learning methods such as different variants of Graph LLMs and Graph Transformers. 
\begin{figure}[t]
  \centering
  \includegraphics[width=1.0\textwidth]{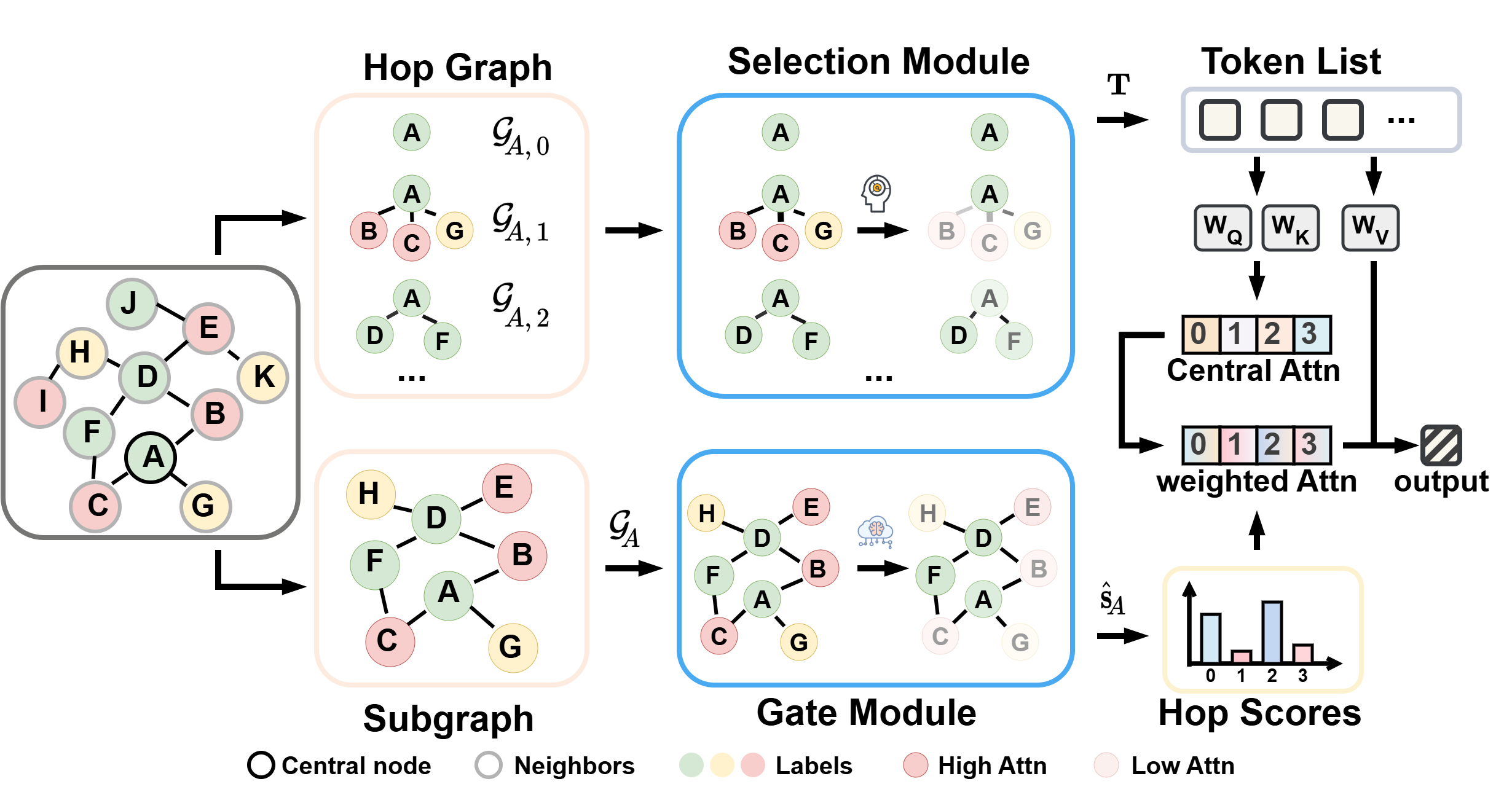}
    % \fbox{\rule[-.5cm]{0cm}{4cm} \rule[-.5cm]{4cm}{0cm}}
    \vspace{-0.25cm}
    \caption{The overall framework of \MethodName, including a \textit{gate module} which learns hop scores from the central node's subgraph to rebalance attention and mitigate hop-overpriority problem, and a \textit{selection module} which constructs hop subgraphs, computes within-hop node attention, and aggregates features into tokens. These tokens form a list input to TGLMs; raw attention scores are adjusted by hop weights to produce task-adaptive representations for homophilic and heterophilic graphs.}
    \label{fig:pipeline}
    \vspace{-0.5cm}
\end{figure}
\subsection{\MethodName: Learnable Token List for Tokenized Graph Learning Models}
Inspired by our preliminary experiments and theoretical analysis, %highlighting the critical need to distinguish within-hop nodes and mitigate Hop-Overpriority Problem,
we propose \MethodName, a learnable token list framework which flexibly adjusts the priority of hops and handles the features of nodes from different hops independently to focus on task-relevant signals. Specifically, \MethodName~adaptively allocates the attention in token lists by integrating the gate module and the selection module, enabling adaptive focus on task-relevant nodes across both homophilic and heterophilic datasets.

Given the total number of hops \( L \) and the size of neighbor sampling \( n_{i} \) (where \( i \in \{1, 2, \dots, L\} \)), %which control the depth and width of the token list, ensuring a fixed-length input for Tokenized Graph Learning Models. As shown in Figure~\ref{fig:pipeline}, for the central node \( u \), 
we adopt a gate module to flexibly assign scores to each hop.
The gate module processes the subgraph \( \mathcal{G}_u \) of the central node \( u \) and learns context-aware embeddings leveraging a Graph Attention Network (GAT)~\citep{velivckovic2017graph}. To derive hop-specific weights, we first use the embedding of node $u$ from the gate module as raw scores \( \mathbf{s}_u^{\text{raw}} \in \mathbb{R}^{L+1} \), which corresponds to the importance of different hops of neighbors. The scores are normalized through the softmax function to obtain the weights:  
\begin{equation}
    \hat{\mathbf{s}}_u = \text{Softmax}(\mathbf{s}_u^{\text{raw}}),
\end{equation}
where \( \hat{\mathbf{s}}_{u,i} \) denotes the weight assigned to the \( i \)-th hop. Intuitively, higher \( \hat{\mathbf{s}}_{u,i} \) indicates the $i$-th hop is more important for the current tasks, which adaptively mitigates the hop-overpriority problem by re-balancing attention across the hops.

After obtaining the weights, we next construct the token list. For the $i$-th hop, \MethodName~constructs a hop token \( \mathbf{T}_i \) by aggregating the features of the nodes in \( \mathcal{N}_u^i \). Specifically, for the 0-th hop, the token is simply the raw feature of node, i.e., \( \mathbf{T}_0 = \mathbf{x}_u \). For the \( i \)-th hop (\( i \geq 1 \)), \MethodName~samples \( n_{i} \) nodes from \( \mathcal{N}_u^i \), appends a self-loop edge, and forms a subgraph \( \mathcal{G}_u^i \) with \( n_{i} + 1 \) edges. A GAT layer is adopted as the selection module to process \( \mathcal{G}_u^i \) and calculate the attention scores \( \beta_{u,i,v} \) between \( u \) and each $v$ sampled from \(\mathcal{G}_u^i \). The \( i \)-th hop token is then obtained by weighted aggregation:  
\begin{equation}
    \mathbf{T}_i = \sum \nolimits_{v \in \mathcal{G}_u^i} \beta_{u,i,v} \cdot \mathbf{x}_v,
\end{equation}
where \( \sum_{v \in \mathcal{G}_u^i} \beta_{u,i,v} = 1 \).  Then, the token list \( \mathbf{T} = [\mathbf{T}_0, \mathbf{T}_1, \dots, \mathbf{T}_L] \) is inputted into TGLMs to obtain the output of value matrix $\mathbf{T}\mathbf{W}_V$ and calculate the attention scores \( \alpha_u = [\alpha_{u,0}, \alpha_{u,1}, \dots, \alpha_{u,L}] \) of the central node $u$, where \( \alpha_{u,i} \) reflects the focus on the \( i \)-th hop token for node $u$. To align the attention with the scores we obtain from the gate module, \MethodName~adjusts \( \alpha_u \) using \( \hat{\mathbf{s}}_u \):  
\begin{equation}
    \hat{\alpha}_{u,i} = \frac{\alpha_{u,i} \cdot \hat{\mathbf{s}}_{u,i}}{\langle \alpha_u \cdot \hat{\mathbf{s}}_u \rangle},
\end{equation}
where \( \hat{\alpha}_{u,i} \) is the adjusted attention score for the $i$-th hop. The final node representation is obtained by aggregating the tokens with the adjusted scores:  
\begin{equation}
\mathbf{z}_u = \sum \nolimits_{i=0}^L \hat{\alpha}_{u,i} \cdot \mathbf{T}_i\mathbf{W}_V,
\end{equation}
In sum, by adaptively weighting hops via \( \hat{\mathbf{s}}_u \) and refining the aggregation process of features via the selection module, \MethodName~adapts to homophilic and heterophilic graphs, ensuring that the task-relevant nodes hold a leading position of the aggregation process. %We provide the theoretical analysis of \MethodName~in Appendix~\ref{method_theoretical}.

\subsection{Theoreical Analysis}
% To verify the design of \MethodName, we present theoretical properties that explain its adaptability and superiority over predefined token lists. For the adaptive attention, \MethodName~refines the attention by aligning it with hop-specific importance. Specifically, \MethodName~ can effectively learn the global information and the adjusted attention score for $v \in \mathcal{G}_u^i$ is:  
% \begin{equation}
%     \hat{\beta}_{u,i,v} = \frac{\alpha_{u,i} \cdot \hat{\mathbf{s}}_{u,i}}{\langle \alpha_u \cdot \hat{\mathbf{s}}_u \rangle} \cdot \beta_{u,i,v}
% \end{equation}
% Proof of this aggregation mechanism is provided in Appendix~\ref{adaptive_attention_aggregation}. Then we have the following Theorem~\ref{thm:Generalization}:

To verify the design of \MethodName, we present theoretical properties that explain its adaptability and generality over predefined token lists. A key strength of \MethodName~is its flexibility to accommodate various tokenization strategies by adjusting its learnable components. Specifically, \MethodName~can recover the behavior of existing predefined templates (e.g., HO and ND) through parameter specialization, highlighting its ability to generalize prior approaches. We formalize the following theorem:

\begin{theorem}\label{thm:Generalization}
\MethodName~generalizes pre-defined token lists HO and ND as special cases.
\end{theorem}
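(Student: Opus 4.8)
The plan is to establish Theorem~\ref{thm:Generalization} by exhibiting, for each predefined template, an explicit choice of the learnable parameters in \MethodName\ (the gate-module output $\hat{\mathbf{s}}_u$, the selection-module attention weights $\beta_{u,i,v}$, and the neighbor sample sizes $n_i$) under which \MethodName's token list and final representation coincide with those produced by the template. Since \MethodName\ is built from GAT layers with softmax-normalized attention, the first observation is that these modules are universal enough to realize \emph{uniform} attention as a limiting/degenerate case: by driving the GAT attention logits to a constant (e.g.\ zeroing the relevant weight vector), the selection module outputs $\beta_{u,i,v} = \frac{1}{|\mathcal{G}_u^i|}$ for all sampled $v$, and similarly the gate module can output any prescribed $\mathbf{s}_u^{\text{raw}}$, hence any target distribution $\hat{\mathbf{s}}_u$ after softmax. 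I would state this as a short lemma-style remark at the start of the proof, so the two cases below reduce to bookkeeping.

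For the \textbf{HO case}: set the sample size $n_i$ equal to the full degree (so $\mathcal{G}_u^i$ contains all of $\mathcal{N}_u^i$ plus the self-loop), and choose the selection-module weights so that $\mathbf{T}_i = \frac{1}{|\mathcal{N}_u|}\sum_{v\in\mathcal{N}_u}\mathbf{T}_{v,i-1}$ — here I would need to be slightly careful, because \MethodName\ as written aggregates the \emph{raw features} $\mathbf{x}_v$ of $i$-hop neighbors directly, whereas HO recursively averages. The cleanest route is to note (via Eq.~\eqref{eq:M_HO_define} and Theorem~\ref{thm:MHOrecursive}) that $\mathbf{T}_{u,k}^{\text{HO}}$ is a fixed linear combination $\sum_{i=0}^k \mathbf{M}_{k,i}^{\text{HO}}\mathbf{H}_u^i$ of hop-feature sums, so a single \MethodName\ hop-token $\mathbf{T}_i \propto \mathbf{H}_u^i$ (attainable with uniform $\beta$) spans the same space; then a linear reindexing of the token list, absorbed into $\mathbf{W}_V$ and the attention scores, reproduces the HO tokens exactly. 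Then I would show that setting $\hat{\mathbf{s}}_u$ to be uniform makes the attention-adjustment step $\hat{\alpha}_{u,i} = \alpha_{u,i}$, so the downstream transformer sees precisely the HO token list and behaves identically. For the \textbf{ND case}: keep the per-hop sample sizes $n_i$ as in ND, and instead of aggregating within each hop, choose the selection-module attention to be a one-hot (delta) distribution on a single sampled node — again realizable as a softmax limit — so that each token is an individual sampled node feature, recovering ND's computational-tree token sequence; uniform $\hat{\mathbf{s}}_u$ again neutralizes the gate. I would remark that \MethodName's gate/selection modules act as no-ops in this regime, which is exactly the sense in which ND is a special case.

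The main obstacle I anticipate is the mismatch between \MethodName's \emph{flat} per-hop aggregation of raw features and HO's \emph{recursive} averaging: one must argue that this is only a reparametrization and not a genuine loss of expressivity. I would handle this by leaning on Theorem~\ref{thm:MHOrecursive}, which gives the exact coefficients $\mathbf{M}_{k,i}^{\text{HO}}$ converting between the two bases, and by absorbing the (constant, data-independent) change of basis into the transformer's value projection $\mathbf{W}_V$ and the attention coefficients $\alpha_u$ — both of which are free parameters of the backbone TGLM, so no expressivity is lost. A secondary subtlety is that softmax attention never \emph{exactly} equals a uniform or one-hot distribution; I would either phrase the theorem as ``recovers in the limit / up to arbitrarily small error'' or, more cleanly, note that \MethodName's modules can be taken to include the degenerate (zero-logit or scaled-logit) configurations, so the equality is exact. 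The rest is routine verification that, with these parameter choices, each defining equation of \MethodName\ collapses to the corresponding defining equation of HO or ND.
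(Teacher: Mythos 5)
Your HO argument is workable and is close in spirit to the paper's, but the mechanism differs in a way worth flagging, and your ND argument has a genuine gap. The paper does not try to make \MethodName's token list literally equal to the template's token list. Instead it works at the level of the \emph{effective per-node attention} in the final aggregated representation: by Theorem~\ref{thm:MHOAttnrecursive}, HO ultimately allocates attention $\hat{\alpha}_k$ to each node in $\mathcal{N}_u^k$, while \MethodName\ allocates $\hat{\alpha}_{u,k}\cdot\beta_{u,k,v}$ to node $v$ in hop $k$; setting $\mathcal{G}_u^i=\mathcal{N}_u^i$, $\beta_{u,k,v}=1/|\mathcal{N}_u^k|$, and choosing the gate scores so that $\hat{\alpha}_{u,k}\beta_{u,k,v}=\hat{\alpha}_k$ makes the two aggregated outputs coincide, with $\mathbf{W}_V$ and the backbone untouched. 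Your route for HO instead reproduces the token list itself up to a change of basis absorbed into $\mathbf{W}_V$ and $\alpha$; this reaches the same conclusion but quietly modifies the backbone's value projection, which weakens the ``plug-and-play replacement of the token list'' reading of the theorem. The paper's choice of matching the effective attention while keeping $\mathbf{W}_V$ fixed is the cleaner formalization, and you already have the needed ingredient (the $\mathbf{M}^{\text{HO}}_{i,k}$ coefficients) in hand.

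The ND case is where your proposal breaks. ND's token list is a computational tree with $L'=\sum_{i=0}^{L}n^i$ tokens, whereas \MethodName\ emits only $L+1$ hop tokens; driving the selection softmax to a one-hot distribution gives you a single representative node per hop, i.e.\ $L+1$ node tokens, which cannot reproduce the tree (nor the multiplicities $\mathbf{M}^{\text{ND}}_{i,k}$ with which each $k$-hop neighbor recurs across layers). The resolution, as in Appendix~\ref{relationship_with_ND}, is again to match effective attention rather than tokens: ND's aggregation collapses to per-node weights $\phi_{L,k}\cdot\alpha_{u,v}$ with $\phi_{L,k}=\sum_{i\equiv k\,(2)}\mathbf{M}^{\text{ND}}_{i,k}$, and \MethodName\ recovers exactly this by taking $\beta_{u,k,v}$ equal to ND's normalized within-hop attention and the adjusted hop attention equal to $\phi_{L,k}$. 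Without this reformulation your ND argument does not go through. Your observation that softmax attention only attains uniform or degenerate distributions in a limit is a fair technical caveat that applies to the paper's construction as well.
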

\begin{proof}
For HO, the attention to \( k \)-th hop nodes is \( \hat{\alpha}_k = \sum_{i=k, \, i \equiv k \, \text{mod} \, 2}^L \alpha_i \mathbf{M}_{i,k}^{\text{HO}} \), where \( \mathbf{M}_{i,k}^{\text{HO}} \) is the hop contribution matrix. For \MethodName, setting $\mathcal{G}_u^i=\mathcal{N}_u^i$, \( \beta_{u,k,v} = \frac{1}{|\mathcal{N}_u^k|} \) ,and $\hat{\alpha}_{u,k} = \frac{\hat{\alpha}_k}{\beta_{u,k,v}}$ leads to 
\begin{equation}
s_{u,k}\propto \frac{\hat{\alpha}_k}{\beta_{u,k,v}\cdot \alpha_{u,i}},
\end{equation}
This can recover the attention of HO. Thus, HO is a special case of \MethodName~with uniform within-hop attention and fixed gate weights. 
The proof of ND is provided in Appendix~\ref{relationship_with_ND}.\end{proof}
% For ND, ND scales the attention of the hops by $\phi_{L,k} = \sum_{i=k, i \equiv k \, \text{mod} \, 2}^L \mathbf{M}^{\mathbf{ND}}_{i,k}$, where \( \mathbf{M}^{\mathbf{ND}}_{i,k} \) is the hop matrix of ND. Therefore, \( \hat{\beta}_{u,k,v}^{\mathbf{ND}} = \phi_{L,k} \cdot \beta_{u, k, v}^{\mathbf{ND}} \). 
% For OUS, setting $\mathbf{G}_u^i=\mathbf{N}_u^i$, $\beta_{u,k,v}=\beta_{u, k, v}^{\mathbf{ND}}$, \( \hat{\alpha}_{u,k} = \phi_{L,k} \) recovers the attention pattern of ND, i.e., 
% \begin{equation}
% s_{u,k}\propto \frac{\phi_{L,k}}{\alpha_{u,i}}
% \end{equation}
% Therefore, ND is also a special case of \MethodName. 

Furthermore, to quantitatively analyze how our method tackles the hop-overpriority problem, %adopt the Frobenius norm \( \|\mathbf{H}_u^0 - \hat{\mathbf{A}} \mathbf{H}^0\|_F \) as the metric~\citep{shuman2013emerging, kalofolias2016learn}, where \( \hat{\mathbf{A}} \) is the attention matrix derived from \MethodName. A smaller bound indicates smoother, task-relevant aggregation.  
%Let \( C_{u}^{i} \) denote the ratio of nodes in \( \mathcal{G}_u^i \) sharing the label of \( u \), and \( |\mathcal{G}_u^i| \) the sample size of \( i \)-th hop. 
we re-examine the bound in Theorem~\ref{thm:MHOBound}. 
\begin{theorem}
The norm of \MethodName~is bounded by:  
\begin{equation}
\|\mathbf{H}_u^0 - \hat{\mathbf{A}} \mathbf{H}^0\|_F \leq \sqrt{2}L \cdot \frac{\sum_{i=0}^{L} \hat{\mathbf{s}}_{u,i} |\mathcal{G}_u^i| (1 - C_{u}^{i})}{\sum_{i=0}^{L} |\mathcal{G}_u^i| (1 - C_{u}^{i}) + \frac{\eta_u}{\gamma_u} \sum_{i=0}^{L} |\mathcal{G}_u^i| C_{u}^{i}},
\end{equation}
where $\gamma_u = \mathbb{E}_{v \in \mathbf{N}_u^i, \mathbf{y}_u = \mathbf{y}_v} \exp\left(\frac{\mathbf{q}_u \mathbf{k}_v^\top}{\sqrt{h}}\right)$, and $\eta_u = \mathbb{E}_{v \in \mathbf{N}_u^i, \mathbf{y}_u \neq \mathbf{y}_v} \exp\left(\frac{\mathbf{q}_u \mathbf{k}_v^\top}{\sqrt{h}}\right)$ are constants.    
\end{theorem}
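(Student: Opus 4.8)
The plan is to mirror the derivation of Theorem~\ref{thm:MHOBound} but to track the effect of replacing the fixed HO contribution weights $\hat\alpha_i$ with the learnable gate weights $\hat{\mathbf{s}}_{u,i}$ and the learnable within-hop attention $\beta_{u,i,v}$. First I would expand $\|\mathbf{H}_u^0 - \hat{\mathbf{A}}\mathbf{H}^0\|_F$ hop-by-hop, as in the proof of Theorem~\ref{thm:MHOBound}, writing the difference as a sum over hops $i=0,\dots,L$ of the mismatch between node $u$'s feature and the attention-weighted aggregation of $\mathcal{G}_u^i$. Using the Lipschitz assumption on the feature map and the fact that only the $1-C_u^i$ fraction of neighbors in $\mathcal{G}_u^i$ carry a different label (and hence contribute to the discrepancy), the numerator becomes $\sqrt2 L\sum_{i=0}^L \hat{\mathbf{s}}_{u,i}|\mathcal{G}_u^i|(1-C_u^i)$, exactly paralleling Eq.~\eqref{eq:HO_smooth} with $\hat\alpha_i$ replaced by the adaptive $\hat{\mathbf{s}}_{u,i}$ and $|\mathcal{N}_u^i|$ replaced by the sampled $|\mathcal{G}_u^i|$.

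The new ingredient is the denominator, which comes from the normalization $\hat\alpha_{u,i} = \alpha_{u,i}\hat{\mathbf{s}}_{u,i}/\langle\alpha_u\cdot\hat{\mathbf{s}}_u\rangle$ together with the softmax form of the transformer attention $\alpha_{u,i}\propto \exp(\mathbf q_u\mathbf k_v^\top/\sqrt h)$. Here I would split the exponentiated logits according to whether the aggregated node shares $u$'s label, introduce the two constants $\gamma_u = \mathbb{E}_{v\in\mathbf N_u^i,\mathbf y_u=\mathbf y_v}\exp(\mathbf q_u\mathbf k_v^\top/\sqrt h)$ and $\eta_u = \mathbb{E}_{v\in\mathbf N_u^i,\mathbf y_u\neq\mathbf y_v}\exp(\mathbf q_u\mathbf k_v^\top/\sqrt h)$, and observe that the (within-hop and cross-hop) normalizing mass is, in expectation, $\gamma_u\sum_i|\mathcal{G}_u^i|C_u^i + \eta_u\sum_i|\mathcal{G}_u^i|(1-C_u^i)$. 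Dividing numerator and denominator by $\eta_u$ yields the stated factor $1 + \tfrac{\gamma_u}{\eta_u}\cdot\tfrac{\sum_i|\mathcal{G}_u^i|C_u^i}{\sum_i|\mathcal{G}_u^i|(1-C_u^i)}$ in the denominator — note the roles of $\gamma_u,\eta_u$ are swapped relative to a naive reading, so I would be careful to state the bound with $\eta_u/\gamma_u$ multiplying the $C_u^i$ sum exactly as written, and double-check the direction of the inequality (we want an upper bound, so the normalizer must be lower-bounded, which requires the within-hop weights $\beta_{u,i,v}$ to be controlled — this is where uniformity or a worst-case bound on $\beta$ enters).

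Combining the two pieces gives the claimed inequality. The main obstacle I anticipate is handling the interaction between the two levels of attention rigorously: $\beta_{u,i,v}$ acts \emph{inside} each hop token $\mathbf T_i$ before the transformer attention $\alpha_{u,i}$ acts \emph{across} hops, so the effective per-node weight is a product $\hat\alpha_{u,i}\beta_{u,i,v}$, and pushing the Lipschitz bound through this composition while keeping the denominator clean requires either an expectation argument over the sampled subgraphs $\mathcal{G}_u^i$ or a convexity/Jensen step to replace per-node logits by their hop-averaged surrogates $\gamma_u,\eta_u$. A secondary subtlety is that $|\mathcal{G}_u^i| = n_i+1$ counts the self-loop, so the $i=0$ term and the self-contribution in each hop must be accounted for consistently in both numerator and denominator; I would absorb these into the constants or note that they only tighten the bound. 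Once the expectation over sampling is taken and the two exponential regimes are separated, the rest is the same Frobenius-norm triangle-inequality bookkeeping as in Appendix~\ref{appendix:MHOBound}.
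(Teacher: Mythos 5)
Your proposal follows essentially the same route as the paper's Appendix~\ref{smooth_bound_of_our_method}: the numerator comes from the triangle inequality plus the Lipschitz/one-hot-label argument of Theorem~\ref{thm:MHOBound} with $\hat{\alpha}_i$ replaced by $\hat{\mathbf{s}}_{u,i}$, and the denominator comes from splitting the softmax normalizing mass over all nodes into the label-conditional expectations $\gamma_u$ and $\eta_u$. The only small imprecision is that in the paper this normalizer is attached to the node-level attention $\beta_{u,i,v}$ (treated as globally normalized, exactly as $\alpha_{u,v}$ is in the ND bound) rather than to the hop-level transformer scores $\alpha_{u,i}$; and the $\gamma_u$/$\eta_u$ role reversal you flag is indeed present in the paper's own derivation, so your caution there is warranted rather than a gap in your argument.
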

Detailed analysis and proofs are in Appendix~\ref{smooth_bound_of_our_method}. This theorem shows that \MethodName~minimizes the error by assigning higher \( \hat{\mathbf{s}}_{u,i} \) to hops with higher \( C_{u}^{i} \), which is critical for heterophilic graphs, where predefined token lists pay more attention to inconsistent hops according to Theorem~\ref{thm:MHOBound}.

\section{Experiments}\label{sec:experiments}
In this section, we conduct experiments to answer the following research questions. \textbf{Q1}: Is \MethodName~capable of augmenting existing LLMs for graphs for text-attributed graphs? \textbf{Q2}: Is \MethodName~effective in enhancing graph Transformers without texts? \textbf{Q3}: How does each component contribute to \MethodName?
\subsection{Results on Text-attributed Graphs}\label{sec:exp-tags}
To answer \textbf{Q1}, we first conduct experiments on text-attributed graphs.

\textbf{Experimental Setups}: we choose LLaGA~\cite{chen2024llaga}, a representative Graph LLM, as our backbone. Specifically, we replace the token list of LLaGA with \MethodName~and keep other parts unchanged. Besides comparing LLaGA with different token lists, i.e., HO/HD, we also compare \MethodName~with four additional baselines, including two classical GNNs, GCN~\citep{kipf2016semi} and GAT~\citep{velivckovic2017graph}, and one GNN for heterophilic graphs, H2GCN~\citep{zhu2020beyond}, and one representative Graph Transformer, NodeFormer~\citep{wu2022NodeFormer}. For datasets, we follow~\citep{chen2024llaga} and use two homophilic datasets, Cora and PubMed. For heterophilic datasets, we use Cornell, Texas, Wisconsin, and Actor by collecting the texts of all nodes and relations between them. More details are provided in Appendix~\ref{appendix:datasets_llaga}. 

\textbf{Results}: Table~\ref{main-llaga} presents the results for text-attributed graphs. LLaGA-\MethodName~consistently outperforms both classical GNNs and LLaGA variants with original token lists across node classification and link prediction tasks. For node classification, LLaGA-\MethodName~achieves the best accuracy on all six datasets, with an average improvement of 10.39\% over LLaGA-HO and LLaGA-ND. In particular, its superiority is more pronounced on heterophilic datasets. For link prediction, LLaGA-\MethodName~also leads, achieving an average gain of ~4.67\% over the second-best baseline across all six datasets. This consistent improvement underscores its capability to better align attention with task-relevant edges.

\begin{table}[t]
  \caption{The results on six text-attributed graphs with LLaGA as the backbone, where \textbf{bold} signifies the best result and {\ul underline} highlights the second best result.}
  \label{main-llaga}
  % \centering
  %\renewcommand{\arraystretch}{2} % 设置每行高度为原始高度的1.5倍
  %\large
  \resizebox{\textwidth}{!}{%

\begin{tabular}{c|c|cccccc}
\toprule
Task                                                                           & Model      & Cora           & PubMed         & Cornell        & Texas          & Wisconsin      & Actor          \\ \midrule
\multirow{7}{*}{\begin{tabular}[c]{@{}c@{}}Node\\ Classification\end{tabular}} & GCN & 88.93\scalebox{0.75}{±0.12} & 92.96\scalebox{0.75}{±0.15} & 40.00\scalebox{0.75}{±3.12} & 56.13\scalebox{0.75}{±2.89} & 45.83\scalebox{0.75}{±3.15} & 70.57\scalebox{0.75}{±0.34} \\
& GAT & 88.97\scalebox{0.75}{±0.14} & 92.33\scalebox{0.75}{±0.18} & 36.67\scalebox{0.75}{±4.13} & 56.77\scalebox{0.75}{±2.24} & 43.75\scalebox{0.75}{±3.60} & 69.11\scalebox{0.75}{±0.36} \\
& H2GCN & 88.82\scalebox{0.75}{±0.11} & 93.61\scalebox{0.75}{±0.13} & {\ul 58.67\scalebox{0.75}{±2.28}} & {\ul 82.58\scalebox{0.75}{±0.79}} & {\ul 69.58\scalebox{0.75}{±1.95}} & 74.62\scalebox{0.75}{±0.40} \\
& NodeFormer & 88.23\scalebox{0.75}{±0.17} & 94.90\scalebox{0.75}{±0.19} & 55.33\scalebox{0.75}{±3.34} & 81.29\scalebox{0.75}{±1.25} & 65.83\scalebox{0.75}{±2.62} & 76.23\scalebox{0.75}{±0.42} \\ \cmidrule{2-8}
& LLaGA-HO & {\ul 89.22\scalebox{0.75}{±0.10}} & {\ul 95.03\scalebox{0.75}{±0.12}} & 42.67\scalebox{0.75}{±4.38} & 63.23\scalebox{0.75}{±2.97} & 49.58\scalebox{0.75}{±3.74} & 77.05\scalebox{0.75}{±0.41} \\
& LLaGA-ND & 88.86\scalebox{0.75}{±0.13} & {\ul 95.03\scalebox{0.75}{±0.14}} & 46.67\scalebox{0.75}{±4.38} & 74.19\scalebox{0.75}{±1.91} & 50.83\scalebox{0.75}{±3.60} & {\ul 77.34\scalebox{0.75}{±0.39}}  \\ \cmidrule{2-8}
& LLaGA-\MethodName~& \textbf{89.30\scalebox{0.75}{±0.09}} & \textbf{95.18\scalebox{0.75}{±0.11}} & \textbf{64.67\scalebox{0.75}{±1.21}} & \textbf{90.32\scalebox{0.75}{±0.68}} & \textbf{77.08\scalebox{0.75}{±0.79}} & \textbf{79.04\scalebox{0.75}{±0.37}} \\ \midrule
\multirow{7}{*}{\begin{tabular}[c]{@{}c@{}}Link\\ Prediction\end{tabular}} & GCN & 81.24\scalebox{0.75}{±0.21} & 90.50\scalebox{0.75}{±0.23} & 66.52\scalebox{0.75}{±2.29} & {\ul 74.00\scalebox{0.75}{±1.78}} & 71.49\scalebox{0.75}{±1.57} & 74.55\scalebox{0.75}{±0.44} \\
& GAT & 79.68\scalebox{0.75}{±0.23} & 88.67\scalebox{0.75}{±0.25} & 65.22\scalebox{0.75}{±2.36} & 69.20\scalebox{0.75}{±1.86} & {\ul 73.26\scalebox{0.75}{±1.65}} & 74.82\scalebox{0.75}{±0.46} \\
& H2GCN & 80.24\scalebox{0.75}{±0.20} & 88.03\scalebox{0.75}{±0.22} & {\ul 70.43\scalebox{0.75}{±1.32}} & 72.80\scalebox{0.75}{±1.81} & 72.56\scalebox{0.75}{±1.60} & 75.12\scalebox{0.75}{±0.45} \\
& NodeFormer & 78.12\scalebox{0.75}{±0.24} & 79.38\scalebox{0.75}{±0.26} & 57.83\scalebox{0.75}{±2.40} & 64.40\scalebox{0.75}{±1.93} & 68.14\scalebox{0.75}{±1.71} & 64.62\scalebox{0.75}{±0.47} \\ \cmidrule{2-8}
& LLaGA-HO & 81.17\scalebox{0.75}{±0.19} & 89.72\scalebox{0.75}{±0.21} & 62.17\scalebox{0.75}{±1.39} & 71.40\scalebox{0.75}{±1.92} & 65.00\scalebox{0.75}{±1.70} & 86.23\scalebox{0.75}{±0.42} \\
& LLaGA-ND & {\ul 82.29\scalebox{0.75}{±0.18}} & {\ul 91.31\scalebox{0.75}{±0.20}} & 63.04\scalebox{0.75}{±1.35} & 71.60\scalebox{0.75}{±1.89} & 64.44\scalebox{0.75}{±1.67} & {\ul 86.44\scalebox{0.75}{±0.41}} \\ \cmidrule{2-8}
& LLaGA-\MethodName~& \textbf{83.82\scalebox{0.75}{±0.16}} & \textbf{91.84\scalebox{0.75}{±0.18}} & \textbf{71.30\scalebox{0.75}{±0.74}} & \textbf{76.80\scalebox{0.75}{±0.72}} & \textbf{73.95\scalebox{0.75}{±0.93}} & \textbf{89.48\scalebox{0.75}{±0.38}} \\ \bottomrule
\end{tabular}
}
\vspace{-0.7em}
\end{table}

\subsection{Results on Benchmarks for Graph Transformers}
To answer \textbf{Q2}, we further conduct experiments on benchmarks without text for Graph Transformers.

\textbf{Experimental Setups}: we choose two classical Graph Transformers for evaluation: NAGphormer~\citep{chen2022nagphormer}, and VCR-Graphormer~\citep{fu2024vcr}. Similar to Graph LLMs, we replace the token list of NAGphormer and VCR-Graphormer with \MethodName~and keep other parts unchanged. Other baselines are the same as Section~\ref{sec:exp-tags}. For datasets, we follow NAGphormer and VCR-Graphormer by using PubMed~\citep{namata2012query}, Computers, and Photo~\citep{shchur2018pitfalls} with numerical node features. Additionally, we adopt four heterophilic datasets (Cornell, Texas, Wisconsin, and Actor~\citep{pei2020geom}). More details are provided in Appendix~\ref{appendix:datasets_gt}.

\begin{table}[t]
  \caption{The results on seven graph benchmark datasets without using text with NAGphormer and VCR-Graphormer as the backbone, where \textbf{bold} signifies the best result.}
  \label{main-gt}
  % \centering
  %\renewcommand{\arraystretch}{2.0} % 设置每行高度为原始高度的1.5倍
  \resizebox{\textwidth}{!}{%

\begin{tabular}{c|ccccccc}
\toprule
Model               & PubMed         & Computers      & Photo          & Cornell        & Texas          & Wisconsin      & Actor          \\ \midrule
GCN & 86.54\scalebox{0.75}{±0.21} & 89.65\scalebox{0.75}{±0.24} & 92.70\scalebox{0.75}{±0.27} & 47.37\scalebox{0.75}{±3.25} & 52.63\scalebox{0.75}{±2.74} & 54.09\scalebox{0.75}{±2.52} & 29.87\scalebox{0.75}{±1.03} \\
GAT & 86.32\scalebox{0.75}{±0.23} & 90.78\scalebox{0.75}{±0.26} & 93.87\scalebox{0.75}{±0.29} & 44.74\scalebox{0.75}{±4.31} & 55.26\scalebox{0.75}{±2.82} & 52.94\scalebox{0.75}{±3.60} & 29.08\scalebox{0.75}{±1.41} \\
H2GCN & 88.49\scalebox{0.75}{±0.19} & 89.86\scalebox{0.75}{±0.22} & 94.95\scalebox{0.75}{±0.25} & 63.16\scalebox{0.75}{±2.28} & 65.79\scalebox{0.75}{±1.79} & 66.67\scalebox{0.75}{±1.55} & 34.74\scalebox{0.75}{±0.69} \\
NodeFormer & 88.89\scalebox{0.75}{±0.20} & 90.96\scalebox{0.75}{±0.23} & 95.02\scalebox{0.75}{±0.26} & 60.53\scalebox{0.75}{±2.34} & 65.79\scalebox{0.75}{±1.85} & 62.75\scalebox{0.75}{±2.62} & 34.87\scalebox{0.75}{±0.80} \\ \midrule
NAGphormer & 89.55\scalebox{0.75}{±0.18} & 91.22\scalebox{0.75}{±0.21} & 95.49\scalebox{0.75}{±0.24} & 55.26\scalebox{0.75}{±1.38} & 63.16\scalebox{0.75}{±1.91} & 62.75\scalebox{0.75}{±1.68} & 34.61\scalebox{0.75}{±0.67} \\
NAGphormer-\MethodName~& \textbf{90.11\scalebox{0.75}{±0.10}} & \textbf{91.78\scalebox{0.75}{±0.12}} & \textbf{96.01\scalebox{0.75}{±0.15}} & \textbf{65.79\scalebox{0.75}{±1.21}} & \textbf{73.68\scalebox{0.75}{±1.68}} & \textbf{81.57\scalebox{0.75}{±1.49}} & \textbf{36.84\scalebox{0.75}{±0.51}} \\ \midrule
VCR-Graphormer & 88.82\scalebox{0.75}{±0.17} & 90.51\scalebox{0.75}{±0.20} & 95.53\scalebox{0.75}{±0.23} & 52.63\scalebox{0.75}{±1.29} & 65.79\scalebox{0.75}{±1.78} & 60.78\scalebox{0.75}{±1.57} & 35.59\scalebox{0.75}{±0.36} \\
VCR-Graphormer-\MethodName~& \textbf{89.45\scalebox{0.75}{±0.12}} & \textbf{91.13\scalebox{0.75}{±0.14}} & \textbf{95.82\scalebox{0.75}{±0.17}} & \textbf{68.42\scalebox{0.75}{±1.24}} & \textbf{73.68\scalebox{0.75}{±1.72}} & \textbf{79.22\scalebox{0.75}{±1.53}} & \textbf{38.03\scalebox{0.75}{±0.42}} \\ \bottomrule
\end{tabular}
}
\vspace{-1.25em}
\end{table}

\textbf{Results}: The results on benchmarks for Graph Transformers are shown in Table~\ref{main-gt}. When integrating \MethodName~into NAGphormer and VCR-Graphormer, both models exhibit significant improvements over their original versions and classical baselines. For instance, NAGphormer-\MethodName~outperforms the original NAGphormer by 0.55\% average on homophilic datasets, while achieving 10.53\% average on heterophilic datasets. Similarly, VCR-Graphormer-\MethodName~shows a 11.14\% average improvement on heterophilic datasets, with top performance on all seven benchmarks. These results confirm the effectiveness of \MethodName~in enhancing TGLMs, particularly under heterophily, aligning with our analysis of the hop-overpriority problem.

% Table~\ref{main-llaga} and Table~\ref{main-gt} report the experimental results. We can observe that: (1) In all evaluated models, \MethodName~consistently outperforms both the classical baselines and the original predefined token lists. Specifically, compared to LLaGA-HO and LLaGA-ND, LLaGA-\MethodName~achieves an average improvement of $10.39\%$ for the accuracy of all datasets. The superiority extends to the link prediction tasks, reflecting \MethodName's ability to better align attention with task-relevant edges. (2) The performance gap between \MethodName~and the original token lists widens with the heterophily of the datasets. For example, on the four heterophilic datasets, the average improvement of \MethodName~over the original models is significantly larger than on homophilic datasets. For NAGphormer and VCR-Graphormer, \MethodName~achieves average gains of 10.53\% and 11.14\% on heterophilic datasets, respectively. The results align with our theoretical analysis of Hop-Overpriority Problem.

\subsection{Ablation Studies \& Analysis}\label{sec:ablation_studies_analysis}
To answer \text{Q3}, next we conduct ablation studies and detailed analyses for each component. 

\textbf{Ablation Studies.}
First, we carry out ablation studies to evaluate the gate module and the selection module of \MethodName. Specifically, we compare two variants: ``w/o gate'' denotes removing the gate module by setting same weights for all hops. ``w/o selection'' indicates removing the selection module by letting the neighbors of the central node share the same attention score. The results are shown in Table~\ref{ablation}. We can observe that: (1) ``w/o gate'' underperforms \MethodName~on all datasets, demonstrating its effectiveness in focusing on task-relevant hops; (2) ``w/o selection'' also lags behind \MethodName, indicating the effectiveness of the selection module in distinguishing critical within-hop nodes; (3) the performance decrease is more severe on heterophilic datasets, indicating the indispensable role of hop-level (gate) and within-hop (selection) mechanisms in adapting to heterophilic graphs.

\begin{table}[t]
  \caption{The results of ablation studies for the gate and selection module of \MethodName.}
  \label{ablation}
  \centering
  \vspace{-0.8em}
  \resizebox{0.65\textwidth}{!}{%
\begin{tabular}{c|cccccc}
\midrule
Model                              & \begin{tabular}[c]{@{}c@{}}LLaGA\\ Cora\end{tabular} & \begin{tabular}[c]{@{}c@{}}LLaGA\\ Texas\end{tabular} & \begin{tabular}[c]{@{}c@{}}NAG\\ PubMed\end{tabular} & \begin{tabular}[c]{@{}c@{}}NAG\\ Actor\end{tabular} & \begin{tabular}[c]{@{}c@{}}VCR\\ PubMed\end{tabular} & \begin{tabular}[c]{@{}c@{}}VCR\\ Actor\end{tabular} \\ \midrule
w/o gate                           & {\ul 89.14}                                                 & {\ul 80.65}                                                  & 89.40                                                 & {\ul 35.66}                                                & {\ul 89.02}                                                 & {\ul 36.83}                                                \\
\multicolumn{1}{l|}{w/o selection} & 89.11                                                 & 70.97                                                  & {\ul 89.58}                                                 & 35.07                                                & 88.89                                                 & 35.99                                                \\ 
\MethodName(full)                              & \textbf{89.30}                                        & \textbf{90.32}                                         & \textbf{90.11}                                        & \textbf{36.84}                                       & \textbf{89.45}                                        & \textbf{38.03}    \\ \bottomrule
\end{tabular}}
\vspace{-0.4cm}
\end{table}

\textbf{Analysis of the gate module.}
Next, %to comprehensively analyze the effectiveness of the gate module, 
we analyze the scores the gate module assigns to different hops for a more fine-grained analysis. Intuitively, hops with higher node-homophily indicate nodes with the same label and thus the gate module should allocate higher attention scores. As shown in Figure~\ref{fig:gate}, on homophilic graphs, the gate module allocates the highest scores to ``hop 1'', followed by ``hop 2'', while on heterophilic graphs, the gate module instead focuses more on ``hop 2'' and the central node. The results indicate that the gate module assign larger weights to hops with higher task-relevance.%, further validating its effectiveness.

\begin{figure*}[t]
    \centering
    \subfigure[Homophilic Graphs]
    {
        \begin{minipage}[b]{0.4\textwidth}
            \centering
            \includegraphics[width=\linewidth]{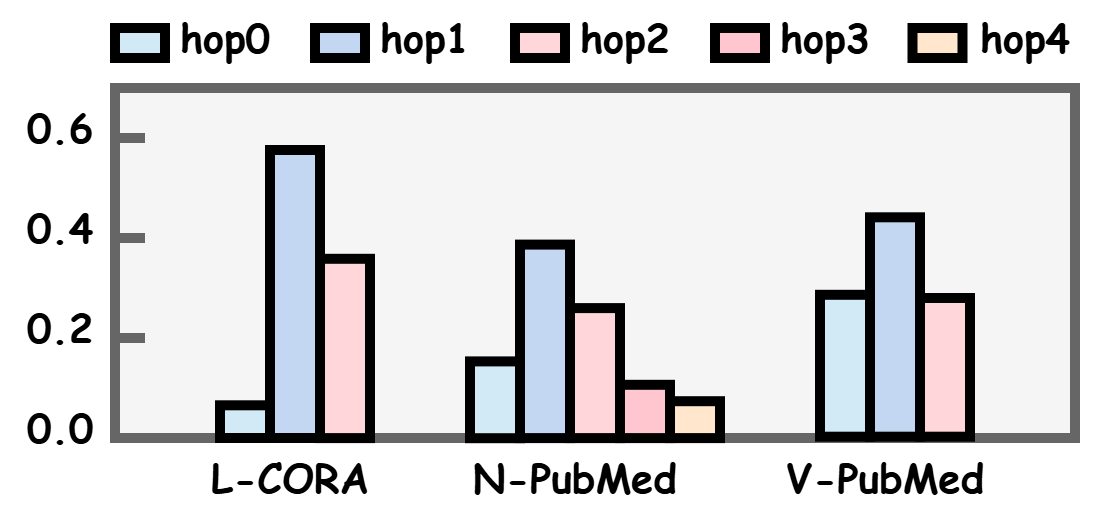}
        \end{minipage}
    }
    \subfigure[Heterophilic Graphs]
    {
        \begin{minipage}[b]{0.4\textwidth}
            \centering
            \includegraphics[width=\linewidth]{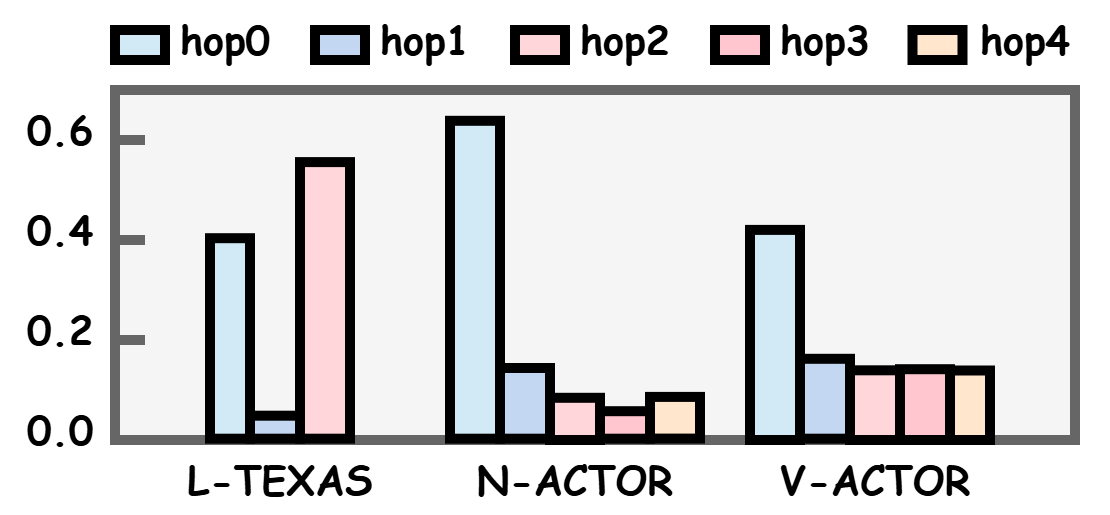}
        \end{minipage}
    }
\vspace{-0.3cm}
\caption{The analysis of the score by the gate module vs. the number of hops. ``L'', ``N'', and ``V'' indicates abbreviation for LLaGA, NAGphormer, and VCR-Graphormer, respectively.}
\label{fig:gate}
\vspace{-0.5cm}
\end{figure*}

\begin{wrapfigure}{r}{0.55\textwidth}
\vspace{-1.75em}
  \centering
  \includegraphics[width=0.4\textwidth]{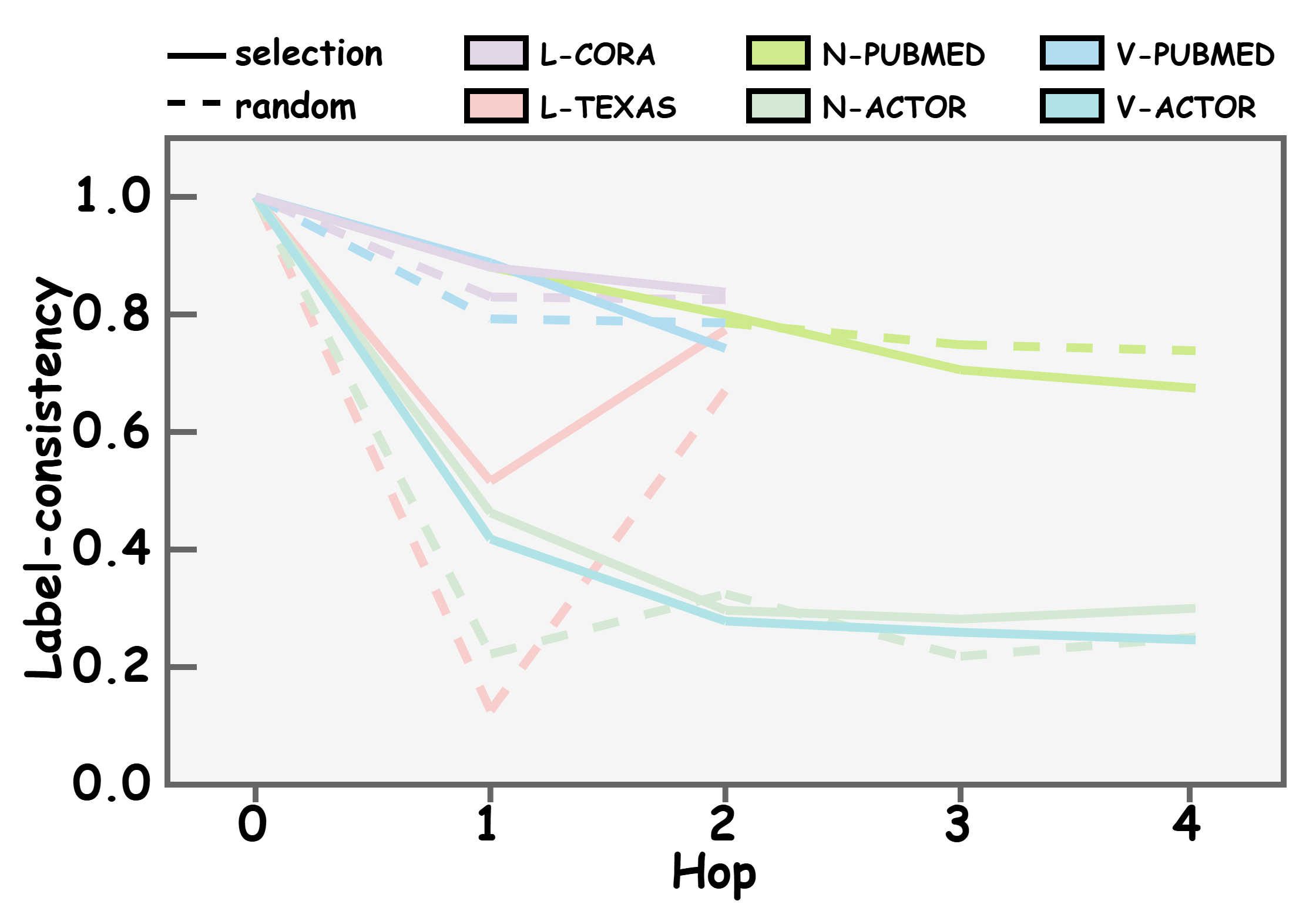}
    % \fbox{\rule[-.5cm]{0cm}{4cm} \rule[-.5cm]{4cm}{0cm}}
    \vspace{-0.4cm}
    \caption{The label-consistency of the selection module and node-homophily of hops. ``L'', ``N'', and ``V'' indicates abbreviation for LLaGA, NAGphormer, and VCR-Graphormer, respectively.} %Node-homophily for "N-ACTOR" and "V-ACTOR" is the same.}
    \label{fig:selection}
    \vspace{-1.0em}
\end{wrapfigure}

\textbf{Analysis of the selection module.} Lastly, we aim to analyze whether the selection module can select nodes with the same label as the central nodes. Therefore, we analyze the label-consistency of nodes with the highest attention scores within each hop. As a reference line, we compare with randomly selecting nodes from each hop. As shown in Figure~\ref{fig:selection}, the label-consistency of the selected nodes exceeds the random baseline, indicating that the selection module effectively identifies and prioritizes critical nodes. We provide more experiments and analysis of \MethodName~in Appendix~\ref{more_analysis}. %"selection" means the rate which the labels of the "selected" nodes and the central nodes are the same, while "dataset" means the node-homophily of the hops.  In most cases, 

\vspace{-0.15cm}
\section{Conclusion}\label{sec:conclusion}
\vspace{-0.2cm}
In this paper, we first identify the hop-overpriority problem for predefined token lists in TGLMs.% through both empirical and theoretical analyzes. 
Then, we propose Learnable Graph Token List (\MethodName), an adaptive framework that adjusts hop weights and prioritizes informative nodes within and across hops, %enabling TGLMs to contextually balance local and global signals and 
enhancing the adaptability on both homophilic and heterophilic graphs. We also theoretically show that \MethodName~can effective address the hop-overpriority problem. %We further contribute a standardized suite of heterophilic graph-text datasets for GLLMs to support rigorous evaluation of TGLMs. 
Experiments across diverse TGLM backbones demonstrate that \MethodName~consistently improves performance and mitigate the hop-overpriority problem.

{\small
\bibliographystyle{unsrtnat}
\bibliography{ref}
}

{
\small
}

\section*{NeurIPS Paper Checklist}

%%% BEGIN INSTRUCTIONS %%%
The checklist is designed to encourage best practices for responsible machine learning research, addressing issues of reproducibility, transparency, research ethics, and societal impact. Do not remove the checklist: {\bf The papers not including the checklist will be desk rejected.} The checklist should follow the references and follow the (optional) supplemental material.  The checklist does NOT count towards the page limit. 

Please read the checklist guidelines carefully for information on how to answer these questions. For each question in the checklist:
\begin{itemize}
    \item You should answer \answerYes{}, \answerNo{}, or \answerNA{}.
    \item \answerNA{} means either that the question is Not Applicable for that particular paper or the relevant information is Not Available.
    \item Please provide a short (1–2 sentence) justification right after your answer (even for NA). 
   % \item {\bf The papers not including the checklist will be desk rejected.}
\end{itemize}

{\bf The checklist answers are an integral part of your paper submission.} They are visible to the reviewers, area chairs, senior area chairs, and ethics reviewers. You will be asked to also include it (after eventual revisions) with the final version of your paper, and its final version will be published with the paper.

The reviewers of your paper will be asked to use the checklist as one of the factors in their evaluation. While "\answerYes{}" is generally preferable to "\answerNo{}", it is perfectly acceptable to answer "\answerNo{}" provided a proper justification is given (e.g., "error bars are not reported because it would be too computationally expensive" or "we were unable to find the license for the dataset we used"). In general, answering "\answerNo{}" or "\answerNA{}" is not grounds for rejection. While the questions are phrased in a binary way, we acknowledge that the true answer is often more nuanced, so please just use your best judgment and write a justification to elaborate. All supporting evidence can appear either in the main paper or the supplemental material, provided in appendix. If you answer \answerYes{} to a question, in the justification please point to the section(s) where related material for the question can be found.

%%% END INSTRUCTIONS %%%

\begin{enumerate}

\item {\bf Claims}
    \item[] Question: Do the main claims made in the abstract and introduction accurately reflect the paper's contributions and scope?
    \item[] Answer: \answerYes{} % Replace by \answerYes{}, \answerNo{}, or \answerNA{}.
    \item[] Justification: The abstract and introduction accurately reflect the paper's contributions and scope. %The abstract clearly identifies the limitation of existing Tokenized Graph Learning Models (TGLMs) relying on predefined token lists, introduces the "Hop-Overpriority Problem" as a critical flaw, and presents the proposed Learnable Graph Token List as a solution. It also highlights key contributions: discovering the problem, designing \MethodName, and compiling heterophilic datasets.
    \item[] Guidelines:
    \begin{itemize}
        \item The answer NA means that the abstract and introduction do not include the claims made in the paper.
        \item The abstract and/or introduction should clearly state the claims made, including the contributions made in the paper and important assumptions and limitations. A No or NA answer to this question will not be perceived well by the reviewers. 
        \item The claims made should match theoretical and experimental results, and reflect how much the results can be expected to generalize to other settings. 
        \item It is fine to include aspirational goals as motivation as long as it is clear that these goals are not attained by the paper. 
    \end{itemize}

\item {\bf Limitations}
    \item[] Question: Does the paper discuss the limitations of the work performed by the authors?
    \item[] Answer: \answerYes{} % Replace by \answerYes{}, \answerNo{}, or \answerNA{}.
    \item[] Justification: We provide detailed discussions about limitations in Appendix~\ref{limitations}.
    \item[] Guidelines:
    \begin{itemize}
        \item The answer NA means that the paper has no limitation while the answer No means that the paper has limitations, but those are not discussed in the paper. 
        \item The authors are encouraged to create a separate "Limitations" section in their paper.
        \item The paper should point out any strong assumptions and how robust the results are to violations of these assumptions (e.g., independence assumptions, noiseless settings, model well-specification, asymptotic approximations only holding locally). The authors should reflect on how these assumptions might be violated in practice and what the implications would be.
        \item The authors should reflect on the scope of the claims made, e.g., if the approach was only tested on a few datasets or with a few runs. In general, empirical results often depend on implicit assumptions, which should be articulated.
        \item The authors should reflect on the factors that influence the performance of the approach. For example, a facial recognition algorithm may perform poorly when image resolution is low or images are taken in low lighting. Or a speech-to-text system might not be used reliably to provide closed captions for online lectures because it fails to handle technical jargon.
        \item The authors should discuss the computational efficiency of the proposed algorithms and how they scale with dataset size.
        \item If applicable, the authors should discuss possible limitations of their approach to address problems of privacy and fairness.
        \item While the authors might fear that complete honesty about limitations might be used by reviewers as grounds for rejection, a worse outcome might be that reviewers discover limitations that aren't acknowledged in the paper. The authors should use their best judgment and recognize that individual actions in favor of transparency play an important role in developing norms that preserve the integrity of the community. Reviewers will be specifically instructed to not penalize honesty concerning limitations.
    \end{itemize}

\item {\bf Theory assumptions and proofs}
    \item[] Question: For each theoretical result, does the paper provide the full set of assumptions and a complete (and correct) proof?
    \item[] Answer: \answerYes{} % Replace by \answerYes{}, \answerNo{}, or \answerNA{}.
    \item[] Justification: We provide complete theory assumptions and proofs of pre-defined token list in Appendix~\ref{appendix:proof} and Appendix~\ref{appendix:proof_of_ND}. Furthermore, we provide the  theory assumptions and proofs of \MethodName~in Appendix~\ref{method_theoretical}.
    \item[] Guidelines:
    \begin{itemize}
        \item The answer NA means that the paper does not include theoretical results. 
        \item All the theorems, formulas, and proofs in the paper should be numbered and cross-referenced.
        \item All assumptions should be clearly stated or referenced in the statement of any theorems.
        \item The proofs can either appear in the main paper or the supplemental material, but if they appear in the supplemental material, the authors are encouraged to provide a short proof sketch to provide intuition. 
        \item Inversely, any informal proof provided in the core of the paper should be complemented by formal proofs provided in appendix or supplemental material.
        \item Theorems and Lemmas that the proof relies upon should be properly referenced. 
    \end{itemize}

    \item {\bf Experimental result reproducibility}
    \item[] Question: Does the paper fully disclose all the information needed to reproduce the main experimental results of the paper to the extent that it affects the main claims and/or conclusions of the paper (regardless of whether the code and data are provided or not)?
    \item[] Answer: \answerYes{} % Replace by \answerYes{}, \answerNo{}, or \answerNA{}.
    \item[] Justification: Yes, we disclose all the information needed to reproduce the main experimental results of the paper in Appendix~\ref{environment}.
    \item[] Guidelines:
    \begin{itemize}
        \item The answer NA means that the paper does not include experiments.
        \item If the paper includes experiments, a No answer to this question will not be perceived well by the reviewers: Making the paper reproducible is important, regardless of whether the code and data are provided or not.
        \item If the contribution is a dataset and/or model, the authors should describe the steps taken to make their results reproducible or verifiable. 
        \item Depending on the contribution, reproducibility can be accomplished in various ways. For example, if the contribution is a novel architecture, describing the architecture fully might suffice, or if the contribution is a specific model and empirical evaluation, it may be necessary to either make it possible for others to replicate the model with the same dataset, or provide access to the model. In general. releasing code and data is often one good way to accomplish this, but reproducibility can also be provided via detailed instructions for how to replicate the results, access to a hosted model (e.g., in the case of a large language model), releasing of a model checkpoint, or other means that are appropriate to the research performed.
        \item While NeurIPS does not require releasing code, the conference does require all submissions to provide some reasonable avenue for reproducibility, which may depend on the nature of the contribution. For example
        \begin{enumerate}
            \item If the contribution is primarily a new algorithm, the paper should make it clear how to reproduce that algorithm.
            \item If the contribution is primarily a new model architecture, the paper should describe the architecture clearly and fully.
            \item If the contribution is a new model (e.g., a large language model), then there should either be a way to access this model for reproducing the results or a way to reproduce the model (e.g., with an open-source dataset or instructions for how to construct the dataset).
            \item We recognize that reproducibility may be tricky in some cases, in which case authors are welcome to describe the particular way they provide for reproducibility. In the case of closed-source models, it may be that access to the model is limited in some way (e.g., to registered users), but it should be possible for other researchers to have some path to reproducing or verifying the results.
        \end{enumerate}
    \end{itemize}

\item {\bf Open access to data and code}
    \item[] Question: Does the paper provide open access to the data and code, with sufficient instructions to faithfully reproduce the main experimental results, as described in supplemental material?
    \item[] Answer: \answerYes{} % Replace by \answerYes{}, \answerNo{}, or \answerNA{}.
    \item[] Justification: Yes, we provide our code in the supplemental material.
    \item[] Guidelines:
    \begin{itemize}
        \item The answer NA means that paper does not include experiments requiring code.
        \item Please see the NeurIPS code and data submission guidelines (\url{https://nips.cc/public/guides/CodeSubmissionPolicy}) for more details.
        \item While we encourage the release of code and data, we understand that this might not be possible, so “No” is an acceptable answer. Papers cannot be rejected simply for not including code, unless this is central to the contribution (e.g., for a new open-source benchmark).
        \item The instructions should contain the exact command and environment needed to run to reproduce the results. See the NeurIPS code and data submission guidelines (\url{https://nips.cc/public/guides/CodeSubmissionPolicy}) for more details.
        \item The authors should provide instructions on data access and preparation, including how to access the raw data, preprocessed data, intermediate data, and generated data, etc.
        \item The authors should provide scripts to reproduce all experimental results for the new proposed method and baselines. If only a subset of experiments are reproducible, they should state which ones are omitted from the script and why.
        \item At submission time, to preserve anonymity, the authors should release anonymized versions (if applicable).
        \item Providing as much information as possible in supplemental material (appended to the paper) is recommended, but including URLs to data and code is permitted.
    \end{itemize}

\item {\bf Experimental setting/details}
    \item[] Question: Does the paper specify all the training and test details (e.g., data splits, hyperparameters, how they were chosen, type of optimizer, etc.) necessary to understand the results?
    \item[] Answer: \answerYes{} % Replace by \answerYes{}, \answerNo{}, or \answerNA{}.
    \item[] Justification: Yes, all the training and test details are provided in Section~\ref{sec:experiments} and Appendix~\ref{appendix:Datasets}.
    \item[] Guidelines:
    \begin{itemize}
        \item The answer NA means that the paper does not include experiments.
        \item The experimental setting should be presented in the core of the paper to a level of detail that is necessary to appreciate the results and make sense of them.
        \item The full details can be provided either with the code, in appendix, or as supplemental material.
    \end{itemize}

\item {\bf Experiment statistical significance}
    \item[] Question: Does the paper report error bars suitably and correctly defined or other appropriate information about the statistical significance of the experiments?
    \item[] Answer: \answerYes{} % Replace by \answerYes{}, \answerNo{}, or \answerNA{}.
    \item[] Justification: Yes, our main experimental results in Table~\ref{main-llaga} and Table~\ref{main-gt} report error bars.
    \item[] Guidelines:
    \begin{itemize}
        \item The answer NA means that the paper does not include experiments.
        \item The authors should answer "Yes" if the results are accompanied by error bars, confidence intervals, or statistical significance tests, at least for the experiments that support the main claims of the paper.
        \item The factors of variability that the error bars are capturing should be clearly stated (for example, train/test split, initialization, random drawing of some parameter, or overall run with given experimental conditions).
        \item The method for calculating the error bars should be explained (closed form formula, call to a library function, bootstrap, etc.)
        \item The assumptions made should be given (e.g., Normally distributed errors).
        \item It should be clear whether the error bar is the standard deviation or the standard error of the mean.
        \item It is OK to report 1-sigma error bars, but one should state it. The authors should preferably report a 2-sigma error bar than state that they have a 96\% CI, if the hypothesis of Normality of errors is not verified.
        \item For asymmetric distributions, the authors should be careful not to show in tables or figures symmetric error bars that would yield results that are out of range (e.g. negative error rates).
        \item If error bars are reported in tables or plots, The authors should explain in the text how they were calculated and reference the corresponding figures or tables in the text.
    \end{itemize}

\item {\bf Experiments compute resources}
    \item[] Question: For each experiment, does the paper provide sufficient information on the computer resources (type of compute workers, memory, time of execution) needed to reproduce the experiments?
    \item[] Answer: \answerYes{} % Replace by \answerYes{}, \answerNo{}, or \answerNA{}.
    \item[] Justification: Sufficient information on the computer resources is provided in Appendix~\ref{environment}.
    \item[] Guidelines:
    \begin{itemize}
        \item The answer NA means that the paper does not include experiments.
        \item The paper should indicate the type of compute workers CPU or GPU, internal cluster, or cloud provider, including relevant memory and storage.
        \item The paper should provide the amount of compute required for each of the individual experimental runs as well as estimate the total compute. 
        \item The paper should disclose whether the full research project required more compute than the experiments reported in the paper (e.g., preliminary or failed experiments that didn't make it into the paper). 
    \end{itemize}
    
\item {\bf Code of ethics}
    \item[] Question: Does the research conducted in the paper conform, in every respect, with the NeurIPS Code of Ethics \url{https://neurips.cc/public/EthicsGuidelines}?
    \item[] Answer: \answerYes{} % Replace by \answerYes{}, \answerNo{}, or \answerNA{}.
    \item[] Justification: Yes, we have read the NeurIPS Code of Ethics and our paper conforms, in every respect, with the NeurIPS Code of Ethics.
    \item[] Guidelines:
    \begin{itemize}
        \item The answer NA means that the authors have not reviewed the NeurIPS Code of Ethics.
        \item If the authors answer No, they should explain the special circumstances that require a deviation from the Code of Ethics.
        \item The authors should make sure to preserve anonymity (e.g., if there is a special consideration due to laws or regulations in their jurisdiction).
    \end{itemize}

\item {\bf Broader impacts}
    \item[] Question: Does the paper discuss both potential positive societal impacts and negative societal impacts of the work performed?
    \item[] Answer: \answerYes{} % Replace by \answerYes{}, \answerNo{}, or \answerNA{}.
    \item[] Justification: We discuss both potential positive societal impacts and negative societal impacts of the work in Appendix~\ref{impacts}.
    \item[] Guidelines:
    \begin{itemize}
        \item The answer NA means that there is no societal impact of the work performed.
        \item If the authors answer NA or No, they should explain why their work has no societal impact or why the paper does not address societal impact.
        \item Examples of negative societal impacts include potential malicious or unintended uses (e.g., disinformation, generating fake profiles, surveillance), fairness considerations (e.g., deployment of technologies that could make decisions that unfairly impact specific groups), privacy considerations, and security considerations.
        \item The conference expects that many papers will be foundational research and not tied to particular applications, let alone deployments. However, if there is a direct path to any negative applications, the authors should point it out. For example, it is legitimate to point out that an improvement in the quality of generative models could be used to generate deepfakes for disinformation. On the other hand, it is not needed to point out that a generic algorithm for optimizing neural networks could enable people to train models that generate Deepfakes faster.
        \item The authors should consider possible harms that could arise when the technology is being used as intended and functioning correctly, harms that could arise when the technology is being used as intended but gives incorrect results, and harms following from (intentional or unintentional) misuse of the technology.
        \item If there are negative societal impacts, the authors could also discuss possible mitigation strategies (e.g., gated release of models, providing defenses in addition to attacks, mechanisms for monitoring misuse, mechanisms to monitor how a system learns from feedback over time, improving the efficiency and accessibility of ML).
    \end{itemize}
    
\item {\bf Safeguards}
    \item[] Question: Does the paper describe safeguards that have been put in place for responsible release of data or models that have a high risk for misuse (e.g., pretrained language models, image generators, or scraped datasets)?
    \item[] Answer: \answerNA{} % Replace by \answerYes{}, \answerNo{}, or \answerNA{}.
    \item[] Justification: The paper poses no such risks. %The proposed \MethodName~module is a plug-and-play component without direct generative or decision-making capabilities. Given the non-sensitive nature of the datasets and the methodological focus of the work (not targeting high-risk applications like surveillance or disinformation), no additional safeguards were deemed necessary.
    \item[] Guidelines:
    \begin{itemize}
        \item The answer NA means that the paper poses no such risks.
        \item Released models that have a high risk for misuse or dual-use should be released with necessary safeguards to allow for controlled use of the model, for example by requiring that users adhere to usage guidelines or restrictions to access the model or implementing safety filters. 
        \item Datasets that have been scraped from the Internet could pose safety risks. The authors should describe how they avoided releasing unsafe images.
        \item We recognize that providing effective safeguards is challenging, and many papers do not require this, but we encourage authors to take this into account and make a best faith effort.
    \end{itemize}

\item {\bf Licenses for existing assets}
    \item[] Question: Are the creators or original owners of assets (e.g., code, data, models), used in the paper, properly credited and are the license and terms of use explicitly mentioned and properly respected?
    \item[] Answer: \answerYes{} % Replace by \answerYes{}, \answerNo{}, or \answerNA{}.
    \item[] Justification: We have cited the original paper that produced the code package or dataset. %We were unable to find the license for the assets we used.
    \item[] Guidelines:
    \begin{itemize}
        \item The answer NA means that the paper does not use existing assets.
        \item The authors should cite the original paper that produced the code package or dataset.
        \item The authors should state which version of the asset is used and, if possible, include a URL.
        \item The name of the license (e.g., CC-BY 4.0) should be included for each asset.
        \item For scraped data from a particular source (e.g., website), the copyright and terms of service of that source should be provided.
        \item If assets are released, the license, copyright information, and terms of use in the package should be provided. For popular datasets, \url{paperswithcode.com/datasets} has curated licenses for some datasets. Their licensing guide can help determine the license of a dataset.
        \item For existing datasets that are re-packaged, both the original license and the license of the derived asset (if it has changed) should be provided.
        \item If this information is not available online, the authors are encouraged to reach out to the asset's creators.
    \end{itemize}

\item {\bf New assets}
    \item[] Question: Are new assets introduced in the paper well documented and is the documentation provided alongside the assets?
    \item[] Answer: \answerNA{} % Replace by \answerYes{}, \answerNo{}, or \answerNA{}.
    \item[] Justification: The paper does not release new assets.
    \item[] Guidelines:
    \begin{itemize}
        \item The answer NA means that the paper does not release new assets.
        \item Researchers should communicate the details of the dataset/code/model as part of their submissions via structured templates. This includes details about training, license, limitations, etc. 
        \item The paper should discuss whether and how consent was obtained from people whose asset is used.
        \item At submission time, remember to anonymize your assets (if applicable). You can either create an anonymized URL or include an anonymized zip file.
    \end{itemize}

\item {\bf Crowdsourcing and research with human subjects}
    \item[] Question: For crowdsourcing experiments and research with human subjects, does the paper include the full text of instructions given to participants and screenshots, if applicable, as well as details about compensation (if any)? 
    \item[] Answer: \answerNA{} % Replace by \answerYes{}, \answerNo{}, or \answerNA{}.
    \item[] Justification: The paper does not involve crowdsourcing nor research with human subjects.
    \item[] Guidelines:
    \begin{itemize}
        \item The answer NA means that the paper does not involve crowdsourcing nor research with human subjects.
        \item Including this information in the supplemental material is fine, but if the main contribution of the paper involves human subjects, then as much detail as possible should be included in the main paper. 
        \item According to the NeurIPS Code of Ethics, workers involved in data collection, curation, or other labor should be paid at least the minimum wage in the country of the data collector. 
    \end{itemize}

\item {\bf Institutional review board (IRB) approvals or equivalent for research with human subjects}
    \item[] Question: Does the paper describe potential risks incurred by study participants, whether such risks were disclosed to the subjects, and whether Institutional Review Board (IRB) approvals (or an equivalent approval/review based on the requirements of your country or institution) were obtained?
    \item[] Answer: \answerNA{} % Replace by \answerYes{}, \answerNo{}, or \answerNA{}.
    \item[] Justification: The paper does not involve crowdsourcing nor research with human subjects.
    \item[] Guidelines:
    \begin{itemize}
        \item The answer NA means that the paper does not involve crowdsourcing nor research with human subjects.
        \item Depending on the country in which research is conducted, IRB approval (or equivalent) may be required for any human subjects research. If you obtained IRB approval, you should clearly state this in the paper. 
        \item We recognize that the procedures for this may vary significantly between institutions and locations, and we expect authors to adhere to the NeurIPS Code of Ethics and the guidelines for their institution. 
        \item For initial submissions, do not include any information that would break anonymity (if applicable), such as the institution conducting the review.
    \end{itemize}

\item {\bf Declaration of LLM usage}
    \item[] Question: Does the paper describe the usage of LLMs if it is an important, original, or non-standard component of the core methods in this research? Note that if the LLM is used only for writing, editing, or formatting purposes and does not impact the core methodology, scientific rigorousness, or originality of the research, declaration is not required.
    %this research? 
    \item[] Answer: \answerNA{} % Replace by \answerYes{}, \answerNo{}, or \answerNA{}.
    \item[] Justification: The paper does not use LLMs as an important, original, or non-standard component. % of its core methodology. The LLM for graphs (e.g., LLaGA) is referenced as an existing Tokenized Graph Learning Model in experiments to validate the proposed Learnable Graph Token List (\MethodName). \MethodName~itself is a plug-and-play module designed to enhance TGLMs. LLMs are not involved in the development or implementation of methodology, nor do they impact the scientific rigor or originality of the research.
    \item[] Guidelines:
    \begin{itemize}
        \item The answer NA means that the core method development in this research does not involve LLMs as any important, original, or non-standard components.
        \item Please refer to our LLM policy (\url{https://neurips.cc/Conferences/2025/LLM}) for what should or should not be described.
    \end{itemize}

\end{enumerate}

%%%%%%%%%%%%%%%%%%%%%%%%%%%%%%%%%%%%%%%%%%%%%%%%%%%%%%%%%%%%%%%%%%%%%%%%%%%%%%%
%%%%%%%%%%%%%%%%%%%%%%%%%%%%%%%%%%%%%%%%%%%%%%%%%%%%%%%%%%%%%%%%%%%%%%%%%%%%%%%
% APPENDIX
%%%%%%%%%%%%%%%%%%%%%%%%%%%%%%%%%%%%%%%%%%%%%%%%%%%%%%%%%%%%%%%%%%%%%%%%%%%%%%%
%%%%%%%%%%%%%%%%%%%%%%%%%%%%%%%%%%%%%%%%%%%%%%%%%%%%%%%%%%%%%%%%%%%%%%%%%%%%%%%
\newpage
\appendix
\onecolumn

\section{Details of Datasets and Environment}
\label{appendix:Datasets}

\subsection{Datasets in LLaGA}
\label{appendix:datasets_llaga}

\begin{table}[h]
    \centering
    \caption{Dataset Statistics}
    \label{dataset_statistics_llaga}
    \small
    \begin{tabular}{ccccccc} 
        \toprule
        Datasets  & \# Nodes & \# Edges   & \# Classes & \# Features & \# homophily    & Split ratio      \\ 
        \midrule
        Cora      & 2,708    & 10,556       & 7    & 2,432      & 0.83  & 60\%/20\%/20\%        \\
        PubMed    & 19,717   & 88,648       & 3     & 2,432      & 0.79  & 60\%/20\%/20\%         \\
        Cornell    & 151   & 456        & 5    & 2,432       & 0.13  & 60\%/20\%/20\%       \\
        Texas    & 156   & 496       & 5      & 2,432     & 0.13   & 60\%/20\%/20\%       \\
        Wisconsin    & 244   & 846       & 5    & 2,432       & 0.18  & 60\%/20\%/20\%        \\
        Actor    & 9,228   & 272,862       & 5    & 2,432       & 0.67 & 60\%/20\%/20\%          \\
        \bottomrule
    \end{tabular}
\end{table}

Cornell, Texas, and Wisconsin: \href{https://www.cs.cmu.edu/afs/cs.cmu.edu/project/theo-20/www/data/}{https://www.cs.cmu.edu/afs/cs.cmu.edu/project/theo-20/www/data/}\\
For Actor, we get the raw texts of 5 classes from:
\begin{itemize}[leftmargin = 0.5cm]
    \item American\_film\_actors: \href{https://en.wikipedia.org/wiki/Category:American_film_actors}{https://en.wikipedia.org/wiki/Category:American\_film\_actors}
    \item American\_television\_actors: \href{https://en.wikipedia.org/wiki/Category:American_television_actors}{https://en.wikipedia.org/wiki/Category:American\_television\_actors}
    \item American\_screenwriters: \href{https://en.wikipedia.org/wiki/Category:American_screenwriters}{https://en.wikipedia.org/wiki/Category:American\_screenwriters}
    \item American\_stage\_actors: \href{https://en.wikipedia.org/wiki/Category:American_stage_actors}{https://en.wikipedia.org/wiki/Category:American\_stage\_actors}
    \item American\_film\_directors: \href{https://en.wikipedia.org/wiki/Category:American_film_directors}{https://en.wikipedia.org/wiki/Category:American\_film\_directors}
\end{itemize}

\subsection{Datasets in NAGphormer and VCR-Graphormer}
\label{appendix:datasets_gt}

\begin{table}[h]
    \centering
    \caption{Dataset Statistics}
    \label{dataset_statistics_gt}
    \small
    \begin{tabular}{ccccccc} 
        \toprule
        Datasets  & \# Nodes & \# Edges   & \# Classes & \# Features & \# homophily     & Split ratio     \\ 
        \midrule
        PubMed    & 19,717   & 88,651       & 3     & 500     & 0.79 & 60\%/20\%/20\%          \\
        Computers    &  13,752   & 441,512       & 10    & 767      & 0.70 & 60\%/20\%/20\%          \\
        Photo    & 7,650   & 223,538       & 8     & 745    & 0.77    & 60\%/20\%/20\%       \\
        Cornell    & 183   & 590        & 5  & 1,703        & 0.11    & 60\%/20\%/20\%     \\
        Texas    & 183   & 618       & 5      & 1,703     & 0.06   & 60\%/20\%/20\%       \\
        Wisconsin    & 251   & 998       & 5    & 1,703       & 0.16  & 60\%/20\%/20\%        \\
        Actor    & 7,600   & 33,544       & 5    & 931      & 0.24   & 60\%/20\%/20\%        \\
        \bottomrule
    \end{tabular}
\end{table}

All of these datasets can be accessed from the DGL library\footnote{\hyperlink{https://docs.dgl.ai/api/python/dgl.data.html}{https://docs.dgl.ai/api/python/dgl.data.html}}.

\subsection{Environment}
\label{environment}

The environment where our codes run is as follows: 
\begin{itemize}[leftmargin = 0.5cm]
    \item OS: Linux 5.4.0-131-generic
    \item CPU: Intel(R) Xeon(R) Gold 6348 CPU @ 2.60GHz
    \item GPU: GeForce RTX 3090
\end{itemize}

\section{More Experiments and Analysis on \MethodName}
\label{more_analysis}

\subsection{Attention Scores Allocated to Hops}
\label{analysis_attn_scores}
To further validate the effectiveness of \MethodName, we analyze the actual attention scores assigned to each hop by \MethodName~and HO, calculated as the weighted combination of scores from the gate module and the Transformer attention scores from the central node to each hop token. The results are summarized in Table~\ref{attn_scores}.
We can observe that: (1) On the homophilic graph, \MethodName~prioritizes near-hop neighbors, while HO prioritizes the central node. Therefore, \MethodName~refines this pattern by slightly up-weighting hops with marginally higher label-consistency.
(2) On the heterophilic graph, HO allocates disproportionately high attention to 1-hop neighbors, even when these neighbors have low label-consistency. In contrast, \MethodName~dynamically reallocates attention: it assigns higher scores to the central node and 2-hop neighbors with higher node-homophily.
These results confirm that the mechanisms of \MethodName~adapt to the type of graph, ensuring that attention aligns with the task.

\begin{table}[h]
    \centering
    \caption{Attention scores allocated to hops on homophilic and heterophilic graphs.}
    \label{attn_scores}
    \small
    \begin{tabular}{c|ccc|ccc} 
\toprule
\multirow{2}{*}{Templates} & \multicolumn{3}{c|}{LLaGA-Cora} & \multicolumn{3}{c}{LLaGA-Texas} \\ \cmidrule{2-7} 
                            & hop0      & hop1     & hop2     & hop0      & hop1     & hop2     \\ \midrule
HO                          & 0.4284    & 0.3069   & 0.2647   & 0.3721    & 0.2343   & 0.3936   \\ \midrule
\MethodName                        & 0.0323    & 0.6570   & 0.3108   & 0.3405    & 0.1441   & 0.5154   \\ \bottomrule
    \end{tabular}
    \vspace{-1.0em}
\end{table}

\subsection{Examples Demonstrating the Interpretability of \MethodName}
\label{examples}

To further illustrate why \MethodName~outperforms predefined token lists on heterophilic graphs, we present two representative examples from the Texas and WISCONSIN datasets with LLaGA (visualized in Figure~\ref{fig:examples}). \MethodName~reduces the attention of the 1-hop neighbor via the gate module and selects critical nodes via the selection module. For example, on the Texas dataset, the gate module makes the central node "18" pay more attention to 2-hop neighbors and itself. Moreover, the selection module increases the proportion of the feature of node "39" in the second hop token. This focuses on the aggregation on label-consistent nodes, correcting the prediction to label 2.

\begin{figure}[h]
\vspace{-0.5em}
  \centering
  \includegraphics[width=0.75\textwidth]{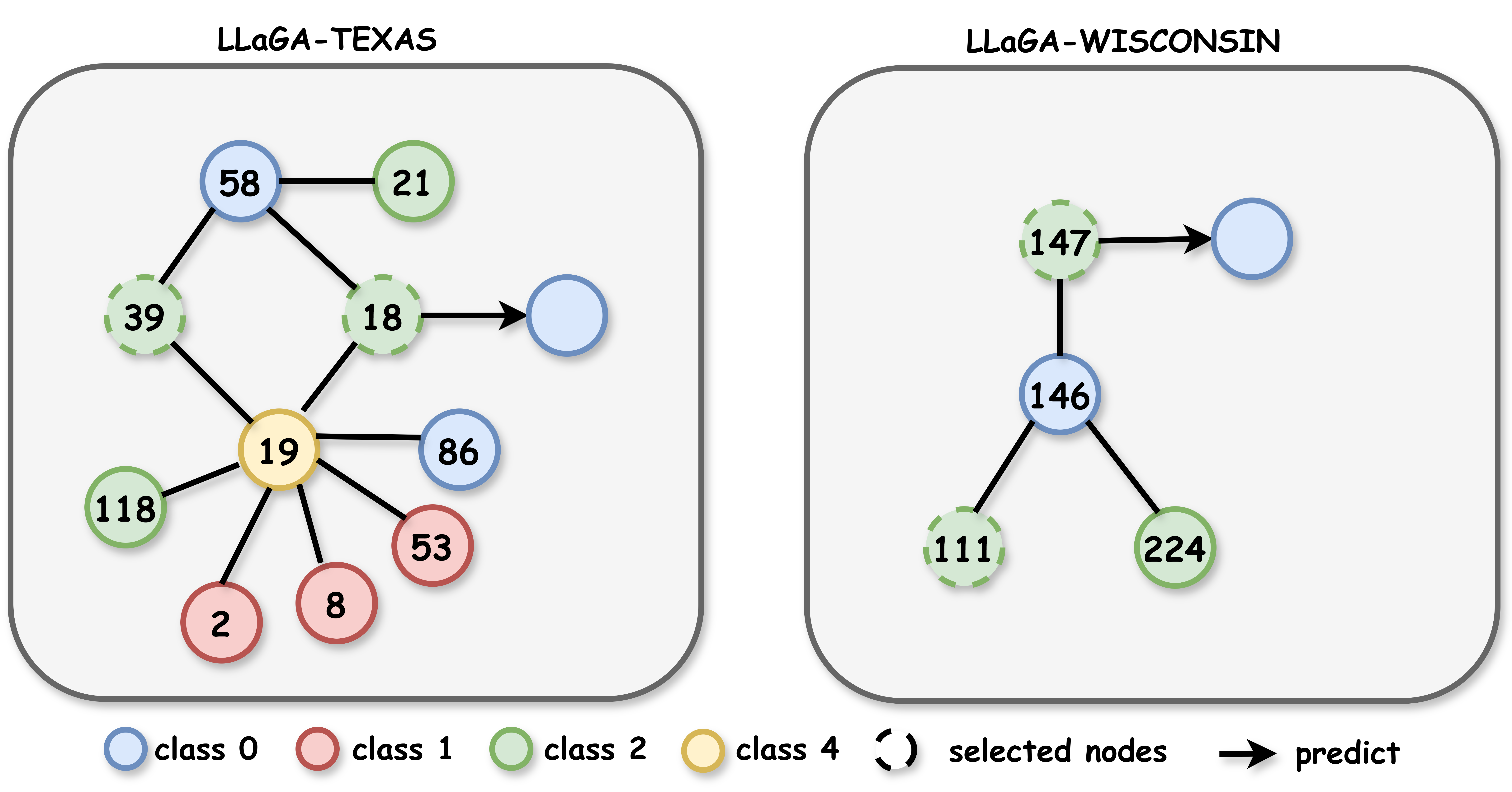}
    % \fbox{\rule[-.5cm]{0cm}{4cm} \rule[-.5cm]{4cm}{0cm}}
    \caption{Examples demonstrating the interpretability of \MethodName.}
    \label{fig:examples}
    \vspace{-0.5em}
\end{figure}

\section{Limitations and Impacts}

\subsection{Limitations}
\label{limitations}
Here we provide discussions on limitations of our work. Currently, experiments of \MethodName~are primarily conducted on social and citation heterophilic graphs. %, and the generalizability of \MethodName~to other graph types and other downstream tasks remains under-explored because of the design of TGLMs. 
In the future, we will extend \MethodName~to additional graph types (e.g., molecular interaction networks and knowledge graphs).% and more downstream tasks (e.g., graph-level tasks) to validate its generalizability. 
Furthermore, we will investigate the integration of \MethodName~with emerging graph learning paradigms (e.g., dynamic graph and heterogeneous graphs) to address evolving challenges in graph representation.

\subsection{Impacts}
\label{impacts}
\textbf{Positive Impacts}. Our method enhances Tokenized Graph Learning Models (TGLMs) by improving the adaptability of token lists, which can broadly benefit real-world applications. For example, TGLMs are critical for analyzing social networks, recommendation systems, and bioinformatics. By enabling TGLMs to better handle diverse graph structures, our work may improve the accuracy and efficiency of these applications, supporting data-driven decision-making in fields like public health, urban planning, and e-commerce.

\textbf{Negative Impacts}. As a foundational methodological contribution, our work focuses on general graph machine learning research and we do not foresee that our work shall have major direct negative societal impacts. %Specifically,  Any potential risks would arise from downstream applications of TGLMs, not the core methodology described here. 

\section{Propositions and Proofs of HO Template}\label{appendix:proof}
\subsection{Proof of Theorem~\ref{thm:MHOrecursive}}
\label{appendix:MHOrecursive}

% \textit{Proof:} \\
\textbf{Base case k=0}:
By definition, $\mathbf{T}_{u,0}^{\text{HO}} = \mathbf{x}_u = \mathbf{H}_u^0$, so $\mathbf{M}_{0,0}^{\text{HO}} = 1$ and $\mathbf{M}_{0,i}^{\text{HO}} = \mathbf{0}$ for $i > 0$. 

\textbf{Base case k=1}:  
\begin{equation}
    \mathbf{T}_{u,1}^{\text{HO}} = \frac{1}{n} \sum_{v \in \mathcal{N}_u^1} \mathbf{T}_{v,0}^{\text{HO}} = \frac{1}{n} \sum_{v \in \mathcal{N}_u^1} \mathbf{x}_v = \frac{1}{n} \mathbf{H}_u^1.
\end{equation}

Thus, \( \mathbf{T}_{u,1}^{\text{HO}} = \mathbf{M}_{1,1}^{\text{HO}} \mathbf{H}_u^1 \), where \( \mathbf{M}_{1,1}^{\text{HO}} = \frac{1}{n} \). For \( i=0 \), \( \mathbf{M}_{1,0}^{\text{HO}} = \mathbf{0} \). This matches Rule 2 (\( \mathbf{M}_{1,0}^{\text{HO}} = \mathbf{M}_{0,1}^{\text{HO}} = \mathbf{0} \)) and Rule 4 (\( \mathbf{M}_{1,1}^{\text{HO}} = \frac{1}{n} \left( \mathbf{M}_{0,0}^{\text{HO}} + (n-1)\mathbf{M}_{0,2}^{\text{HO}} \right) = \frac{1}{n} \)).

\textbf{Inductive step k=t}:  
Assume \( \mathbf{T}_{u,t-1}^{\text{HO}} = \sum_{i=0}^{t-1} \mathbf{M}_{t-1,i}^{\text{HO}} \mathbf{H}_u^i \) holds. For \( \mathbf{T}_{u,t}^{\text{HO}} \):
\begin{equation}
\mathbf{T}_{u,t}^{\text{HO}} = \frac{1}{n} \sum_{v \in \mathcal{N}_u^1} \mathbf{T}_{v,t-1}^{\text{HO}} = \frac{1}{n} \sum_{v \in \mathcal{N}_u^1} \sum_{i=0}^{t-1} \mathbf{M}_{t-1,i}^{\text{HO}} \mathbf{H}_v^i.
\end{equation}
By definition, \( \sum_{v \in \mathcal{N}_u^1} \mathbf{H}_v^i = \mathbf{H}_u^{i+1} + (n-1)\mathbf{H}_u^{i-1} \). Substituting this into the equation:  
\begin{equation}
\mathbf{T}_{u,t}^{\text{HO}} = \frac{1}{n} \left[ \mathbf{M}_{t-1,0}^{\text{HO}} \mathbf{H}_u^1 + \sum_{i=1}^{t-1} \mathbf{M}_{t-1,i}^{\text{HO}} \left( \mathbf{H}_u^{i+1} + (n-1)\mathbf{H}_u^{i-1} \right) \right].
\end{equation}
Rearranging terms by \( \mathbf{H}_u^i \):  
\begin{equation}
\mathbf{T}_{u,t}^{\text{HO}} = \mathbf{M}_{t-1,1}^{\text{HO}} \mathbf{H}_u^0 + \sum_{i=1}^t \frac{1}{n} \left( \mathbf{M}_{t-1,i-1}^{\text{HO}} + (n-1)\mathbf{M}_{t-1,i+1}^{\text{HO}} \right) \mathbf{H}_u^i.
\end{equation}
This matches the recursive rules for $\mathbf{M}_{t,0}^{\text{HO}}$ and $\mathbf{M}_{t,i}^{\text{HO}}$ (Rules 2, 3, and 4). Thus, the induction holds. 

\subsection{Proposition1: Contributions Relate to the Parity of Hop}
\label{appendix:MHOprop1}

\begin{proposition}[Contributions Relate to the Parity of Hop]\label{prop:MHOprop1}
 For any $k \geq 0$, $\mathbf{T}_{u,k}^{\text{HO}}$ aggregates features exclusively from neighbors with hop counts of the same parity as $k$. 
Formally:
\begin{itemize}[leftmargin = 0.5cm]
    \item For even $k=2m$: $\mathbf{M}_{2m, 2n+1}^{\text{HO}} = \mathbf{0}$, $n \geq 0$, and $\mathbf{M}_{2m, 2n}^{\text{HO}} \neq \mathbf{0}$, $0 \leq n \leq m$;
    \item For odd $k=2m+1$: $\mathbf{M}_{2m+1, 2n}^{\text{HO}} = \mathbf{0}$, $n \geq 0$, and $\mathbf{M}_{2m+1, 2n+1}^{\text{HO}} \neq \mathbf{0}$, $0 \leq n \leq m$.
\end{itemize}
\end{proposition}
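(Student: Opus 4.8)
The plan is to prove Proposition~\ref{prop:MHOprop1} by induction on $k$, using the recursive rules for $\mathbf{M}_{k,i}^{\text{HO}}$ established in Theorem~\ref{thm:MHOrecursive}. The statement has two halves: a \emph{vanishing} claim (entries with the wrong parity are zero) and a \emph{non-vanishing} claim (entries with the right parity, within the admissible range $0 \le n \le m$, are nonzero, in fact strictly positive). I would handle these separately, since the vanishing half is a clean parity bookkeeping argument while the non-vanishing half needs a positivity observation.

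For the vanishing half, the base cases $k=0$ and $k=1$ follow directly from the base-case computations in Appendix~\ref{appendix:MHOrecursive}: $\mathbf{M}_{0,0}^{\text{HO}}=1$ with all other $\mathbf{M}_{0,i}^{\text{HO}}=0$, and $\mathbf{M}_{1,1}^{\text{HO}}=\frac1n$ with $\mathbf{M}_{1,0}^{\text{HO}}=0$. For the inductive step, assume the parity claim holds for $k-1$. Rule~4 gives $\mathbf{M}_{k,i}^{\text{HO}} = \frac1n\big(\mathbf{M}_{k-1,i-1}^{\text{HO}} + (n-1)\mathbf{M}_{k-1,i+1}^{\text{HO}}\big)$ for $k,i\ge 1$; both indices $i-1$ and $i+1$ have parity opposite to $i$, hence the same parity as $k-1$ exactly when $i$ has the same parity as $k$. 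So if $i \not\equiv k \pmod 2$, then $i-1$ and $i+1$ are $\not\equiv k-1 \pmod 2$, and by the inductive hypothesis both summands vanish, giving $\mathbf{M}_{k,i}^{\text{HO}}=\mathbf{0}$. The boundary index $i=0$ is covered by Rule~2 ($\mathbf{M}_{k,0}^{\text{HO}} = \mathbf{M}_{k-1,1}^{\text{HO}}$), which is zero precisely when $1 \not\equiv k-1$, i.e. when $0 \not\equiv k \pmod 2$, as required. Rule~3 handles $i>k$ directly.

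For the non-vanishing half, I would strengthen the inductive hypothesis to: $\mathbf{M}_{k,i}^{\text{HO}} > 0$ whenever $i \equiv k \pmod 2$ and $0 \le i \le k$. This is visibly true for $k=0,1$. For the inductive step with $1 \le i \le k$ and $i \equiv k \pmod 2$: in Rule~4, the term $\mathbf{M}_{k-1,i-1}^{\text{HO}}$ has $i-1 \equiv k-1 \pmod 2$ and $0 \le i-1 \le k-1$, so it is strictly positive by hypothesis; since $n \ge 1$ and $\mathbf{M}_{k-1,i+1}^{\text{HO}} \ge 0$, the whole expression is strictly positive. For $i=0$ (only relevant when $k$ is even), Rule~2 gives $\mathbf{M}_{k,0}^{\text{HO}} = \mathbf{M}_{k-1,1}^{\text{HO}} > 0$ since $1 \equiv k-1 \pmod 2$ and $1 \le k-1$. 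Translating back to the $k=2m$, $k=2m+1$ notation of the proposition statement then gives exactly the claimed ranges $0 \le n \le m$.

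The main obstacle I anticipate is purely notational rather than mathematical: keeping the index-range bookkeeping consistent between the $(k,i)$ indexing of Theorem~\ref{thm:MHOrecursive} and the $(2m,2n)$ / $(2m+1,2n+1)$ indexing of the proposition, and being careful at the two ``edges'' of the recursion — the index $i=0$ handled by Rule~2 and the top index $i=k$ where only the $\mathbf{M}_{k-1,i-1}^{\text{HO}}$ term survives (since $\mathbf{M}_{k-1,i+1}^{\text{HO}} = \mathbf{M}_{k-1,k+1}^{\text{HO}} = \mathbf{0}$ by Rule~3). One subtlety worth stating explicitly is that $n \ge 2$ (or at least $n \ge 1$) is needed only for the positivity direction, not for the vanishing direction; for $n=1$ the $(n-1)$ factor kills the second term but the argument still goes through. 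I would also note that $\mathbf{M}_{k,i}^{\text{HO}}$ is a scalar here (a multiple of the identity), so ``$\neq \mathbf{0}$'' is just ``nonzero'', which the positivity argument delivers.
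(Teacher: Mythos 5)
Your proposal is correct and follows essentially the same route as the paper: induction on $k$ using Rules 2--4 of Theorem~\ref{thm:MHOrecursive}, with the parity of $i\pm 1$ versus $k-1$ driving the vanishing claim and Rule~2 covering the $i=0$ boundary. The one place you go beyond the paper is the non-vanishing half, where you explicitly strengthen the inductive hypothesis to strict positivity of $\mathbf{M}_{k,i}^{\text{HO}}$ for admissible $(k,i)$; the paper's argument only establishes the one-directional ``non-zero only if the parity matches'' and leaves the $\neq\mathbf{0}$ claim implicit, so your version is the more complete of the two.
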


\textbf{Base Case k=0 and k=1}:
\begin{itemize}[leftmargin = 0.5cm]
    \item For $k=0$ (even), $\mathbf{T}_{u,0}^{\text{HO}} = \mathbf{x}_u$, so $\mathbf{M}_{0,0}^{\text{HO}} = 1$ and $\mathbf{M}_{0,i}^{\text{HO}} = \mathbf{0}$ for $i > 0$.
    \item For $k=1$ (odd), $\mathbf{T}_{u,1}^{\text{HO}} = \frac{1}{n} \sum_{v \in \mathcal{N}_u^1} \mathbf{x}_v$, so $\mathbf{M}_{1,1}^{\text{HO}} = \frac{1}{n}$ and $\mathbf{M}_{1,i}^{\text{HO}} = \mathbf{0}$ for $i \neq 1$. 
\end{itemize}

\textbf{Inductive Step k=t:}  
  Assume the property holds for $k=t$. Consider $k=t+1$:
\begin{itemize}[leftmargin = 0.5cm]
    \item If $t$ is even ($t=2m$), then $t+1=2m+1$ (odd). By Rule 2 of Theorem~\ref{thm:MHOrecursive}, $\mathbf{M}_{2m+1,0}^{\text{HO}} = \mathbf{M}_{2m,1}^{\text{HO}} = \mathbf{0}$. For $i \geq 1$, $\mathbf{M}_{2m+1,i}^{\text{HO}} = \frac{1}{n} \left( \mathbf{M}_{2m,i-1}^{\text{HO}} + (n-1)\mathbf{M}_{2m,i+1}^{\text{HO}} \right)$. Since $\mathbf{M}_{2m,i-1}^{\text{HO}}$ and $\mathbf{M}_{2m,i+1}^{\text{HO}}$ are non-zero only if $i-1$ and $i+1$ are even (i.e., $i$ is odd), $\mathbf{M}_{2m+1,i}^{\text{HO}}$ is non-zero only when $i$ is odd.
    \item If $t$ is odd ($t=2m+1$), then $t+1=2m+2$ (even). By Rule 2 of Theorem~\ref{thm:MHOrecursive}, $\mathbf{M}_{2m+2,0}^{\text{HO}} = \mathbf{M}_{2m+1,1}^{\text{HO}} \neq \mathbf{0}$. For $i \geq 1$, $\mathbf{M}_{2m+2,i}^{\text{HO}} = \frac{1}{n} \left( \mathbf{M}_{2m+1,i-1}^{\text{HO}} + (n-1)\mathbf{M}_{2m+1,i+1}^{\text{HO}} \right)$. Since $\mathbf{M}_{2m+1,i-1}^{\text{HO}}$ and $\mathbf{M}_{2m+1,i+1}^{\text{HO}}$ are non-zero only if $i-1$ and $i+1$ are odd (i.e., $i$ is even), $\mathbf{M}_{2m+2,i}^{\text{HO}}$ is non-zero only when $i$ is even. 
\end{itemize}

Thus, the property holds for all $k \geq 0$.

\subsection{Proposition2: Monotonic Decay of Row Contributions}
\label{appendix:MHOprop2}

\begin{proposition}[Monotonic Decay of Row Contributions]
\label{prop:MHOprop2}
Within each row of $\mathbf{M}^{\text{HO}}$, nonzero contributions monotonically decay as the hop increases. Formally:
\begin{itemize}[leftmargin = 0.5cm]
    \item For even $k=2m$: $\mathbf{M}_{2m, 2i}^{\text{HO}} > \mathbf{M}_{2m, 2(i+1)}^{\text{HO}}$, $0 \leq i \leq m-1$;
    \item For odd $k=2m+1$: $\mathbf{M}_{2m+1, 2i+1}^{\text{HO}} > \mathbf{M}_{2m+1, 2(i+1)+1}^{\text{HO}}$, $0 \leq i \leq m-1$.
\end{itemize}
\end{proposition}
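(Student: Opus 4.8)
\textbf{Proof proposal for Proposition~\ref{prop:MHOprop2}.}

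The plan is to prove both claims simultaneously by induction on $k$, exploiting Rule 4 of Theorem~\ref{thm:MHOrecursive} together with the parity structure established in Proposition~\ref{prop:MHOprop1}. Since the two cases (even $k=2m$ and odd $k=2m+1$) have the same structure, I would first introduce a uniform notation: for a fixed parity, write $a_j := \mathbf{M}_{k, 2j}^{\text{HO}}$ when $k$ is even and $a_j := \mathbf{M}_{k, 2j+1}^{\text{HO}}$ when $k$ is odd, so the claim in every row becomes the statement that the finite sequence $a_0 > a_1 > \cdots > a_m > 0$ is strictly decreasing and positive. The base cases $k=0$ ($a_0 = 1$, single entry, nothing to compare) and $k=1$ ($a_0 = \frac{1}{n}$, single entry) hold trivially, and $k=2$ can be computed directly from Rule 4: $\mathbf{M}_{2,0}^{\text{HO}} = \mathbf{M}_{1,1}^{\text{HO}} = \frac{1}{n}$ and $\mathbf{M}_{2,2}^{\text{HO}} = \frac{1}{n}\mathbf{M}_{1,1}^{\text{HO}} = \frac{1}{n^2}$, so indeed $\frac{1}{n} > \frac{1}{n^2}$.

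For the inductive step, I would assume the monotone-decay property holds for row $k-1$ (whose nonzero entries have parity opposite to $k$) and prove it for row $k$. Using Rule 4, each nonzero entry of row $k$ is $\mathbf{M}_{k,i}^{\text{HO}} = \frac{1}{n}\left(\mathbf{M}_{k-1,i-1}^{\text{HO}} + (n-1)\mathbf{M}_{k-1,i+1}^{\text{HO}}\right)$, with the convention that out-of-range entries are $\mathbf{0}$ (Rule 3 handles $i+1 > k-1$, and negative indices are zero). To compare two consecutive nonzero entries $\mathbf{M}_{k,i}^{\text{HO}}$ and $\mathbf{M}_{k,i+2}^{\text{HO}}$ (consecutive within the row's parity class), I would form the difference
\begin{equation}
n\left(\mathbf{M}_{k,i}^{\text{HO}} - \mathbf{M}_{k,i+2}^{\text{HO}}\right) = \left(\mathbf{M}_{k-1,i-1}^{\text{HO}} - \mathbf{M}_{k-1,i+1}^{\text{HO}}\right) + (n-1)\left(\mathbf{M}_{k-1,i+1}^{\text{HO}} - \mathbf{M}_{k-1,i+3}^{\text{HO}}\right),
\end{equation}
and observe that both bracketed terms are nonnegative by the inductive hypothesis applied to row $k-1$ (each compares a nonzero entry with the next one in its parity class, or with a zero entry when the second index exceeds $k-1$), and at least one is strictly positive, so the difference is strictly positive. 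Positivity of all entries follows in the same pass: $\mathbf{M}_{k,i}^{\text{HO}}$ is a nonnegative combination of row-$(k-1)$ entries that are positive (by the inductive positivity claim) in the ranges guaranteed nonzero by Proposition~\ref{prop:MHOprop1}, so every in-range entry of row $k$ is strictly positive.

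The main obstacle I anticipate is the careful bookkeeping at the \emph{boundary} of each row, i.e., the comparison involving the last nonzero entry $\mathbf{M}_{k,k}^{\text{HO}}$ (and, for even rows, the first entry $\mathbf{M}_{k,0}^{\text{HO}}$, which by Rule 2 equals $\mathbf{M}_{k-1,1}^{\text{HO}}$ rather than following the generic Rule-4 pattern with a genuine $i-1$ term). One must check that the recursion's "ghost" zero entries sitting just past the support do not break the telescoping argument — concretely, that $\mathbf{M}_{k-1,i+1}^{\text{HO}} - \mathbf{M}_{k-1,i+3}^{\text{HO}} \ge 0$ still holds when $\mathbf{M}_{k-1,i+3}^{\text{HO}} = \mathbf{0}$, which it does precisely because the inductive positivity statement guarantees the surviving term is positive. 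I would handle the even-$k$ first-entry case separately by noting $\mathbf{M}_{k,0}^{\text{HO}} = \mathbf{M}_{k-1,1}^{\text{HO}}$ and $\mathbf{M}_{k,2}^{\text{HO}} = \frac{1}{n}\left(\mathbf{M}_{k-1,1}^{\text{HO}} + (n-1)\mathbf{M}_{k-1,3}^{\text{HO}}\right)$, so $\mathbf{M}_{k,0}^{\text{HO}} - \mathbf{M}_{k,2}^{\text{HO}} = \frac{n-1}{n}\left(\mathbf{M}_{k-1,1}^{\text{HO}} - \mathbf{M}_{k-1,3}^{\text{HO}}\right) > 0$, again by the inductive hypothesis on row $k-1$.
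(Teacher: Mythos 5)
Your proposal is correct and follows essentially the same route as the paper: induction over rows using Rule~4 of Theorem~\ref{thm:MHOrecursive}, comparing consecutive same-parity entries of row $k$ by applying the monotonicity hypothesis to row $k-1$. Your explicit treatment of the boundary comparison $\mathbf{M}_{k,0}^{\text{HO}} - \mathbf{M}_{k,2}^{\text{HO}} = \frac{n-1}{n}\bigl(\mathbf{M}_{k-1,1}^{\text{HO}} - \mathbf{M}_{k-1,3}^{\text{HO}}\bigr)$ for even $k$ is in fact slightly more careful than the paper's argument, whose generic $i \geq 1$ formula does not literally cover the $i=1$ comparison against the Rule-2 entry $\mathbf{M}_{k,0}^{\text{HO}}$.
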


\textbf{Base Case m=0}:
\begin{itemize}[leftmargin = 0.5cm]
\item For $k=0$ (even), the row has only $\mathbf{M}_{0,0}^{\text{HO}} = 1$.  
\item For $k=1$ (odd), the row has only $\mathbf{M}_{1,1}^{\text{HO}} = \frac{1}{n}$.  
\end{itemize}
\textbf{Inductive Step m=t}: Assume the property holds for $m=t$. Consider $m=t+1$:  
\begin{itemize}[leftmargin = 0.5cm]
\item For even $k=2(t+1)$:  
    By Rule 2 and Rule 4, $\mathbf{M}_{2(t+1),0}^{\text{HO}} = \mathbf{M}_{2t+1,1}^{\text{HO}}$. For $i \geq 1$,  
    \begin{equation}
        \mathbf{M}_{2(t+1),2i}^{\text{HO}} = \frac{1}{n} \left( \mathbf{M}_{2t+1,2i-1}^{\text{HO}} + (n-1)\mathbf{M}_{2t+1,2i+1}^{\text{HO}} \right).
    \end{equation}

    By the inductive hypothesis, $\mathbf{M}_{2t+1,2i-1}^{\text{HO}} < \mathbf{M}_{2t+1,2i-3}^{\text{HO}}$ and $\mathbf{M}_{2t+1,2i+1}^{\text{HO}} < \mathbf{M}_{2t+1,2i-1}^{\text{HO}}$, so  
    \begin{equation}
    \mathbf{M}_{2(t+1),2i}^{\text{HO}} < \frac{1}{n} \left( \mathbf{M}_{2t+1,2i-3}^{\text{HO}} + (n-1)\mathbf{M}_{2t+1,2i-1}^{\text{HO}} \right) = \mathbf{M}_{2(t+1),2(i-1)}^{\text{HO}}.
    \end{equation}
\item For odd $k=2(t+1)+1$:  
    By Rule 4,  
    \begin{equation}
    \mathbf{M}_{2(t+1)+1,2i+1}^{\text{HO}} = \frac{1}{n} \left( \mathbf{M}_{2(t+1),2i}^{\text{HO}} + (n-1)\mathbf{M}_{2(t+1),2(i+1)}^{\text{HO}} \right).
    \end{equation}
    By the inductive hypothesis, $\mathbf{M}_{2(t+1),2i}^{\text{HO}} < \mathbf{M}_{2(t+1),2(i-1)}^{\text{HO}}$ and $\mathbf{M}_{2(t+1),2(i+1)}^{\text{HO}} < \mathbf{M}_{2(t+1),2i}^{\text{HO}}$, so  
    \begin{equation}
    \mathbf{M}_{2(t+1)+1,2i+1}^{\text{HO}} < \mathbf{M}_{2(t+1)+1,2i-1}^{\text{HO}}.
    \end{equation}
\end{itemize}
Thus, the property holds for all $m \geq 0$.

\subsection{Proposition3: Monotonic Decay of Column Contributions}
\label{appendix:MHOprop3}

\begin{proposition}[Monotonic Decay of Column Contributions]\label{prop:MHOprop3} 
Within each column of $\mathbf{M}^{\text{HO}}$, nonzero contributions monotonically decay as the token depth increases . Formally:
\begin{itemize}[leftmargin = 0.5cm]
    \item For even $i=2j$: $\mathbf{M}_{2m,2j}^{\text{HO}} > \mathbf{M}_{2(m+1),2j}^{\text{HO}}$ for $0 \leq j \leq m$;
    \item For odd $i=2j+1$: $\mathbf{M}_{2m+1,2j+1}^{\text{HO}} > \mathbf{M}_{2(m+1)+1,2j+1}^{\text{HO}}$ for $0 \leq j \leq m$.
\end{itemize}
\end{proposition}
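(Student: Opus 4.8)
The plan is to argue by induction on $m$, exactly paralleling the structure used for Propositions~\ref{prop:MHOprop1} and~\ref{prop:MHOprop2}, and to feed in the two already-established facts: the parity pattern (Proposition~\ref{prop:MHOprop1}, which guarantees all the entries we compare are genuinely nonzero and that the recursion stays within one parity class) and the row-monotonicity (Proposition~\ref{prop:MHOprop2}, which lets me bound a shifted entry by a nearby one). First I would record the base case: for $m=0$ the claim reads $\mathbf{M}_{0,0}^{\text{HO}} = 1 > \mathbf{M}_{2,0}^{\text{HO}}$ and $\mathbf{M}_{1,1}^{\text{HO}} = \tfrac1n > \mathbf{M}_{3,1}^{\text{HO}}$; both follow by a direct one-step application of Rule~4 of Theorem~\ref{thm:MHOrecursive} together with the parity zeros, e.g.\ $\mathbf{M}_{2,0}^{\text{HO}} = \mathbf{M}_{1,1}^{\text{HO}} = \tfrac1n$ is actually \emph{equal} here, so I must be slightly careful — the strict inequality only kicks in once a column has more than one nonzero contributor above it, and I would state the base case at $m=1$ (or handle the $j=m$ boundary entry separately, since $\mathbf{M}_{2m,2m}^{\text{HO}}$ versus $\mathbf{M}_{2m+2,2m}^{\text{HO}}$ is the interesting comparison).

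For the inductive step, assume the column-decay statement holds at level $m=t$, i.e.\ $\mathbf{M}_{2t,2j}^{\text{HO}} > \mathbf{M}_{2t+2,2j}^{\text{HO}}$ for all valid $j$ in the even case (and the odd analogue). I want $\mathbf{M}_{2t+2,2j}^{\text{HO}} > \mathbf{M}_{2t+4,2j}^{\text{HO}}$. Apply Rule~4 twice:
\begin{equation}
\mathbf{M}_{2t+4,2j}^{\text{HO}} = \tfrac1n\bigl(\mathbf{M}_{2t+3,2j-1}^{\text{HO}} + (n-1)\mathbf{M}_{2t+3,2j+1}^{\text{HO}}\bigr),
\end{equation}
and similarly $\mathbf{M}_{2t+2,2j}^{\text{HO}} = \tfrac1n(\mathbf{M}_{2t+1,2j-1}^{\text{HO}} + (n-1)\mathbf{M}_{2t+1,2j+1}^{\text{HO}})$. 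So it suffices to show the termwise comparisons $\mathbf{M}_{2t+1,2j-1}^{\text{HO}} > \mathbf{M}_{2t+3,2j-1}^{\text{HO}}$ and $\mathbf{M}_{2t+1,2j+1}^{\text{HO}} > \mathbf{M}_{2t+3,2j+1}^{\text{HO}}$ — but these are precisely the column-decay statements one parity-level down (the odd case), which I can either fold into a simultaneous induction over even and odd $m$, or derive from the even case at level $t$ by one more application of Rule~4 combined with the inductive hypothesis. The cleanest route is a simultaneous induction: the even-column decay at step $t$ and the odd-column decay at step $t$ together imply both at step $t+1$, since each even-row entry is a positive combination of two odd-row entries one level up and vice versa, and a positive combination of strictly-decreased quantities is strictly decreased.

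I expect the main obstacle to be the \textbf{boundary column} $j=m$: the entry $\mathbf{M}_{2m,2m}^{\text{HO}}$ is the "newest" nonzero entry in its row, and when I pass to $\mathbf{M}_{2m+2,2m}^{\text{HO}}$ via Rule~4 one of the two terms in the recursion, $\mathbf{M}_{2m+1,2m+1}^{\text{HO}}$, is itself a boundary entry, so I cannot blindly invoke the inductive hypothesis at the "interior" form; I will need Proposition~\ref{prop:MHOprop2} (row monotonicity) to control $\mathbf{M}_{2m+1,2m+1}^{\text{HO}}$ relative to $\mathbf{M}_{2m+1,2m-1}^{\text{HO}}$, and a small separate computation to close the $j=m$ case. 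A secondary nuisance is keeping the strictness honest: I must verify each nonzero entry is genuinely positive (immediate from Proposition~\ref{prop:MHOprop1} and an easy positivity induction, since Rule~4 is a positive combination with $n\ge 1$), so that the strict inequalities are not accidentally weakened to equalities. Once positivity and the boundary case are in hand, the interior cases are routine bookkeeping with Rule~4.
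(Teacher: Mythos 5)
Your overall strategy is the paper's own: a simultaneous induction on $m$ over the even and odd parity classes, expanding each level-$(t+1)$ entry via Rule~4 of Theorem~\ref{thm:MHOrecursive} and comparing termwise against the level-$t$ column-decay statements of the opposite parity. One minor correction first: your base-case worry is a misreading. The identity $\mathbf{M}_{2,0}^{\text{HO}}=\mathbf{M}_{1,1}^{\text{HO}}=\tfrac{1}{n}$ is just Rule~2, not the comparison the proposition asks for; the claim at $m=0$, $j=0$ is $\mathbf{M}_{0,0}^{\text{HO}}=1>\mathbf{M}_{2,0}^{\text{HO}}=\tfrac{1}{n}$, which is strict whenever $n>1$, so there is no need to restart the induction at $m=1$.

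The substantive issue is the boundary column $j=m$, which you correctly flag as the main obstacle but then defer to ``a small separate computation'' that you never supply --- and it is not small. For $j=t+1$, the Rule-4 expansion of $\mathbf{M}_{2(t+2),2(t+1)}^{\text{HO}}$ contains the term $(n-1)\mathbf{M}_{2t+3,2t+3}^{\text{HO}}>0$, whereas the corresponding term in $\mathbf{M}_{2(t+1),2(t+1)}^{\text{HO}}$ is $(n-1)\mathbf{M}_{2t+1,2t+3}^{\text{HO}}=0$ by Rule~3; the termwise comparison therefore points the wrong way on that term, and the inductive hypothesis at level $t$ does not even cover column $2t+3$. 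The inequality is moreover genuinely tight there: $\mathbf{M}_{2,2}^{\text{HO}}=1/n^{2}$ while $\mathbf{M}_{4,2}^{\text{HO}}=(3n-2)/n^{4}$, so $\mathbf{M}_{2,2}^{\text{HO}}>\mathbf{M}_{4,2}^{\text{HO}}$ reduces to $(n-1)(n-2)>0$, an equality at $n=2$. Hence no soft argument (positivity plus row monotonicity from Proposition~\ref{prop:MHOprop2}) can close the boundary case; it needs an explicit estimate of the near-diagonal entries and a hypothesis on $n$. For what it is worth, the paper's own proof applies the termwise comparison uniformly to all $0\le j\le t+1$ and so has the same hole; your proposal is more candid about the difficulty but does not resolve it either, so as written it is not a complete proof.
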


\textbf{Base Case m=0}:
\begin{itemize}[leftmargin = 0.5cm]
\item For $i=0$ (even), $\mathbf{M}_{0,0}^{\text{HO}} = 1 > \mathbf{M}_{2,0}^{\text{HO}} = \mathbf{M}_{1,1}^{\text{HO}} = \frac{1}{n}$.  
\item For $i=1$ (odd), $\mathbf{M}_{1,1}^{\text{HO}} = \frac{1}{n} > \mathbf{M}_{3,1}^{\text{HO}} = \frac{1}{n} \left( \mathbf{M}_{2,0}^{\text{HO}} + (n-1)\mathbf{M}_{2,2}^{\text{HO}} \right) = \frac{2n-1}{n^3}$.  
\end{itemize}

\textbf{Inductive Step m=t}: Assume the property holds for $m=t$. Consider $m=t+1$:  
\begin{itemize}[leftmargin = 0.5cm]
\item For even $i=2j$: By Rule 4, 
\begin{equation}
\mathbf{M}_{2(t+1),2j}^{\text{HO}} = \frac{1}{n} \left( \mathbf{M}_{2t+1,2j-1}^{\text{HO}} + (n-1)\mathbf{M}_{2t+1,2j+1}^{\text{HO}} \right)
\end{equation}

By the inductive hypothesis, $\mathbf{M}_{2t+1,2j-1}^{\text{HO}} > \mathbf{M}_{2t+3,2j-1}^{\text{HO}}$ and $\mathbf{M}_{2t+1,2j+1}^{\text{HO}} > \mathbf{M}_{2t+3,2j+1}^{\text{HO}}$, so $\mathbf{M}_{2(t+1),2j}^{\text{HO}} > \mathbf{M}_{2(t+2),2j}^{\text{HO}}$.  
\item For odd $i=2j+1$: By Rule 4, 
\begin{equation}
    \mathbf{M}_{2(t+1)+1,2j+1}^{\text{HO}} = \frac{1}{n} \left( \mathbf{M}_{2(t+1),2j}^{\text{HO}} + (n-1)\mathbf{M}_{2(t+1),2(j+1)}^{\text{HO}} \right).
\end{equation}
 By the inductive hypothesis, $\mathbf{M}_{2(t+1),2j}^{\text{HO}} > \mathbf{M}_{2(t+2),2j}^{\text{HO}}$ and $\mathbf{M}_{2(t+1),2(j+1)}^{\text{HO}} > \mathbf{M}_{2(t+2),2(j+1)}^{\text{HO}}$, so $\mathbf{M}_{2(t+1)+1,2j+1}^{\text{HO}} > \mathbf{M}_{2(t+2)+1,2j+1}^{\text{HO}}$.  
\end{itemize}
Thus, the property holds for all $m \geq 0$. Then we have 
\begin{equation}
\frac{\mathbf{M}_{k,k-2}^{\text{HO}}}{\mathbf{M}_{k,k}^{\text{HO}}} = (k-1)n - (k-2) > (k-2)(n-1).
\end{equation}
Therefore, even within $\mathbf{T}_{u,k}^{\text{HO}}$, the far-hop neighbors are exponentially less influential than the near-hop ones.

\subsection{Proof of Theorem~\ref{thm:MHOAttnrecursive}}
\label{appendix:MHOAttnthm}
The Transformer's attention mechanism aggregates value vectors $\mathbf{v}_u^i = \mathbf{T}_{u,i}^{\text{HO}} \mathbf{W}_V$ (with $\mathbf{W}_V$ as the value projection matrix) using the attention scores $\alpha_i$:  
\begin{equation}
\begin{aligned}
    \sum_{i=0}^L \alpha_i \mathbf{v}_u^i = \sum_{i=0}^L \alpha_i \mathbf{T}_{u,i}^{\text{HO}} \mathbf{W}_V.
\end{aligned}
\end{equation}
Substituting $\mathbf{T}_{u,i}^{\text{HO}} = \sum_{j=0}^i \mathbf{M}_{i,j}^{\text{HO}} \mathbf{H}_u^j$ (from~\ref{eq:M_HO_define}) and $\mathbf{H}_u^j = \sum_{v \in \mathcal{N}_u^j} \mathbf{x}_v$ (sum of $j$-hop neighbor features), we get:  
\begin{equation}
    \begin{aligned}
&\sum_{i=0}^L \alpha_i \mathbf{v}_u^i \\
=& \sum_{i=0}^L \alpha_i  \mathbf{T}_{u,i}^{\text{HO}} \mathbf{W}_V \\
=& \sum_{i=0}^L \alpha_i  \sum_{j=0}^i ( \mathbf{M}_{i,j}^{\text{HO}} \mathbf{H}_u^j ) \mathbf{W}_V \\
=& \sum_{j=0}^L \left( \sum_{i=j}^L \alpha_i \mathbf{M}_{i,j}^{\text{HO}} \right) \left( \sum_{v \in \mathcal{N}_u^j} \mathbf{x}_v \right) \mathbf{W}_V. 
    \end{aligned}
\end{equation} 
By linearity of summation, the effective attention allocated to a specific $k$-hop neighbor $v \in \mathcal{N}_u^k$ is the coefficient of $\mathbf{x}_v \mathbf{W}_V$ in this expression. By Property 1 (parity restriction) of $\mathbf{M}_{i,k}^{\text{HO}}$, $\mathbf{M}_{i,k}^{\text{HO}} = \mathbf{0}$ when $i$ and $k$ have different parity. Thus, only terms with $i \equiv k \, \text{mod} \, 2$ contribute:  
\begin{equation}
\hat{\alpha}_k = \sum_{i=k}^L \alpha_i \mathbf{M}_{i,k}^{\text{HO}} = \sum_{i=k, \, i \equiv k \, \text{mod} \, 2}^L \alpha_i \mathbf{M}_{i,k}^{\text{HO}}.
\end{equation} 

For property 1, given $k_1 < k_2$ with $k_1 \equiv k_2 \, \text{mod} \, 2$, consider the effective attention $\hat{\alpha}_{k_1}$ and $\hat{\alpha}_{k_2}$. By Property 2 (row monotonicity) of $\mathbf{M}_{i,k}^{\text{HO}}$, $\mathbf{M}_{i,k_1}^{\text{HO}} > \mathbf{M}_{i,k_2}^{\text{HO}}$ for all $i \geq k_2$. Thus:  
\begin{equation}
    \hat{\alpha}_{k_1} = \sum_{i=k_1, \, i \equiv k_1 \, \text{mod} \, 2}^L \alpha_i \mathbf{M}_{i,k_1}^{\text{HO}} > \sum_{i=k_2, \, i \equiv k_2 \, \text{mod} \, 2}^L \alpha_i \mathbf{M}_{i,k_1}^{\text{HO}} > \sum_{i=k_2, \, i \equiv k_2 \, \text{mod} \, 2}^L \alpha_i \mathbf{M}_{i,k_2}^{\text{HO}} = \hat{\alpha}_{k_2}.
\end{equation}

And for property 2, from the derivation of $\hat{\alpha}_k$, the effective attention allocated to a $k$-hop neighbor $v$ is the coefficient of $\mathbf{x}_v \mathbf{W}_V$ in the aggregated value vector:  
\begin{equation}
    \sum_{i=0}^L \alpha_i \mathbf{V}_u^i = \sum_{v \in \mathcal{V}} \hat{\alpha}_v \mathbf{x}_v \mathbf{W}_V.
\end{equation}

For $v \in \mathcal{N}_u^k$, $\hat{\alpha}_v$ is determined by the sum of $\alpha_i \mathbf{M}_{i,k}^{\text{HO}}$ over $i \equiv k \, \text{mod} \, 2$. Since $\mathbf{M}_{i,k}^{\text{HO}}$ and $\alpha_i$ are global to the token list (not dependent on individual nodes $v$), all $k$-hop neighbors $v_1, v_2 \in \mathcal{N}_u^k$ share the same $\hat{\alpha}_k$. Thus, $\hat{\alpha}_{v_1} = \hat{\alpha}_{v_2} = \hat{\alpha}_k$.

\subsection{Proof of Theorem~\ref{thm:MHOBound}}
\label{appendix:MHOBound}

We assume the existence of a linear classifier, parameterized by $W_C$, which satisfies the condition $\mathbf{H}^0 \mathbf{W}_C = \mathbf{Y}$. We can express $\mathbf{H}^0 = \mathbf{Y} \mathbf{W}_C^{-1}$.

Using the triangle inequality for Frobenius norms:
\begin{equation}
    \left\| \mathbf{H}_u^0 - \hat{\mathbf{A}} \mathbf{H}^0 \right\|_F = \left\| \mathbf{H}_u^0 - \sum_{i=0}^L \hat{\alpha}_i \sum_{v \in \mathcal{N}_u^i} \mathbf{H}_v^0 \right\|_F \leq \sum_{i=0}^L \hat{\alpha}_i \sum_{v \in \mathcal{N}_u^i} \left\| \mathbf{H}_u^0 - \mathbf{H}_v^0 \right\|_F.
\end{equation}
Assume raw features are Lipschitz continuous with respect to labels: $\left\| \mathbf{H}_u^0 - \mathbf{H}_v^0 \right\|_F \leq L \left\| \mathbf{y}_u - \mathbf{y}_v \right\|_F$ for the constant $L$. For one-hot labels, $\left\| \mathbf{y}_u - \mathbf{y}_v \right\|_F = \sqrt{2}$ if $\mathbf{y}_u \neq \mathbf{y}_v$, and $0$ otherwise. Thus:  
\begin{equation}
    \sum_{v \in \mathcal{N}_u^i} \left\| \mathbf{H}_u^0 - \mathbf{H}_v^0 \right\|_F \leq L \cdot \sqrt{2} \cdot |\{ v \in \mathcal{N}_u^i \mid \mathbf{y}_u \neq \mathbf{y}_v \}|.
\end{equation}
The number of $i$-hop neighbors with different labels is $|\mathcal{N}_u^i| (1 - C_u^i)$, where $C_u^i = \frac{|\{ v \in \mathcal{N}_u^i \mid \mathbf{y}_u = \mathbf{y}_v \}|}{|\mathcal{N}_u^i|}$ is the label consistency ratio. Substituting this into the inequality:  
\begin{equation}
\sum_{v \in \mathcal{N}_u^i} \left\| \mathbf{H}_u^0 - \mathbf{H}_v^0 \right\|_F \leq \sqrt{2} L \cdot |\mathcal{N}_u^i| (1 - C_u^i).
\end{equation} 
Finally, substituting back into the smoothness bound:  
\begin{equation}
\| \mathbf{H}_u^0 - \hat{\mathbf{A}} \mathbf{H}^0 \|_F \leq \sqrt{2} L \sum_{i=0}^L \hat{\alpha}_i |\mathcal{N}_u^i| (1 - C_u^i).
\end{equation}  

\section{Details about ND}
\label{appendix:proof_of_ND}
\subsection{Definition and Recursive Properties of ND}
Consider ND for the central node $u$, constructed via a fixed-shape computational tree of depth $L$ with neighbor sample size $n$ per hop. Let $L' = \sum_{i=0}^L n^i$ denote the token list length. Define $\mathbf{M}_{i,j}^{\text{ND}}$ as the number of times $j$-hop neighbors of $u$ appear in the $i$-th layer of the token list (where layers are indexed by hop distance). The coefficients $\mathbf{M}_{i,j}^{\text{ND}}$ satisfy:
\begin{enumerate}[leftmargin=0.5cm]
    \item $\mathbf{M}^{\text{ND}}_{0,0} = 1$ (root node, 0-hop). 
    \item $\mathbf{M}^{\text{ND}}_{1,0} = 0$, $\mathbf{M}^{\text{ND}}_{1,1} = 1$ (layer 1 contains only 1-hop neighbors).
    \item $\mathbf{M}^{\text{ND}}_{k,0} = n\mathbf{M}^{\text{ND}}_{k-1,1}$.  
    \item For $i \geq 1$ and $j \geq 1$: $\mathbf{M}^{\text{ND}}_{i,j} = \mathbf{M}^{\text{ND}}_{i-1,j-1} + (n-1)\mathbf{M}^{\text{ND}}_{i-1,j+1}$. 
\end{enumerate}

\begin{proof}
For $i=0$ (the root layer), ND contains only $u$ itself, so $\mathbf{M}^{\text{ND}}_{0,0} = 1$ and $\mathbf{M}^{\text{ND}}_{0,j} = 0$ for $j > 0$. Rule 1 holds. \\
For $i=1$, since $u$ does not appear in layer 1, $\mathbf{M}^{\text{ND}}_{1,0} = 0$; $\mathcal{N}_{u}^{1}$ appear exactly once per sampled node, so $\mathbf{M}^{\text{ND}}_{1,1} = 1$. Rule 2 holds. \\  
For 0-hop in general Layers ($i \geq 1$), the count of $\mathcal{N}_{u}^{0}$ in layer $k$ depends on the count of $\mathcal{N}_{u}^{1}$ in layer $k-1$. Each of the node in $\mathcal{N}_{u}^{1}$ in layer $k-1$ generates $n$ children in layer $k$, but $u$ itself appears as a parent of these children. Therefore, $\mathbf{M}^{\text{ND}}_{k,0} = n\mathbf{M}^{\text{ND}}_{k-1,1}$. Rule 3 holds. \\
For other hops in general Layers ($i \geq 1$), A $j$-th hop neighbor in layer $i$ can be derived from two sources: (1) A $(j-1)$-th hop neighbor in layer $i-1$ (parent node, contributing 1 occurrence), (2) $(j+1)$-th hop neighbors in layer $i-1$ (siblings of the parent node, contributing $(n-1)$ occurrences due to sampling). Therefore, $\mathbf{M}^{\text{ND}}_{i,j} = \mathbf{M}^{\text{ND}}_{i-1,j-1} + (n-1)\mathbf{M}^{\text{ND}}_{i-1,j+1}$. Rule 4 holds.
\end{proof}

\subsection{Monotonicity Properties of ND}
To characterize how ND distributes node occurrences across layers, we analyze the monotonicity of $\mathbf{M}^{\text{ND}}_{i,j}$. Specifically, we observe two critical monotonicity properties:
\begin{itemize}[leftmargin = 0.5cm]
    \item Within-Layer Decay: $\mathbf{M}^{\text{ND}}_{i,j} > \mathbf{M}^{\text{ND}}_{i,j+2}$ for $\mathbf{M}^{\text{ND}}_{i,j} \neq 0$ (closer hops appear more frequently within a layer).
    \item Cross-Layer Growth: $\mathbf{M}^{\text{ND}}_{i,j} < \mathbf{M}^{\text{ND}}_{i+2,j}$ for $\mathbf{M}^{\text{ND}}_{i,j} \neq 0$ (deeper layers amplify the count of fixed-hop neighbors).  
\end{itemize}

\begin{proof} 
By induction, assume $\mathbf{M}^{\text{ND}}_{i,j} > \mathbf{M}^{\text{ND}}_{i,j+2}$ holds for $i = t$ and any $j$. For $i = t+1$ and $j > 0$:  
\begin{equation}
    \mathbf{M}^{\text{ND}}_{t+1,j} = \mathbf{M}^{\text{ND}}_{t,j-1} + (n-1)\mathbf{M}^{\text{ND}}_{t,j+1} > \mathbf{M}^{\text{ND}}_{t,j+1} + (n-1)\mathbf{M}^{\text{ND}}_{t,j+3} = \mathbf{M}^{\text{ND}}_{t+1,j+2}.
\end{equation}
This follows from the inductive hypothesis $\mathbf{M}^{\text{ND}}_{t,j-1} > \mathbf{M}^{\text{ND}}_{t,j+1}$ and $\mathbf{M}^{\text{ND}}_{t,j+1} > \mathbf{M}^{\text{ND}}_{t,j+3}$.

And for $i = t+1$ and $j = 0$:
\begin{equation}
    \mathbf{M}^{\text{ND}}_{t+1,0} = n\mathbf{M}^{\text{ND}}_{t,1} > \mathbf{M}^{\text{ND}}_{t,1} + (n-1)\mathbf{M}^{\text{ND}}_{t,3} = \mathbf{M}^{\text{ND}}_{t+1,2}.
\end{equation}
Therefore, Within-Layer Decay holds.

For Cross-Layer Growth, assume $\mathbf{M}^{\text{ND}}_{t,j} < \mathbf{M}^{\text{ND}}_{t+2,j}$ for $i=t$ and any $j$. For $i=t+2$:  
\begin{equation}
\mathbf{M}^{\text{ND}}_{t+2,j} = \mathbf{M}^{\text{ND}}_{t+1,j-1} + (n-1)\mathbf{M}^{\text{ND}}_{t+1,j+1} > \mathbf{M}^{\text{ND}}_{t-1,j-1} + (n-1)\mathbf{M}^{\text{ND}}_{t-1,j+1} = \mathbf{M}^{\text{ND}}_{t,j}.
\end{equation}

Therefore, Cross-Layer Growth holds.\end{proof}

\subsection{Effective Attention Allocation of ND}
Let $\alpha \in \mathbb{R}^{1 \times L'}$ denotes the attention scores assigned to the ND token list $\mathbf{T}^{\text{ND}}_{u,1}, \mathbf{T}^{\text{ND}}_{u,2}, \dots, \mathbf{T}^{\text{ND}}_{u,L'}$ (normalized such that $\sum_{i=1}^{L'} \alpha_i = 1$). For a $k$-hop neighbor $v \in \mathcal{N}_u^k$ of $u$, the effective attention allocated to $v$ is:  
\begin{equation}
    \beta_{u,v} = \phi_{L,k} \cdot \alpha_{u,v},
\end{equation}
where $\phi_{L,k} = \sum_{i=k, i \equiv k \, \text{mod} \, 2}^L \mathbf{M}^{\text{ND}}_{i,k}$ is the weight of $k$-hop neighbors, and $\alpha_{u,v}$ is the direct attention score between $u$ and $v$.  
\begin{proof}
The aggregated value vector of ND is:  
\begin{equation}
\begin{aligned}
    & \sum_{i=1}^{L'} \alpha_i \mathbf{V}_i \\
    =& \sum_{i=0}^{L}\sum_{k=0}^{i}\mathbf{M}^{\text{ND}}_{i,k}\sum_{v\in \mathcal{N}_u^k}\alpha_{u,v}V_{v} \\
    =& \sum_{k=0}^{L}(\sum_{i=k}^{L}\mathbf{M}^{\text{ND}}_{i,k})\sum_{v\in \mathcal{N}_u^k}\alpha_{u,v}V_{v} \\
    =& \sum_{k=0}^{L}\sum_{v\in \mathcal{N}_u^k}((\sum_{i=k, i \equiv k \, \text{mod} \, 2}^{L}\mathbf{M}^{\text{ND}}_{i,k})\alpha_{u,v})\mathbf{x}_{v}W_V,
\end{aligned}
\end{equation} 
where $\mathbf{V}_j$ is the value vector of the $j$-th $k$-hop neighbor. By definition, $\phi_{L,k} = \sum_{i=k}^L \mathbf{M}^{\text{ND}}_{i,k}$, so the effective attention $\beta_{u,v} = \phi_{L,k} \cdot \alpha_{u,v}$.  
\end{proof}

\subsection{Hop-Priority Bias in ND}
This allocation exhibits two critical properties:
\begin{enumerate}[leftmargin = 0.5cm]
    \item Near-Hop Dominance: $\phi_{L,k} > \phi_{L,k+2}$ for all $k \leq L$ (closer hops have higher total weights).
    \item Layer Parity Bias:
    \begin{itemize}[leftmargin = 0.5cm]
        \item If $L$ is odd ($L=2k+1$), odd hops ($k=1,3,\dots$) have $\phi_{L,k} > \phi_{L,k-1}$ and $\phi_{L,k} > (n-1)\phi_{L,k+1}$.  
        \item If $L$ is even ($L=2k$), even hops ($k=0,2,\dots$) have $\phi_{L,k} > \phi_{L,k-1}$ and $\phi_{L,k} > (n-1)\phi_{L,k+1}$.  
    \end{itemize}
\end{enumerate}
\begin{proof}
From within-layer decay, $\mathbf{M}^{\text{ND}}_{i,k} > \mathbf{M}^{\text{ND}}_{i,k+2}$ for all $i \geq k$. Therefore:  
\begin{equation}
    \phi_{L,k} = \sum_{i=k, i \equiv k \, \text{mod} \, 2}^L \mathbf{M}^{\text{ND}}_{i,k} > \sum_{i=k, i \equiv k \, \text{mod} \, 2}^L \mathbf{M}^{\text{ND}}_{i,k+2} = \phi_{L,k+2}.
\end{equation}
Near-Hop Dominance holds.

For $L=2k+1$ (odd), consider $\phi_{L,2t+1}$:  
\begin{equation}
\begin{aligned}
    & \phi_{L,2t+1} \\
    =& \sum_{i=2t+1, i \equiv 2t+1 \, \text{mod} \, 2}^{2k+1} \mathbf{M}^{\text{ND}}_{i,2t+1} \\
    =& \sum_{i=2t, i \equiv 2t \, \text{mod} \, 2}^{2k} \mathbf{M}^{\text{ND}}_{i,2t} + (n-1) \sum_{i=2t+2, i \equiv 2t+2 \, \text{mod} \, 2}^{2k} \mathbf{M}^{\text{ND}}_{i,2t+2} \\
    =& \phi_{L,2t} + (n-1)\phi_{L,2t+2}.
\end{aligned}
\end{equation}
Therefore, $\phi_{L,2t+1} > \phi_{L,2t}$ and $\phi_{L,2t+1} > (n-1)\phi_{L,2t+2}$.  

For $L=2k$ (even), consider $\phi_{L,2t+2}$:  
\begin{equation}
\begin{aligned}
& \phi_{L,2t+2} \\
=& \sum_{i=2t+2, i \equiv 2t+2 \, \text{mod} \, 2}^{2k} \mathbf{M}^{\text{ND}}_{i,2t+2} \\
=& \sum_{i=2t+1, i \equiv 2t+1 \, \text{mod} \, 2}^{2k-1} \mathbf{M}^{\text{ND}}_{i,2t+1} + (n-1) \sum_{i=2t+3, i \equiv 2t+3 \, \text{mod} \, 2}^{2k-1} \mathbf{M}^{\text{ND}}_{i,2t+3} \\
=& \phi_{L,2t+1} + (n-1)\phi_{L,2t+3}.
\end{aligned}
\end{equation}
Therefore, $\phi_{L,2t+2} > \phi_{L,2t+1}$ and $\phi_{L,2t+2} > (n-1)\phi_{L,2t+3}$.  Layer Parity Bias holds.
\end{proof}

\subsection{Smoothness Bound of ND}
Let $\hat{\mathbf{A}} \in \mathbb{R}^{1 \times N}$ the attention vector for all nodes derived from ND. The smoothness of node $u$'s representation satisfies:  
\begin{equation}
    \| \mathbf{H}_u^0 - \hat{\mathbf{A}} \mathbf{H}^0 \|_F \leq \sqrt{2}L\frac{1}{1+\frac{1}{\frac{\sum_{i=0}^{L}\phi_{L,i}|\mathcal{N}^{i}_{u}|}{\sum_{i=0}^{L}\phi_{L,i}|\mathcal{N}^{i}_{u}|C_{u}^{i}}-1}\frac{\eta_{u}}{\gamma_{u}}},
\end{equation}
where $\gamma_u = \mathbb{E}_{v \in \mathcal{N}_u^i, \mathbf{y}_u = \mathbf{y}_v} \exp\left(\frac{\mathbf{q}_u \mathbf{k}_v^\top}{\sqrt{h}}\right)$, and $\eta_u = \mathbb{E}_{v \in \mathcal{N}_u^i, \mathbf{y}_u \neq \mathbf{y}_v} \exp\left(\frac{\mathbf{q}_u \mathbf{k}_v^\top}{\sqrt{h}}\right)$.
\begin{proof}
The smoothness metric of ND is:  
\begin{equation}
    \| \mathbf{H}_u^0 - \hat{\mathbf{A}} \mathbf{H}^0 \|_F = \left\| \mathbf{H}_u^0 - \sum_{i=1}^{L'} \alpha_i \mathbf{T}^{\text{ND}}_{u,i} \right\|_F,
\end{equation}
where $\alpha_i$ are the attention scores assigned to the tokens in ND.  

By Theorem 9, each $k$-hop neighbor $v$ of $u$ appears $\phi_{L,k}$ times in the token list, with effective attention $\beta_{k,v} = \phi_{L,k} \cdot \alpha_{u,v}$. Thus, the aggregated token list can be rewritten as:
\begin{equation}
\sum_{i=1}^{L'} \alpha_i \mathbf{T}^{\text{ND}}_{u,i} = \sum_{k=0}^L \sum_{v \in \mathcal{N}_u^k} \beta_{k,v} \mathbf{H}_v^0.
\end{equation}
 
Using the Lipschitz assumption $\| \mathbf{H}_u^0 - \mathbf{H}_v^0 \|_F \leq L \| \mathbf{y}_u - \mathbf{y}_v \|_F$ and the one-hot label property $\| \mathbf{y}_u - \mathbf{y}_v \|_F = \sqrt{2}$ for $\mathbf{y}_u \neq \mathbf{y}_v$, we bound the smoothness metric:  
\begin{equation}
\| \mathbf{H}_u^0 - \hat{\mathbf{A}} \mathbf{H}^0 \|_F \leq \sqrt{2} L \sum_{k=0}^L \sum_{v \in \mathcal{N}_u^k, \mathbf{y}_u \neq \mathbf{y}_v} \beta_{k,v}.
\end{equation}
Let $C_u^k = \frac{|\{ v \in \mathcal{N}_u^k \mid \mathbf{y}_u = \mathbf{y}_v \}|}{|\mathcal{N}_u^k|}$ denote the label consistency of $k$-hop neighbors. The number of $k$-hop neighbors with different labels is $|\mathcal{N}_u^k| (1 - C_u^k)$. Substituting $\beta_{k,v} = \phi_{L,k} \cdot \alpha_{u,v}$, we get:  
\begin{equation}
\sum_{v \in \mathcal{N}_u^k, \mathbf{y}_u \neq \mathbf{y}_v} \beta_{k,v} = \phi_{L,k} \sum_{v \in \mathcal{N}_u^k, \mathbf{y}_u \neq \mathbf{y}_v}. \alpha_{u,v}.
\end{equation}

The attention scores $\alpha_{u,v}$ are softmax-normalized:  
\begin{equation}
\alpha_{u,v} = \frac{\exp\left(\frac{\mathbf{q}_u \mathbf{k}_v^\top}{\sqrt{h}}\right)}{\sum_{v' \in \mathbf{T}} \exp\left(\frac{\mathbf{q}_u \mathbf{k}_{v'}^\top}{\sqrt{h}}\right)}.
\end{equation}

The sum over $\alpha_{u,v}$ for differing labels becomes:  
\begin{equation}
\sum_{v \in \mathcal{N}_u^k, \mathbf{y}_u \neq \mathbf{y}_v} \alpha_{u,v} = \frac{|\mathcal{N}_u^k| (1 - C_u^k) \eta_u}{\sum_{i=0}^{L}\phi_{L,i}(|\mathcal{N}_u^i| C_u^i \gamma_u + |\mathcal{N}_u^i| (1 - C_u^i) \eta_u)}.
\end{equation}

Therefore, we bound the smoothness metric:  
\begin{equation}
\begin{aligned}
    &\| \mathbf{H}_u^0 - \hat{\mathbf{A}} \mathbf{H}^0 \|_F \\
    \leq& \sqrt{2} L \sum_{k=0}^L \sum_{v \in \mathcal{N}_u^k, \mathbf{y}_u \neq \mathbf{y}_v} \beta_{k,v}\\
    =& \sqrt{2} L \sum_{k=0}^L\frac{\phi_{L,k}|\mathcal{N}_u^k| (1 - C_u^k) \eta_u}{\sum_{i=0}^{L}\phi_{L,i}(|\mathcal{N}_u^i| C_u^i \gamma_u + |\mathcal{N}_u^i| (1 - C_u^i) \eta_u)}\\
    =&\sqrt{2}L\frac{1}{1+\frac{\sum_{i=1}^{L}\phi_{L,i}|\mathcal{N}_u^i|C_{u}^{i}}{\sum_{i=1}^{L}\phi_{L,i}|\mathcal{N}_u^i|(1-C_{u}^{i})}\frac{\eta_{u}}{\gamma_{u}}}\\
    =&\sqrt{2}L\frac{1}{1+\frac{1}{\frac{\sum_{i=1}^{L}\phi_{L,i}|\mathcal{N}_u^i|(1-C_{u}^{i})}{\sum_{i=1}^{L}\phi_{L,i}|\mathcal{N}_u^i|C_{u}^{i}}}\frac{\eta_{u}}{\gamma_{u}}}\\
    =&\sqrt{2}L\frac{1}{1+\frac{1}{\frac{\sum_{i=1}^{L}\phi_{L,i}|\mathcal{N}_u^i|}{\sum_{i=1}^{L}\phi_{L,i}|\mathcal{N}_u^i|C_{u}^{i}}-1}\frac{\eta_{u}}{\gamma_{u}}}.
\end{aligned}
\end{equation}
\end{proof}

On homophilic graphs, where $C_u^i$ is uniformly high, the smoothness bound remains tight, enabling strong model performance. However, on heterophilic graphs, the structural bias of ND becomes critical: while $C_u^i$ increases with odd hop, the attention weight $\phi_{L,i}$ decreases with hop distance. This mismatch, where the model amplifies attention to near hops (with low $C_u^i$) and suppresses far hops (with high $C_u^i$), weakens the smoothness bound, limiting the model's ability to learn meaningful representations. These theoretical findings align with the results of preliminary experiments (Table~\ref{empirical_observations}), where ND underperforms on heterophilic datasets due to this rigid hop-priority bias.

\section{Theoretical Analysis of \MethodName}
\label{method_theoretical}

\subsection{Adaptive Attention Aggregation with Gate Module}
\label{adaptive_attention_aggregation}
The attention given to the node \( v \) in the \( i \)-th hop neighbors of node $u$ is:  
\begin{equation}
    \hat{\beta}_{u,i,v} = \hat{\alpha}_{u,i} \cdot \beta_{u,i,v} =\frac{\alpha_{u,i} \cdot \hat{\mathbf{s}}_{u,i}}{\langle \alpha_u \cdot \mathbf{s}_{u}\rangle} \cdot \beta_{u,i,v}.
\end{equation}
\begin{proof}
The aggregated value vector after adjustment is:  
\begin{equation}
\sum_{i=0}^L \hat{\alpha}_{u,i} \mathbf{V}_u^i = \sum_{i=0}^L \hat{\alpha}_{u,i} \mathbf{T}_u^i \mathbf{W}_V.
\end{equation}
Substituting \( \mathbf{T}_u^i = \sum_{v \in \mathcal{G}_u^i} \beta_{u,i,v} \mathbf{H}_v^0 \) and \( \hat{\alpha}_{u,i} \), we get:  
\begin{equation}
\sum_{i=0}^L \frac{\alpha_{u,i} \cdot \hat{\mathbf{s}}_{u,i}}{\langle \alpha_u \cdot \mathbf{s}_{u}\rangle} \sum_{v \in \mathcal{G}_u^i} \beta_{u,i,v} \mathbf{H}_v^0 \mathbf{W}_V = \sum_{i=0}^L \sum_{v \in \mathcal{G}_u^i} \left( \frac{\alpha_{u,i} \cdot \hat{\mathbf{s}}_{u,i}}{\langle \alpha_u \cdot \mathbf{s}_{u}\rangle} \cdot \beta_{u,i,v} \right) \mathbf{H}_v^0 \mathbf{W}_V.
\end{equation}
This holds.
\end{proof}

% \subsection{Relationship with Predefined Token Lists}
% \MethodName~generalizes both predefined token lists (HO and ND) as special cases, highlighting its flexibility.  
% \textit{Proof:}
% \begin{itemize}
%     \item For HO, the attention to \( k \)-th hop nodes is \( \hat{\alpha}_k = \sum_{i=k, \, i \equiv k \, \text{mod} \, 2}^L \alpha_i \mathbf{M}_{i,k}^{\text{HO}} \), where \( \mathbf{M}_{i,k}^{\text{HO}} \) is the hop contribution matrix. For \MethodName, setting $\mathcal{G}_u^i=\mathcal{N}_u^i$, \( \beta_{u,k,v} = \frac{1}{|\mathcal{N}_u^k|} \) ,and $\hat{\alpha}_{u,k} = \frac{\hat{\alpha}_k}{\beta_{u,k,v}}$, i.e., 
% \begin{equation}
% s_{u,k}\propto \frac{\hat{\alpha}_k}{\beta_{u,k,v}\cdot \alpha_{u,i}}
% \end{equation}
% This can recover the attention pattern of the HO token list. Thus, HO is a special case of \MethodName~with uniform within-hop attention and fixed gate weights.
% \item For ND, ND scales the attention of the hops by $\phi_{L,k} = \sum_{i=k, i \equiv k \, \text{mod} \, 2}^L \mathbf{M}^{\text{ND}}_{i,k}$, where \( \mathbf{M}^{\text{ND}}_{i,k} \) is the hop matrix of ND. Therefore, \( \hat{\beta}_{u,k,v}^{\text{ND}} = \phi_{L,k} \cdot \beta_{u, k, v}^{\text{ND}} \). 
% For OUS, setting $\mathcal{G}_u^i=\mathcal{N}_u^i$, $\beta_{u,k,v}=\beta_{u, k, v}^{\text{ND}}$, \( \hat{\alpha}_{u,k} = \phi_{L,k} \) recovers the attention pattern of ND, i.e., 
% \begin{equation}
% s_{u,k}\propto \frac{\phi_{L,k}}{\alpha_{u,i}}
% \end{equation}
% Therefore, ND is also a special case of \MethodName. 
% \end{itemize}

\subsection{Relationship with ND}
\label{relationship_with_ND}
ND scales the attention of the hops by $\phi_{L,k} = \sum_{i=k, i \equiv k \, \text{mod} \, 2}^L \mathbf{M}^{\text{ND}}_{i,k}$, where \( \mathbf{M}^{\text{ND}}_{i,k} \) is the hop matrix of ND. Therefore, \( \hat{\beta}_{u,k,v}^{\text{ND}} = \phi_{L,k} \cdot \beta_{u, k, v}^{\text{ND}} \). 
For \MethodName, setting $\mathcal{G}_u^i=\mathcal{N}_u^i$, $\beta_{u,k,v}=\beta_{u, k, v}^{\text{ND}}$, \( \hat{\alpha}_{u,k} = \phi_{L,k} \) recovers the attention pattern of ND, i.e., 
\begin{equation}
s_{u,k}\propto \frac{\phi_{L,k}}{\alpha_{u,i}}
\end{equation}
Therefore, ND is also a special case of \MethodName.

\subsection{Smooth Bound of \MethodName}
\label{smooth_bound_of_our_method}
For simplicity of our analysis, assume that $\hat{\mathbf{s}}_{u,i}$ is the scores allocated to the $i$-th hop for the central node $u$, and $\sum_{i=0}^{L}\hat{\mathbf{s}}_{u,i}=L+1$, the Frobenius norm \( \| \mathbf{H}_u^0 - \hat{\mathbf{A}} \mathbf{H}^0 \|_F \) is bounded by:  
\begin{equation}
\| \mathbf{H}_u^0 - \hat{\mathbf{A}} \mathbf{H}^0 \|_F \leq \sqrt{2}L\frac{\sum_{i=0}^{L}\hat{\mathbf{s}}_{u,i}|\mathcal{G}_u^i|(1-C_{u}^{i})}{(\sum_{i=0}^{L}|\mathcal{G}_u^i|(1-C_{u}^{i}))+(\sum_{i=0}^{L}|\mathcal{G}_u^i|C_{u}^{i})\frac{\eta_{u}}{\gamma_{u}}}
\end{equation} 
\begin{proof}
\begin{equation}
\begin{aligned}
    &\| \mathbf{H}_u^0 - \hat{\mathbf{A}} \mathbf{H}^0 \|_F\\
    \leq& \sqrt{2}L\sum_{i=0}^{L}\hat{\mathbf{s}}_{u,i} \sum_{v\in \mathcal{G}_u^i,y_v\ne y_u}\beta_{u,i,v}\\
    =&\sqrt{2}L\frac{(\sum_{i=0}^{L}\hat{\mathbf{s}}_{u,i}(\sum_{v\in \mathcal{G}_u^i,y_{v}\ne y_{u}}1))\gamma_{u}}{(\sum_{i=0}^{L}(\sum_{v\in \mathcal{G}_u^i,y_{v}\ne y_{u}}1))\gamma_{u}+(\sum_{i=0}^{L}(\sum_{v\in \mathcal{G}_u^i,y_{v}=y_{u}}1))\eta_{u}}\\
    =&\sqrt{2}L\frac{(\sum_{i=0}^{L}\hat{\mathbf{s}}_{u,i}|\mathcal{G}_u^i|(1-C_{u}^{i}))\gamma_{u}}{(\sum_{i=0}^{L}|\mathcal{G}_u^i|(1-C_{u}^{i}))\gamma_{u}+(\sum_{i=0}^{L}|\mathcal{G}_u^i|C_{u}^{i})\eta_{u}}\\
    =&\sqrt{2}L\frac{\sum_{i=0}^{L}\hat{\mathbf{s}}_{u,i}|\mathcal{G}_u^i|(1-C_{u}^{i})}{(\sum_{i=0}^{L}|\mathcal{G}_u^i|(1-C_{u}^{i}))+(\sum_{i=0}^{L}|\mathcal{G}_u^i|C_{u}^{i})\frac{\eta_{u}}{\gamma_{u}}}.
\end{aligned}
\end{equation}
\end{proof}
\textit{Analysis:}
If we directly aggregate all of the nodes from the $i$-th hop and allocate the same score for all hops, the bound is the same as graph transformers~\citep{xing2024less}. Now we fix $|\mathcal{G}_u^i|$, we should assign high gate scores $\hat{\mathbf{s}}_{u}$ to hops with less label-inconsistent nodes, i.e., $|\mathcal{G}_u^i|(1-C_{u}^{i})$, to reduce the value of $\| \mathbf{H}_u^0 - \hat{\mathbf{A}} \mathbf{H}^0 \|_F$. However, we can not know which hop has a higher value of $C_u^i$. So, in general, $|\mathcal{G}_u^i|$ is the same for each hop. Let $|\mathcal{G}_u^i|=|G|$, and the smooth bouid is
\begin{equation}
    \sqrt{2}L\frac{\sum_{i=0}^{L}\hat{\mathbf{s}}_{u,i}(1-C_{u}^{i})}{\sum_{i=0}^{L}(1-C_{u}^{i})+\frac{\eta_{u}}{\gamma_{u}}\sum_{i=0}^{L}C_{u}^{i}}.
\end{equation}
So in order to reduce the value of $\| \mathbf{H}_u^0 - \hat{\mathbf{A}} \mathbf{H}^0 \|_F$, the higher $\hat{\mathbf{s}}_{u,i}$ will be assigned to the hop with the higher value of $C_{u}^{i}$, theoretically explaining the effectiveness of the gate module. Furthermore, if we allocate all scores to the hop with the highest $C_{u}^{i}$ (define as $C_{u}^{I}$), then we have:
\begin{equation}
    \| \mathbf{H}_u^0 - \hat{\mathbf{A}} \mathbf{H}^0 \|_F \leq \mathcal{J}(1-C_{u}^{I}),
\end{equation}
where $\mathcal{J}$ is a constant value, while the smooth bound of the predefined token lists and graph transformers is related to other task-irrelevant hops.

\subsection{Special Cases: Frozen LLM and Hybrid Token Lists}
\subsubsection{Frozen LLM Adaptation}
When using a frozen LLM, the attention mechanism cannot be fine-tuned. \MethodName~approximates the adjustment by scaling tokens directly:  
\begin{equation}
    {\mathbf{T}}_u^0 = \mathbf{s}_{u,0} \mathbf{H}_u^0, \quad {\mathbf{T}}_u^i = \mathbf{s}_{u,i}\sum_{v \in \mathcal{G}_u^i} \beta_{u,i,v} \mathbf{H}_v^0.
\end{equation}
The aggregated value vector becomes:  
\begin{equation}
\begin{aligned}
    & \sum_{i=0}^L {\alpha}_{u,i} V_u^i \\
    =& \sum_{i=0}^L {\alpha}_{u,i} {\mathbf{T}}_u^i \mathbf{W}_V \\
    =& \sum_{i=0}^L \sum_{v \in \mathcal{G}_u^i} ({\alpha}_{u,i} \mathbf{s}_{u,i}\beta_{u,i,v}) \mathbf{H}_v^0 \mathbf{W}_V\\
    =& \sum_{i=0}^L ({\alpha}_{u,i} \mathbf{s}_{u,i}) \sum_{v \in \mathcal{G}_u^i} \beta_{u,i,v} \mathbf{H}_v^0 \mathbf{W}_V,
\end{aligned}
\end{equation}
equivalent to applying \( s_{u,i} \) as a multiplicative weight to each hop token.

\subsubsection{Compatibility with Extended Token Lists (e.g., VCR-Graphormer)}
A critical strength of \MethodName~is its plug-and-play compatibility with models that include additional tokens (e.g., cluster-based tokens in VCR-Graphormer). We formalize this compatibility below. Consider a model like VCR-Graphormer, where the token list includes \( C \) cluster-based tokens. Let \( \mathbf{C}^j \) denote the \( j \)-th cluster, and \( p_{u,v} \) the Personalized PageRank score of node \( v \) relative to \( u \). The extended token list \( \mathbf{T}_u' \) is:  
\begin{equation}
\begin{split}
&\ \ \ \ \mathbf{T}_u^0 = \mathbf{s}_{u,0} \cdot \mathbf{H}_u^0\\
&\quad \mathbf{T}_u^i = \mathbf{s}_{u,i} \cdot \sum_{v \in \mathcal{G}_u^i} \beta_{u,i,v} \mathbf{H}_v^0 \ (i=1,\dots,L)\\
&\quad \mathbf{T}_u^{L+j} = \sum_{v \in \mathbf{C}^j} p_{u,v} \mathbf{H}_v^0 \ (j=1,\dots,C).
\end{split}
\end{equation}

The model's aggregated value vector using \( \mathbf{T}_u \) is:  
\begin{equation}
\sum_{i=0}^{L+C} \dot{\alpha}_i \cdot \mathbf{V}_u^i = \sum_{i=0}^{L+C} \dot{\alpha}_i \cdot \mathbf{T}_u^i \mathbf{W}_V,
\end{equation}
where \( \mathbf{W}_V \) is the value projection matrix. Substituting \( \mathbf{T}_u^i \):  
\begin{equation}
= \sum_{i=0}^L \dot{\alpha}_i \cdot \mathbf{s}_{u,i} \cdot \left( \sum_{v \in \mathcal{G}_u^i} \beta_{u,i,v} \mathbf{H}_v^0 \right) \mathbf{W}_V + \sum_{j=1}^C \dot{\alpha}_{L+j} \cdot \left( \sum_{v \in \mathbf{C}^j} p_{u,v} \mathbf{H}_v^0 \right) \mathbf{W}_V.
\end{equation}
The effective attention score for any node \( v \) is:  
\begin{equation}
\hat{\alpha}_v = \sum_{i=0}^L \left[ v \in \mathcal{G}_u^i \right] \dot{\alpha}_i \cdot \mathbf{s}_{u,i} \cdot \beta_{u,i,v} + \sum_{j=1}^C \left[ v \in \mathbf{C}^j \right] \dot{\alpha}_{L+j} \cdot p_{u,v},
\end{equation} 
where \( [\cdot] \) is an indicator function. When cluster-based attention scores are zero (\( \dot{\alpha}_{L+j} = 0 \)), the model reduces to the base \MethodName, confirming its compatibility as a drop-in plugin.

\end{document}